\def\isarxiv{1} 
\theoremstyle{plain}
\newtheorem{theorem}{Theorem}[section]
\newtheorem{lemma}[theorem]{Lemma}
\newtheorem{definition}[theorem]{Definition}
\newtheorem{condition}[theorem]{Condition}
\newcommand{\wh}{\widehat}
\newcommand{\wt}{\widetilde}
\newcommand{\ov}{\overline}
\newcommand{\R}{\mathbb{R}}
\renewcommand{\i}{\mathbf{i}}
\renewcommand{\tilde}{\wt}
\renewcommand{\hat}{\wh}
\DeclareMathOperator*{\E}{{\mathbb{E}}}
\DeclareMathOperator{\softmax}{softmax}
\newcommand*{\RN}[1]{\expandafter\@slowromancap\romannumeral #1@}
\begin{document}

\ifdefined\isarxiv

\date{}

\title{Fourier Circuits in Neural Networks and Transformers: A Case Study of Modular Arithmetic with Multiple Inputs
}
\author{ 
Chenyang Li\thanks{\texttt{
lchenyang550@gmail.com}. Fuzhou University.}
\and
Yingyu Liang\thanks{\texttt{
yingyul@hku.hk}. The University of Hong Kong. \texttt{
yliang@cs.wisc.edu}. University of Wisconsin-Madison.} 
\and
Zhenmei Shi\thanks{\texttt{
zhmeishi@cs.wisc.edu}. University of Wisconsin-Madison.}
\and 
Zhao Song\thanks{\texttt{ magic.linuxkde@gmail.com}. The Simons Institute for the Theory of Computing at the University of California, Berkeley.}
\and 
Tianyi Zhou\thanks{\texttt{
tzhou029@usc.edu}. University of Southern California.}
}

\else
%
\runningtitle{Fourier Circuits in Neural Networks and Transformers}

%

\twocolumn[

\aistatstitle{Fourier Circuits in Neural Networks and Transformers: A Case Study of Modular Arithmetic with Multiple Inputs}

\aistatsauthor{ 
Chenyang Li$^{1}$
\And 
Yingyu Liang$^{2,3}$
\And  
Zhenmei Shi$^{3}$
\And 
Zhao Song$^{4}$
\And 
Tianyi Zhou$^{5}$
} 

\aistatsaddress{ 
$^1$Fuzhou University. \qquad
$^2$The University of Hong Kong. \qquad
$^3$University of Wisconsin-Madison. 
\qquad
\\
$^4$The Simons Institute for the Theory of Computing at the UC, Berkeley. \qquad
$^5$University of Southern California.
} 
]

\fi

\ifdefined\isarxiv
\begin{titlepage}
  \maketitle
  \begin{abstract}
In the evolving landscape of machine learning, a pivotal challenge lies in deciphering the internal representations harnessed by neural networks and Transformers. Building on recent progress toward comprehending how networks execute distinct target functions, our study embarks on an exploration of the underlying reasons behind networks adopting specific computational strategies. We direct our focus to the complex algebraic learning task of modular addition involving $k$ inputs. Our research presents a thorough analytical characterization of the features learned by stylized one-hidden layer neural networks and one-layer Transformers in addressing this task. A cornerstone of our theoretical framework is the elucidation of how the principle of margin maximization shapes the features adopted by one-hidden layer neural networks. Let $p$ denote the modulus, $D_p$ denote the dataset of modular arithmetic with $k$ inputs and $m$ denote the network width. We demonstrate that a neuron count of $ m \geq 2^{2k-2} \cdot (p-1) $, these networks attain a maximum $ L_{2,k+1} $-margin on the dataset $ D_p $. Furthermore, we establish that each hidden-layer neuron aligns with a specific Fourier spectrum, integral to solving modular addition problems. By correlating our findings with the empirical observations of similar studies, we contribute to a deeper comprehension of the intrinsic computational mechanisms of neural networks. Furthermore, we observe similar computational mechanisms in attention matrices of one-layer Transformers. Our work stands as a significant stride in unraveling their operation complexities, particularly in the realm of complex algebraic tasks.

  \end{abstract}
  \thispagestyle{empty}
\end{titlepage}

{\hypersetup{linkcolor=black}
\tableofcontents
}
\newpage

\else

\begin{abstract}

\end{abstract}

\fi

\section{INTRODUCTION}\label{sec:intro}
The field of artificial intelligence has experienced a significant transformation with the development of large language models (LLMs), particularly through the introduction of the Transformer architecture~\citep{vsp+17}. This advancement has revolutionized approaches to challenging tasks in natural language processing, notably in machine translation \citep{pcr19, ghg+20} and text generation \citep{lsx+22}. Consequently, models 
e.g.,  Mistral~\citep{mistral}, Llama~\citep{llama3}, Gemini~\citep{gemini}, Gemma~\citep{gemma},  Claude3~\citep{claude}, GPT4~\citep{openai23} and so on, have become predominant in NLP.

Central to this study is the question of how these advanced models transcend mere pattern recognition to engage in what appears to be logical reasoning and problem-solving. This inquiry is not purely academic; it probes the core of ``understanding'' in artificial intelligence. While LLMs, such as Claude3 and GPT4, demonstrate remarkable proficiency in human-like text generation, their capability to comprehend and process mathematical logic is a topic of considerable debate.
This line of investigation is crucial, given AI's potential to extend beyond text generation into deeper comprehension of complex subjects. Mathematics, often seen as the universal language, presents a uniquely challenging domain for these models \citep{yc23}. 
Our research aims to determine whether Transformers with attention, noted for their NLP efficiency, can also demonstrate an intrinsic understanding of mathematical operations and reasoning.

In a recent surprising study of mathematical operations learning, {\cite{pbe+22}} train Transformers on small algorithmic datasets, e.g., $a_1+a_2 \mod p$ and we let $p$ be a prime number,
and show the ``grokking'' phenomenon, where models abruptly transition from bad generalization to perfect generalization after a large number of training steps.  
Nascent studies, such as those by {\cite{ncl+23}}, empirically reveal that Transformers can solve modular addition using Fourier-based circuits. 
They found that the Transformers trained by Stochastic Gradient Descent (SGD) not only reliably compute $a_1+a_2 \mod p$, but also that the networks consistently employ a specific geometric algorithm. This algorithm, which involves composing integer rotations around a circle, indicates an inherent comprehension of modular arithmetic within the network's architecture. The algorithm relies on this identity: for any $a_1, a_2$ and $\zeta \in \mathbb{Z}_p \backslash \{ 0\}$, the following two quantities are equivalent
\begin{align*}
    (a_1 + a_2) \bmod p = \underset{c \in \mathbb{Z}_p}{\arg \max } \{\cos ({2 \pi \zeta( a_1 + a_2 -c)}/{p} ) \}.
\end{align*}
{\cite{ncl+23}} further 
show that the attention and MLP module in the Transformer imbues the neurons with Fourier circuit-like properties. 
To study why networks arrive at Fourier-based circuits computational strategies, {\cite{meo+23}} theoretically study one-hidden layer neural network learning on two inputs modular addition task and certify that the trained networks will execute modular addition by employing Fourier features aligning closely with the previous empirical observations.
However, the question remains whether neural networks can solve more complicated mathematical problems.

Inspired by recent developments in mechanistic interpretability \citep{ocs+20, eno+21, eho22} and the study of inductive biases \citep{shn18, v23} in neural networks, we extend our research to modular addition with more ($k$) inputs.
\begin{align}\label{eq:data}
    (a_1+a_2+\dots+a_k) \bmod p.
\end{align}
This approach offers insights into why certain representations and solutions emerge from neural network training. By integrating these insights with our empirical findings, we aim to provide a comprehensive understanding of neural networks' learning mechanisms, especially in solving the modular addition problem. We also determine the necessary number of neurons for the network to learn this Fourier method for modular addition. 
Our paper's contributions are summarized as follows:
\begin{itemize}
\item \textbf{Expansion of Input for Modular Addition Problem:} We extend the input parameter range for the modular addition problem from a binary set to $k$-element sets.
\item \textbf{Network's Maximum Margin:} 
For $p$-modular addition of $k$ inputs, we give the closed form of the maximum margin of a network (Lemma~\ref{lem:margin_soln-k:informal}):
    \begin{align*}
        \gamma^*=\frac{2(k!)}{(2k+2)^{(k+1)/2}(p-1) p^{(k-1)/2}}.
    \end{align*} 
\item \textbf{Neuron Count in One-Hidden-Layer Networks:} We propose that in a general case, a one-hidden-layer network having $m \geq 2^{2k-2} \cdot (p-1)$ neurons can achieve the maximum $L_{2,k+1}$-margin solution, each hidden neuron aligning with a specific Fourier spectrum.
This ensures the network's capability to effectively solve the modular addition in a Fourier-based method (Theorem~\ref{thm:main_k:informal}). 
\item \textbf{Empirical Validation of Theoretical Findings:} We validate our theoretical finding that: when $m \geq 2^{2k-2} \cdot (p-1)$, for each spectrum $\zeta \in\{1, \ldots, \frac{p-1}{2}\}$, there exists a hidden-neuron utilizes this spectrum. It strongly supports our analysis. (Figure~\ref{fig:nn_w_k4} and Figure~\ref{fig:nn_freq_k4}).
\item \textbf{Similar Findings in Transformer:} We have a similar observation in one-layer Transformer learning modular addition involving $k$ inputs. For the $2$-dimensional matrix $W_K W_Q$, where $W_K,W_Q$ denotes the key and query matrix, it shows the superposition of two cosine waveforms in each dimension, each characterized by distinct frequencies (Figure~\ref{fig:s_k4}).
\item \textbf{Grokking under Different $k$}: We observe that as $k$ increases, the grokking phenomenon becomes weaker, as predicted by our analysis (Figure~\ref{fig:grok}).
\end{itemize}

{\bf Detailed comparison with \cite{meo+23}.} 
Theoretically, we generalize beyond the results of \cite{meo+23} to $k$ inputs. There are unique technique challenges for our setting and not presented in previous settings: (1) While constructing a general $k$ version of the max-margin solution, we need our unique sum-to-product Identities for $k$ inputs, which was proved by our Lemma~\ref{lem:sum2product}; (2) To calculate the number of neurons, we need our unique Lemma \ref{lem:construct-k:informal} and Eq~\eqref{eq:k_cos_sum} to handle cosine operation on $k$ inputs; (3) We need our Lemma~\ref{lemma:fourier_space-k} and Lemma~\ref{lem:margin_soln-k} to handle multiple variable Fourier transform, where we also introduce multiple inequalities for $k$ inputs version. Empirically, our experiments verified that the theoretical insight obtained can be carried over to practical transformers. Furthermore, the study of grokking is beyond \cite{meo+23}, as the study of grokking is only possible when there are $k \ge 2$. Our general $k$ version is necessary for studying some key properties of network learning, like grokking over tasks of increasing complexity or generalization ability over tasks of increasing complexity. 
\section{RELATED WORK}

{\bf Max Margin Solutions in Neural Networks.} {\cite{bbg22}} demonstrated that neurons in a one-hidden-layer ReLU network align with clauses in max margin solutions for read-once DNFs, employing a unique proof technique involving the construction of perturbed networks. {\cite{meo+23}} utilize max-min duality to certify maximum-margin solutions. Further, extensive research in the domain of margin maximization in neural networks, including works by {\cite{glss18,shn18,gunasekar2018characterizing,wllm19,ll19,ji2019implicit,moroshko2020implicit,chizat2020implicit,jt20,lyu2021gradient,frei2022implicit,frei2023benign,smf+24,llss24b}} and more, has highlighted the implicit bias towards margin maximization inherent in neural network optimization. They provide a foundational understanding of the dynamics of neural networks and their inclination towards maximizing margins under various conditions and architectures.

{\bf Algebraic Tasks Learning Mechanism Interpretability.}
 The study of neural networks trained on algebraic tasks has been pivotal in shedding light on their training dynamics and inductive biases. Notable contributions include the work of {\cite{pbe+22,g23,qb23}} on modular addition and subsequent follow-up studies, investigations into learning parities \citep{dm20,beg+22,swl22,swxl24,swl23,scl+23,ztb+23,xsl24}, and research into algorithmic reasoning capabilities \citep{sgh+19,hbk+21,lad+22,mbab22,dls22,ccn23,syfb23,nlw23,zlta23,tho+23,hlv23}. The field of mechanistic interpretability, focusing on the analysis of internal representations in neural networks, has also seen significant advancements through the works of {\cite{cgc+20,oen+22,mts23,rsr23,vsk+23,dhdg24,smn+24,kll+24,cll+24,lss+24_relu,sawl24,kls+25,cll+25_icl,lss+25_relu,lll+25_loop}} and others. 

{\bf Grokking and Emergent Ability.}
The phenomenon known as ``grokking'' was initially identified by {\cite{pbe+22}} and is believed to be a way of studying the emerging abilities of LLM~\citep{wei2022emergent}. This research observed a unique trend in two-layer transformer models engaged in algorithmic tasks, where there was a significant increase in test accuracy, surprisingly occurring well after these models had reached perfect accuracy in their training phase. In {\cite{m22}}, it was hypothesized that this might be the result of the SGD process that resembles a random path along what is termed the optimal manifold. Adding to this, {\cite{ncl+23}} aligns with the findings of {\cite{b22}}, indicating a steady advancement of networks towards algorithms that are better at generalization. {\cite{lkn+22,xu2023benign,lyu2023dichotomy}} developed smaller-scale examples of grokking and utilized these to map out phase diagrams, delineating multiple distinct learning stages. Furthermore, {\cite{tlz+22,murty2023grokking}} suggested the possibility of grokking occurring naturally, even in the absence of explicit regularization. They attributed this to an optimization quirk they termed the slingshot mechanism, which might inadvertently act as a regularizing factor.

{\bf Theoretical Work About Fourier Transform.}
To calculate Fourier transform there are two main methodologies: one uses carefully chosen samples through hashing functions (referenced in works like \cite{ikp14,ik14,k16,k17}) to achieve sublinear sample complexity and running time, while the other uses random samples (as discussed in \cite{bou14,hr16,nsw19}) with sublinear sample complexity but nearly linear running time. There are many other works studying Fourier transform \citep{s19,jls23,gss22,lsz19,cls20,sswz22,ckps16,sswz23,css+23,syyz23,lls+24}.
\section{PROBLEM SETUP}\label{sec:problem}

\subsection{Data and Network Setup}\label{sec:problem:network_setup}

{\bf Data.} Following {\cite{meo+23}}, let $\mathbb{Z}_p = [p]$ denote the modular group on $p$ integers, where $p > 2$ is a given prime number.  
The input space is $\mathcal{X} := \mathbb{Z}_p^k$ for some integer $k$, and the output space is $\mathcal{Y} := \mathbb{Z}_p$. Then an input data point is $a = (a_1, \dots, a_k )$ with $a_i\in \mathbb{Z}_p$. When clear from context, we also let $x_i \in \{0,1\}^p$ be the one-hot encoding of $a_i$, and let $x = (x_1, \dots, x_k)$ denote the input point.

{\bf Network.} We consider single-hidden layer neural networks with polynomial activation functions: 
\begin{align}  \label{eq:nn}
f(\theta, x) :=& ~ \sum_{i=1}^m \phi(\theta_i, x), \\
 \phi(\theta_i, x) := & ~ (u_{i,1}^{\top} x_1 + \dots + u_{i,k}^{\top} x_k )^k w_i, \notag
\end{align}
where $\theta := \{\theta_1, \ldots, \theta_m \} \in \R^{(k+1) \times p}$, ~$\phi(\theta_i, x)$ is one neuron, and $\theta_i := \{u_{i,1}, \dots, u_{i,k}, w_i\}$ are the parameters of the neuron with $u_{i,1}, \dots, u_{i,k}, w_i \in \mathbb{R}^p$. We use polynomial activation functions due to the homogeneous requirement in Lemma~\ref{lem:homo} and easy sum-to-product identities calculation in Fourier analysis.
Using the notation $a$ instead of the one-hot encodings $x$, we can also write:
\begin{align*}
f(\theta, a)  :=& ~ \sum_{i=1}^m \phi(\theta_i, a),\\ \phi(\theta_i, a) := & ~ (u_{i,1}(a_1) + \dots + u_{i,k}(a_k) )^k w_i,
\end{align*}
where with $u_{i,j}(a_j)$ being the $a_j$-th component of $u_{i,j}$. We consider the parameter set:  
\begin{align*}
 \Theta:= &  \{\|\theta\|_{2,k+1}  \leq 1\}, \\
\text{~where~}  \|\theta\|_{2,k+1} := & (\sum_{i=1}^m \|\theta_i\|_2^{k+1})^\frac{1}{k+1}, \\ \|\theta_i\|_2 := & (\sum_{j=1}^k \|u_{i,j}\|^2_2 + \|w_i\|_2^2)^\frac{1}{2}.
\end{align*}

Here $\|\theta\|_{2,k+1} $ is the $L_{2,k+1}$ matrix norm of $\theta$ (Definition~\ref{def:norm}), and $\|\theta_i\|_2$ is the $L_2$ vector norm of the concatenated vector of the parameters in $\theta_i$. The training objective over $\Theta$ is then as follows.
\begin{definition}\label{def:reg_obj}
Given a dataset $D_p$ and the cross-entropy loss $l$, the regularized training objective is: 
\begin{align*}
\mathcal{L}_\lambda(\theta):=\frac{1}{|D_p|} \sum_{(x, y) \in D_p} l(f(\theta, x), y)+\lambda\|\theta\|_{2,k+1}.
\end{align*}
\end{definition}

\subsection{Margins of the Neural Networks}\label{sec:problem:definition}
Now, we define the margin for a data point and the margin for a whole dataset. 

\begin{definition}\label{def:g}
We denote  $g: \mathbb{R}^U \times \mathcal{X} \times \mathcal{Y} \rightarrow \mathbb{R}$ as the margin function, where for given  $(x, y) \in D_p$,
\begin{align*}
g(\theta, x, y):=f(\theta, x)[y]-\max _{y^{\prime} \in \mathcal{Y} \backslash\{y\}} f(\theta, x)[y^{\prime}] .
\end{align*}
\end{definition}

\begin{definition}\label{def:h}
The margin for a given dataset $D_p$ is denoted as $h: \mathbb{R}^U \rightarrow \mathbb{R}$ where
\begin{align*}
h(\theta):=\min _{(x, y) \in D_p} g(\theta, x, y).
\end{align*}
\end{definition}

For parameter $\theta$, its normalized margin  is denoted as $h(\theta /\|\theta\|_{2,k+1})$.
For simplicity, we define $\gamma^*$ to be the maximum normalized margin as the following:
\begin{definition}\label{def:gamma}
The minimum of the regularized objective is denoted as $\theta_\lambda \in \arg \min _{\theta \in \mathbb{R}^U} \mathcal{L}_\lambda(\theta)$. We define the normalized margin of $\theta_\lambda$ as $\gamma_\lambda:=h (\theta_\lambda /\|\theta_\lambda\|_{2,k+1} )$ and the maximum normalized margin as $\gamma^*:=\max _{\theta \in \Theta} h(\theta)$, where $\Theta =\{\|\theta\|_{2,k+1} \leq 1\}$. 
\end{definition}
Let $\mathcal{P}(D_p)$ denote a set containing all distributions over $D_p$. 
Then $\gamma^*$ can be rewritten as
\begin{align}
    \gamma^* 
    = & ~ \max_{\theta \in \Theta} h( \theta)  = ~ \max _{\theta \in \Theta} \min _{(x, y) \in D_p} g(\theta, x, y) \notag \\
    = & ~  \max _{\theta \in \Theta} \min _{q \in \mathcal{P}(D_p)} \underset{(x, y) \sim q}{\mathbb{E}}[g(\theta, x, y)], 
\end{align}
where the first step is from Definition~\ref{def:gamma}, the second step is from Definition~\ref{def:h}, and the last step is from the linearity of the expectation.
Now, we introduce an important concept of a duality stationary pair $(\theta^*, q^* )$. 

\begin{definition}\label{def:q_star_theta_star}
We define a stationary pair $(\theta^*, q^* )$ when satisfying
 \begin{align}
        q^* & \in \underset{q \in \mathcal{P}(D_p)}{\arg \min } \underset{(x, y) \sim q}{\mathbb{E}}[g(\theta^*, x, y)] \label{eq:definition_q_star}, \\
        \theta^* & \in \underset{\theta \in \Theta}{\arg \min } \underset{(x, y) \sim q^*}{\mathbb{E}}[g(\theta, x, y)]. \notag
    \end{align} 
\end{definition}

This means that $q^*$ is a distribution that minimizes the expected margin based on $\theta^*$, and simultaneously, $\theta^*$ is a solution that maximizes the expected margin relative to $q^*$. 
The max-min inequality~\citep{boyd2004convex} indicates that presenting such a duality adequately proves $\theta^*$ to be a maximum margin solution.
Recall that there is a ``max'' operation in Definition~\ref{def:g}, which makes the swapping of expectation and summation infeasible, meaning that the expected network margin cannot be broken down into the expected margins of individual neurons.
To tackle this problem, the class-weighted margin is proposed, whose intuition is similar to label smoothing. 
Let $\tau: D_p \rightarrow \Delta(\mathcal{Y})$ allocate weights to incorrect labels for every data point. Given $(x, y)$ in $D_p$ and for any $y' \in \mathcal{Y}$, we have $\tau(x, y)[ y' ] \geq 0$ and $\sum_{y' \in \mathcal{Y} \backslash\{y\}} \tau(x, y)[y' ]=1$. We denote a proxy $g'$ as the following to solve the issue.

\begin{definition}\label{def:g_prime}
     Draw $(x, y) \in D_p$. The class-weighted margin  $g^{\prime}$ is defined as
\begin{align*}
   & ~ g^{\prime}(\theta, x, y) 
   := ~ f(\theta, x)[y] - \sum_{y^{\prime} \in \mathcal{Y} \backslash\{y\}} \tau(x, y)[y^{\prime}] f(\theta, x)[y^{\prime}]. 
\end{align*}
\end{definition}
We have $g^{\prime}$ uses a weighted sum rather than max, so $ g(\theta, x, y) \le g^{\prime}(\theta, x, y)$. Following the linearity of expectation, we get the expected class-weighted margin as
\begin{align*}
    \underset{(x, y) }{\E}[g^{\prime}(\theta, x, y)]  = & ~ \sum_{i=1}^m \underset{(x, y) }{\E}\Big [\phi(\theta_i, x)[y] \\
    & ~ -\sum_{y^{\prime} \in \mathcal{Y} \backslash\{y\}} \tau(x, y)[y^{\prime}] \phi(\theta_i, x)[y^{\prime}]\Big],
\end{align*}
where we can move the summation $\sum_{i=1}^m$ out of the expectation $\E []$. 

\subsection{Connection between Training and the Maximum Margin Solutions}\label{sec:problem:preliminary}

\begin{figure}[!ht]
    \centering
\includegraphics[width=1\linewidth]{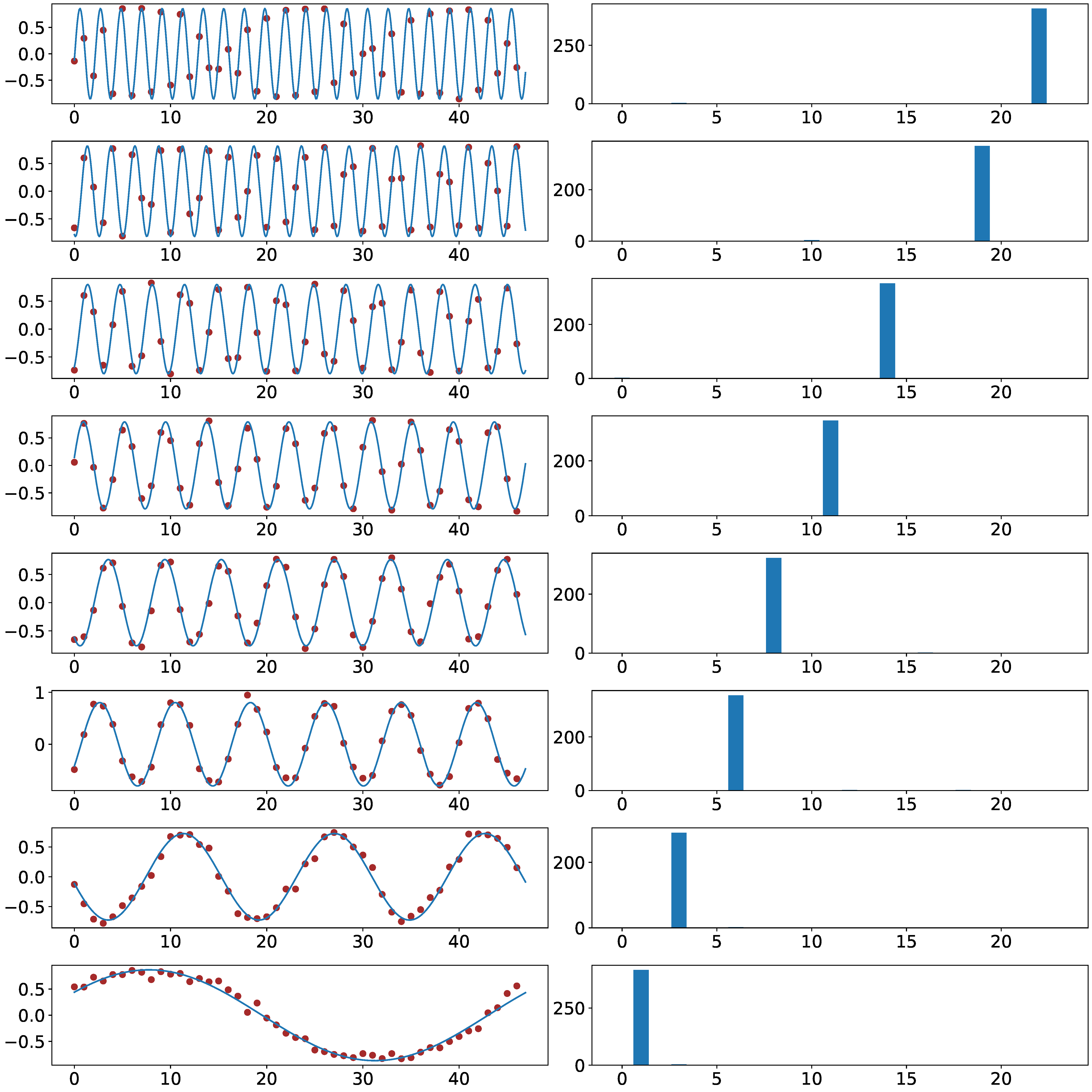}
    \caption{Cosine shape of the trained embeddings (hidden layer weights) and corresponding power of  Fourier spectrum. The two-layer network with $m=2944$ neurons is trained on $k=4$-sum mod-$p=47$ addition dataset. We even split the whole datasets ($p^k = 47^4$ data points) into the training and test datasets. Every row represents a random neuron from the network. The left figure shows the final trained embeddings, with red dots indicating the true weight values, and the pale blue interpolation is achieved by identifying the function that shares the same Fourier spectrum. The right figure shows their Fourier power spectrum. The results in these figures are consistent with our analysis statements in Lemma~\ref{lem:margin_soln-k:informal}. See Figure~\ref{fig:nn_w_k3},~\ref{fig:nn_w_k5} in Appendix~\ref{app:sec:exp_nn} for similar results when $k$ is 3 or 5.}
    \label{fig:nn_w_k4}
\end{figure}

We denote $\nu$ as the network's homogeneity constant, where the equation $f(\alpha \theta, x) = \alpha^\nu f(\theta, x)$ holds for any $x$ and any scalar $\alpha > 0$. Specifically, we focus on networks with homogeneous neurons that satisfy $\phi(\alpha \theta_i, x) = \alpha^\nu \phi(\theta_i, x)$ for any $\alpha > 0$. Note that our one-hidden layer networks (Eq.~\eqref{eq:nn}) are $k+1$ homogeneous.  As the following Lemma states, when $\lambda$ is small enough during training homogeneous functions, we have the $\mathcal{L}_\lambda$ global optimizers' normalized margin converges to $\gamma^*$. 

\begin{lemma}[\cite{wei2019regularization}, Theorem 4.1]\label{lem:homo} 
Let $f$ be a homogeneous function. For any norm $\| \cdot \|$, if $\gamma^* > 0$, we have $\lim_{\lambda\rightarrow 0} \gamma_\lambda = \gamma^* $.
\end{lemma}

Therefore, to comprehend the global minimize, we can explore the maximum-margin solution as a surrogate, enabling us to bypass complex analyses in non-convex optimization.
Furthermore, {\cite{meo+23}} states that under the following condition, the maximum-margin solutions and class-weighted maximum-margin ($g'$)  solutions are equivalent to each other. 
\begin{condition}[Condition C.1 in page 8 in \cite{meo+23}]\label{lem:translation_property}

    We have $g^{\prime} (\theta^*, x, y )=g (\theta^*, x, y )$ for all $(x, y) \in \operatorname{spt} (q^* )$, where $\operatorname{spt}$ is the support. It means: 
    \begin{align*}
        \{y' \in \mathcal{Y} \backslash\{y\}: \tau(x, y)[y']>0\} \subseteq \underset{y' \in \mathcal{Y} \backslash\{y\}}{\arg \max } f (\theta^*, x )[y'].
    \end{align*}
\end{condition}
Thus, under these conditions, we only need to focus on the class-weighted maximum-margin solutions in our following analysis.

\section{MAIN RESULT}

We characterize the Fourier features to perform modular addition with $k$ input in the one-hidden-layer neuron network.   
We show that every neuron only focuses on a distinct Fourier frequency.
Additionally, within the network, there is at least one neuron for each frequency. When we consider the uniform class weighting, where $\mathcal{L}_\lambda(\theta)$ is based on
\begin{align}
    \tau (a_1, \dots, a_k)[c'] := 1/(p - 1) ~~ \forall c' \neq a_1 + \dots + a_k,
\end{align}
we have the following main result:
\begin{theorem}[Main result, informal version of Theorem \ref{thm:main_k:formal}]\label{thm:main_k:informal} 
Let $f(\theta, x)$ be the one-hidden layer networks defined in Eq~\eqref{eq:nn}. 
If $m \geq 2^{2k-1} \cdot \frac{ p-1 }{ 2 }$, then the max $L_{2,k+1}$-margin network satisfies:
\begin{itemize}
    \item  The maximum $L_{2,k+1}$-margin for a dataset $D_p$ is:
    \begin{align*}
        \gamma^*=\frac{2(k!)}{(2k+2)^{(k+1)/2}(p-1) p^{(k-1)/2}}.
    \end{align*}
    \item For each neuron $\phi(\{u_1,\dots, u_k, w\} ; a_1, \dots, a_k)$, there is a constant scalar $\beta \in \R$ and a frequency $\zeta \in\{1, \ldots, \frac{p-1}{2}\}$ satisfying
    \begin{align*}
    u_i(a_i) = & ~ \beta \cdot \cos ( \theta_{u_i}^* + 2 \pi \zeta a_i /p ), ~~\forall i \in [k] \\
    w(c) = & ~ \beta \cdot \cos (\theta_w^*+2 \pi \zeta c / p),
    \end{align*}
    where $\theta_{u_1}^*, \dots, \theta_{u_k}^*, \theta_w^* \in \R$ are some phase offsets satisfying $\theta_{u_1}^*+\dots+\theta_{u_k}^*=\theta_w^*$.
    \item For each frequency $\zeta \in\{1, \ldots, \frac{p-1}{2}\}$, there exists one neuron using this frequency only.
\end{itemize}
\end{theorem}
\begin{proof}[Proof sketch of Theorem~\ref{thm:main_k:informal}] See formal proof in Appendix~\ref{app:main:main_result_k}.
By Lemma~\ref{lem:margin_soln-k:informal}, we get $\gamma^*$ and the single-neuron class-weighted maximum-margin solution set $\Omega_q^{'*}$. By satisfying Condition \ref{lem:translation_property}, we know it is used in the maximum-margin solution.  
By Lemma~\ref{lem:construct-k:informal}, we can construct the network $\theta^*$ that uses neurons in $\Omega_q^{'*}$. By Lemma~\ref{lemma:multi-class}, we know that it is the maximum-margin solution.  Finally, by Lemma~\ref{lem:frequency-k}, we know that all frequencies are covered. 
\end{proof}
Theorem~\ref{thm:main_k:informal} tells us when the number of neurons is large enough, e.g., $m \geq 2^{2k-1} \cdot \frac{ p-1 }{ 2 }$ (the lower bound of $m$ may not be the tightest in our analysis), the one hidden neural network will exactly learn all Fourier spectrum/basis to recover the modular addition operation. 
More specifically, each neuron will only focus on one Fourier frequency. 
Our analysis provides a comprehensive understanding of why neural networks trained by SGD prefer to learn Fourier-based circuits.

Our analysis essentially provides hints on how neural networks learn to perform well. Note that humans do modular calculations completely differently with Fourier circuits. Thus, for more general tasks, the model behavior may differ from that of human beings. On the other hand, the Fourier spectrum feature pattern could be useful for out-of-distribution (OOD), robust learning, or designing better learning algorithms, e.g., making the implicit regularization explicit.

\ifdefined\isarxiv
\newcommand{\linewithvalue}{0.34}
\else
\newcommand{\linewithvalue}{0.49}
\fi

\begin{figure*}[!ht]
    \centering
\subfloat[]{\includegraphics[width=0.50\linewidth]{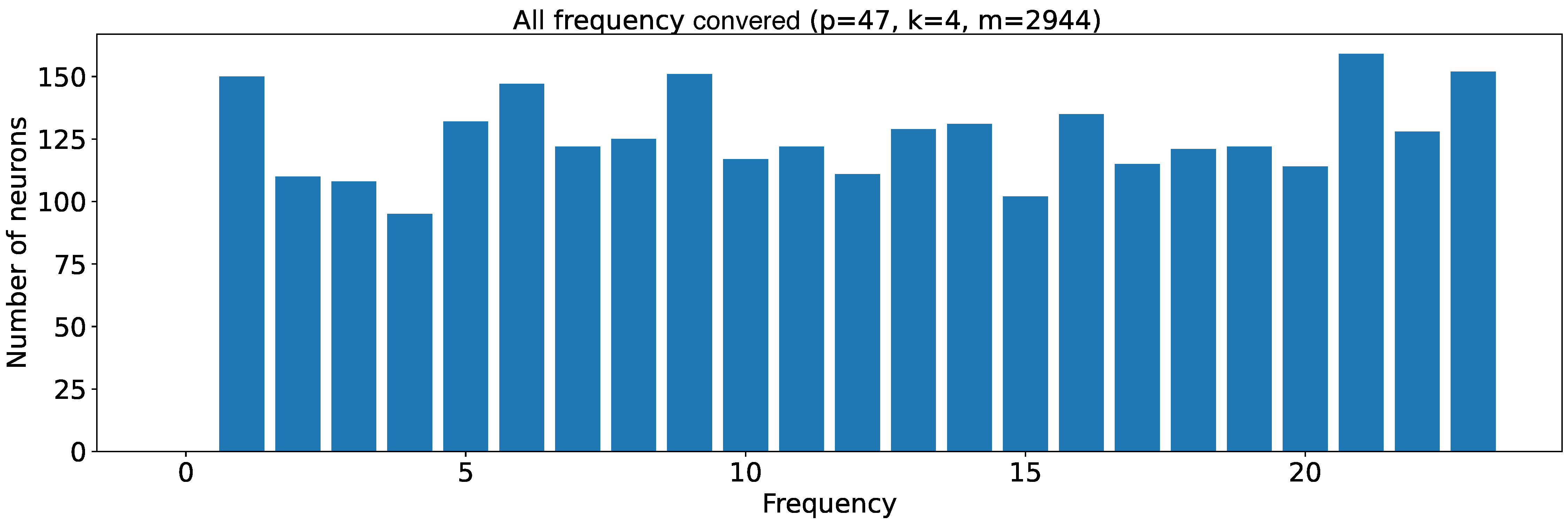}}
\subfloat[]{\includegraphics[width=0.24\linewidth]{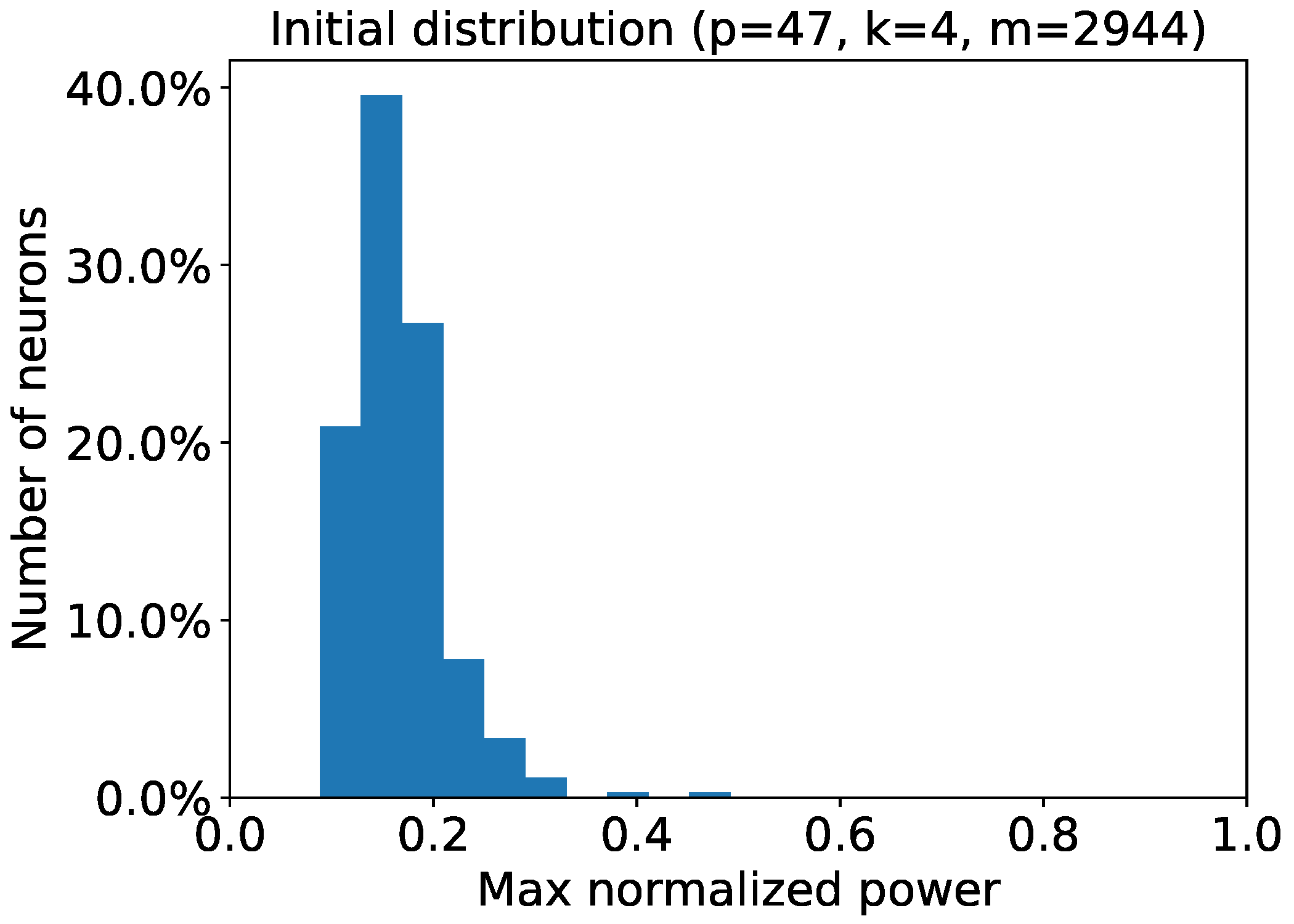}}
\subfloat[]{\includegraphics[width=0.24\linewidth]{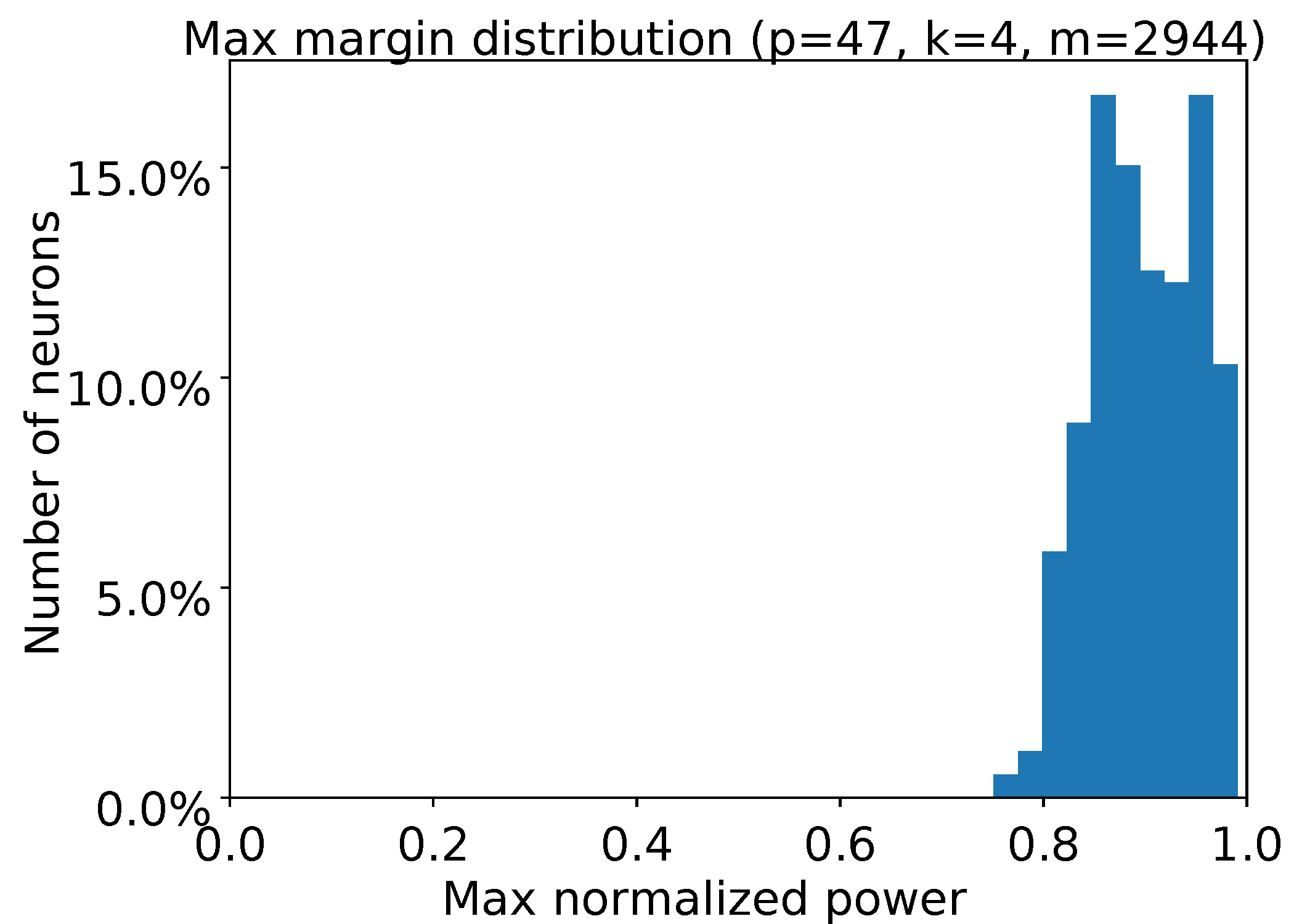}}
    \caption{All Fourier spectrum frequencies being covered and the maximum normalized power of the embeddings (hidden layer weights). The one-hidden layer network with $m=2944$ neurons is trained on $k=4$-sum mod-$p=47$ addition dataset. We denote $\hat{u}[i]$ as the Fourier transform of $u[i]$. Let $\max_i |\hat{u}[i]|^2 /( \sum|\hat{u}[j]|^2 )$ be the maximum normalized power. 
    Mapping each neuron to its maximum normalized power frequency, (a) shows the final frequency distribution of the embeddings. 
    Similar to our construction analysis in Lemma~\ref{lem:construct-k:informal}, we have an almost uniform distribution over all frequencies. 
    (b) shows the maximum normalized power of the neural network with random initialization. (c) shows, in frequency space, the embeddings of the final trained network are one-sparse, i.e., maximum normalized power being almost 1 for all neurons. This is consistent with our max-margin analysis results in Lemma~\ref{lem:construct-k:informal}. See Figure~\ref{fig:nn_freq_k3} and ~\ref{fig:nn_freq_k5} in Appendix~\ref{app:sec:exp_nn} for results when $k$ is 3 or 5.
    }
    \label{fig:nn_freq_k4}
\end{figure*}

\subsection{Technique Overview}
In this section, we propose techniques overview of the proof for our main result. 
We use $\i$ to denote $\sqrt{-1}$.
Let $f: \mathbb{Z}_p \rightarrow \mathbb{C}$. Then, for each frequency $j \in \mathbb{Z}_p$, we define $f$ discrete Fourier transform (DFT) as
$
\hat f(j) := \sum_{\zeta \in \mathbb{Z}_p} f(\zeta) \exp(-2\pi \i \cdot j\zeta /p).
$
Let $\Omega_q^{'*}$ be the single neuron class-weighted maximum-margin solution set (formally defined in Definition~\ref{def:margin_max_problem-k}).

First, we provide a brief informal high-level intuition of our proof. Based on Condition~\ref{lem:translation_property} (proved by Lemma~\ref{lem:construct-k:informal}), the maximum-margin solutions and class-weighted maximum-margin are equivalent to each other. Thus, in Lemma~\ref{lem:margin_soln-k:informal}, we can get the class-weighted maximum-margin solution as the problem has been reduced from a network max-margin problem to a single-neuron max-margin problem. Finally, combining all of these, we have our main Theorem~\ref{thm:main_k:informal}.

Now, we show how to get $\Omega'^*_q$. 

\begin{lemma}[Informal version of Lemma \ref{lem:margin_soln-k}]\label{lem:margin_soln-k:informal}
If for any $\zeta \in \{1, \ldots, \frac{p-1}{2}\}$, there exists a scaling constant $\beta \in \mathbb{R}$,  such that $u_i(a_i) = \beta \cdot \cos ( \theta_{u_i}^* + 2 \pi \zeta a_i /p )$ for any $i \in [k]$ and $w(c) =\beta \cdot \cos (\theta_w^*+2 \pi \zeta c / p)$,
where $\theta_{u_1}^*, \dots, \theta_{u_k}^*, \theta_w^* \in \R$ are some phase offsets satisfying $\theta_{u_1}^*+\dots+\theta_{u_k}^*=\theta_w^*$.
Then, we have $\Omega_q^{\prime *}  
    = \{(u_1, \dots, u_k, w)\}, $ and 
$ \gamma^* = \frac{2(k!)}{(2k+2)^{(k+1)/2}(p-1) p^{(k-1)/2}}$.
\end{lemma}

\begin{proof}[Proof sketch of Lemma~\ref{lem:margin_soln-k:informal}]
See formal proof in Appendix~\ref{app:single:get_solution_set_k}. 
The proof establishes the maximum-margin solution's sparsity in the Fourier domain through several key steps. Initially, by Lemma~\ref{lemma:fourier_space-k}, focus is directed to maximizing Eq.~\eqref{eq:discrete_fourier_transforms_uvw-k}. 
For odd \( p \), Eq.~\eqref{eq:discrete_fourier_transforms_uvw-k} can be reformulated with magnitudes and phases of $\hat{u}_i$ and $\hat{w}$ (discrete Fourier transform of $u_i$ and $w$), leading to an equation involving cosine of their phase differences.
Plancherel's theorem is then employed to translate the norm constraint to the Fourier domain. This allows for the optimization of the cosine term in the sum, effectively reducing the problem to maximizing the product of magnitudes of $\hat{u}_i$ and $\hat{w}$ (Eq.~\eqref{eq:reduce_optimize_discrete_fourier_transforms_uvw-k}).
By applying the inequality of arithmetic and geometric means, we have an upper bound for the optimization problem. 
To achieve the upper bound, equal magnitudes are required for all $\hat{u}_i$ and $\hat{w}$ at a single frequency, leading to Eq.~\eqref{eq:hat_u_same-k}. The neurons are finally expressed in the time domain, demonstrating that they assume a specific cosine form with phase offsets satisfying certain conditions.  
\end{proof}

Next, we show the number of neurons required to solve the problem and the properties of these neurons. We demonstrate how to use these neurons to construct the network $\theta^*$.

\begin{lemma}[Informal version of Lemma \ref{lem:construct-k}]\label{lem:construct-k:informal}
Let $\cos_\zeta(x)$ denote $\cos (2 \pi \zeta x / p)$. 
Then, we have 
the maximum $L_{2,k+1}$-margin solution $\theta^*$ will consist of $2^{2k-1} \cdot {p-1 \over 2}$ neurons $\theta^*_i \in \Omega_q^{'*}$ to simulate $\frac{p-1}{2}$ type of cosine computation, where each cosine computation is uniquely determined a $\zeta \in \{1, \ldots, \frac{p-1}{2}\}$. In particular, for each $\zeta$ the cosine computation is   $\cos_\zeta(a_1+\dots+a_k-c), \forall a_1,\dots,a_k,c \in \mathbb{Z}_p$.
\end{lemma}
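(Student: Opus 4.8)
The plan is to assemble $\theta^*$ from three ingredients: the single-neuron characterization of Lemma~\ref{lem:margin_soln-k:informal}, a reduction of the network-level $L_{2,k+1}$-margin program to that single-neuron problem, and an explicit trigonometric synthesis certifying the count. For the reduction, I would write the network program as: maximize $\gamma$ with $f_c(x;\theta)-\max_{c'\neq c}f_{c'}(x;\theta)\ge\gamma$ for every data point $x$ with true label $c=a_1+\cdots+a_k$, under the homogeneous normalization of $\|\theta\|_{2,k+1}$, where $f_c(x;\theta)=\sum_i w_i(c)\,\sigma\big(\sum_j u_{i,j}(a_j)\big)$ and the activation $\sigma$ is homogeneous of degree $k$ (so $f$ is $(k{+}1)$-homogeneous, matching $L_{2,k+1}$). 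For two-layer $(k{+}1)$-homogeneous networks, the maximizers of this program are non-negative combinations of single-neuron configurations that are themselves class-weighted margin-optimal; this is where Lemma~\ref{lem:margin_soln-k:informal} enters, forcing each participating neuron to be a pure single-frequency cosine in all of $u_1,\dots,u_k,w$ (one common frequency $\zeta\in\{1,\dots,\frac{p-1}{2}\}$) with $\theta_{u_1}^*+\cdots+\theta_{u_k}^*=\theta_w^*$, and fixing the value $\gamma^*$. The only remaining freedom is which frequencies appear, with which phases, and in what multiplicity.

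Next I would show all $\frac{p-1}{2}$ frequencies occur, each with equal multiplicity. Expanding $\sigma(\sum_j u_{i,j}(a_j))$ by the multinomial/Fourier expansion and collapsing products of cosines by product-to-sum, the combined contribution of the frequency-$\zeta$ neurons to the logit is a linear combination of terms $\cos_\zeta(L)$ over integer linear forms $L$ in $a_1,\dots,a_k,c$. Re-running the Cauchy--Schwarz / AM--GM estimate of Lemma~\ref{lem:margin_soln-k:informal} under the norm budget shows the margin is attained only when, for each $\zeta$, this contribution is a positive multiple of $\cos_\zeta(a_1+\cdots+a_k-c)$, with the same multiple across all $\zeta$, so that the total logit is a positive affine image of $\mathbf 1[c\equiv a_1+\cdots+a_k \bmod p]$; since $\sum_{\zeta=1}^{(p-1)/2}\cos_\zeta(m)=\tfrac12\big(p\,\mathbf 1[m\equiv 0 \bmod p]-1\big)$, omitting any frequency destroys the separation, so all of them are present. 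The multiplicative symmetry $a_j\mapsto\lambda a_j,\ c\mapsto\lambda c$ ($\lambda\in\mathbb Z_p^\times$) of the data then forces the per-frequency multiplicity to be independent of $\zeta$.

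The crux, and the step I expect to be the main obstacle, is counting how many frequency-$\zeta$ neurons are needed to realize exactly $\cos_\zeta(a_1+\cdots+a_k-c)$ with all lower-order interaction terms (those missing some $a_j$) and all ``wrong linear form'' terms cancelling; I would do this by two nested phase-averaging steps. Step~A: for a fixed base phase profile, average $2^k$ neurons whose input phases $\theta_{u_j}$ are shifted through $\{0,\pi\}$, tracking signs through the forced shift $\theta_w=\sum_j\theta_{u_j}$ (which cancels the averaging sign carried by the $w$-block); the parity projection annihilates every multinomial term except the all-ones multi-index, leaving a pure product $\prod_j\cos(\theta_{u_j}^{(0)}+2\pi\zeta a_j/p)\cdot w(c)$. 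Step~B: $\cos_\zeta(a_1+\cdots+a_k-c)$ is the real part of $\prod_j e^{2\pi\sqrt{-1}\,\zeta a_j/p}\cdot e^{-2\pi\sqrt{-1}\,\zeta c/p}$, which expands into a signed sum of $2^k$ real products $\prod_{j\in S}\sin(\cdot)\prod_{j\notin S}\cos(\cdot)$ times a $c$-trig factor; each is realized by choosing the base phases in $\{0,-\pi/2\}$, checking that the induced $\theta_w$ produces exactly the demanded $c$-factor. This gives $2^k\cdot 2^k=2^{2k}$ neurons a priori; a parity/reality symmetry of the real-valued neuron outputs identifies them in pairs, yielding $2^{2k-1}$ per frequency, and the matching lower bound follows from linear independence over $\mathbb Z_p$ of the functions $\cos_\zeta$ of distinct linear forms, so strictly fewer neurons cannot cancel all the junk. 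Finally I would check that the assembled network, rescaled to $\|\theta\|_{2,k+1}=1$, attains margin exactly $\gamma^*$ of Lemma~\ref{lem:margin_soln-k:informal}; with the upper bound this identifies it as the maximum-margin solution, consisting of the claimed $2^{2k-1}\cdot\frac{p-1}{2}$ neurons that, for each $\zeta$, compute $\cos_\zeta(a_1+\cdots+a_k-c)$.
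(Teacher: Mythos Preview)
Your core construction coincides with the paper's. Step~A---averaging $2^k$ neurons over the sign pattern $\{0,\pi\}^k$ to isolate the all-ones multinomial term---is exactly the polarization identity the paper records as Lemma~\ref{lem:sum2product}, read in the reverse direction; and Step~B---selecting base phases in $\{0,-\pi/2\}$ to generate the $2^k$ cos/sin products---is the paper's trigonometric expansion of $\cos_\zeta$ of a sum (the displayed formula for $\cos_\zeta(\sum_i a_i)$, combined with the extra $\cos_\zeta(c)/\sin_\zeta(c)$ split). Your ``parity/reality'' halving is the $\epsilon\leftrightarrow\mathbf 1-\epsilon$ redundancy inside Step~A (equivalently the global flip $(u_j,w)\mapsto(-u_j,(-1)^k w)$), which is precisely why the paper writes $2^{k-1}$ neurons per product term rather than $2^k$. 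The final count $2^k\cdot 2^{k-1}=2^{2k-1}$ per frequency is the same, obtained by the same two identities in the opposite order.

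Where you differ is in the wrapper. The paper treats this lemma as purely constructive: it builds $\theta^*$ from the $2^{2k-1}\cdot\frac{p-1}{2}$ neurons and then invokes the external Lemma~\ref{lem:combine} and Lemma~\ref{lemma:multi-class} to certify that the construction attains the maximum margin, deferring the ``every frequency must appear'' claim to a separate Lemma~\ref{lem:frequency-k} used only in the main theorem. Your route---re-running the AM--GM bound at the network level and checking the achieved margin equals $\gamma^*$---is a legitimate, more self-contained alternative to those citations. However, two of your add-ons are not part of this lemma and one is shaky: the equal-multiplicity argument via the $\mathbb Z_p^\times$ action belongs with Lemma~\ref{lem:frequency-k}, not here; and the neuron-count lower bound via linear independence of $\{\cos_\zeta(L)\}_L$ does not establish what you want, since a single neuron already contributes to many linear forms $L$ simultaneously, so independence of the target functions does not force one neuron per term. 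The lemma as stated is an existence/construction claim, not a minimality claim, so you can simply drop that step.
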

\begin{proof}[Proof sketch of Lemma~\ref{lem:construct-k:informal}]
See formal proof in Appendix~\ref{app:construct:constructions_for_theta_k}. 
Our goal is to show that $2^{2k-1} \cdot {p-1 \over 2}$ neurons $\theta_i^* \in \Omega_q^{'*}$ are able to simulate $\frac{p-1}{2}$ type of $\cos$ computation.
We have the following expansion function of $\cos_\zeta(x)$, which denotes $\cos(2\pi \zeta x /p)$.
\begin{align*}
    \cos_\zeta(\sum_{i = 1}^k a_i) = &  ~ \sum_{b \in \{0,1\}^{k}} \prod_{i=1}^{k} \cos^{1-b_i}(a_i) \cdot \sin^{b_i}(a_i)  \\ & ~ \cdot {\bf 1}[ \sum_{i=1}^{k} b_i \%2 = 0 ] \cdot (-1)^{ {\bf 1 }[ \sum_{i=1}^{k} b_i \% 4 = 2 ] }.
\end{align*} 
The above equation can decompose a $\cos(\sum )$ to some basic elements. We have $2^k$ terms in the above equation. By using the following fact in Lemma~\ref{lem:sum2product},
\begin{align*}
    2^k \cdot k! \cdot  \prod_{i=1}^k a_i = \sum_{c \in \{-1,+1\}^k } (-1)^{(k-\sum_{i=1}^k c_i)/2} ( \sum_{j=1}^k c_j a_j)^k,
\end{align*}
where each term can be constructed by $2^{k-1}$ neurons. Therefore, we need $2^{k-1} 2^k$ total neurons. To simulate $\frac{p-1}{2}$ type of simulation, we need $2^{2k-1} \frac{p-1}{2}$ neurons.
Then, using the Lemma~\ref{lem:combine}, we construct the network $\theta^*$. By using the Lemma~\ref{lemma:multi-class} from {\cite{meo+23}}, we get it is the maximum-margin solution.
\end{proof}
\section{EXPERIMENTS}\label{sec:exp}
First, we conduct simulation experiments to verify our analysis for $k=3,4,5$.  
Then, we show that the one-layer transformer learns 2-dimensional cosine functions in their attention weights. 
Finally, we show the grokking phenomenon under different $k$. Please refer to Appendix~\ref{app:sec:implement} for details about implementation.

\subsection{One-hidden Layer Neural Network}\label{sec:exp_nn}
We conduct simulation experiments to verify our analysis. In Figure~\ref{fig:nn_w_k4} and Figure~\ref{fig:nn_freq_k4}, we use SGD to train a two-layer network with $m=2944=2^{2k-2} \cdot (p-1)$ neurons, i.e., Eq.~\eqref{eq:nn}, on $k=4$-sum mod-$p=47$ addition dataset, i.e., Eq.~\eqref{eq:data}. Figure~\ref{fig:nn_w_k4} shows that the networks trained with SGD have single-frequency hidden neurons, which support our analysis in Lemma~\ref{lem:margin_soln-k:informal}. Furthermore, Figure~\ref{fig:nn_freq_k4} demonstrates that the network will learn all frequencies in the Fourier spectrum, which is consistent with our analysis in Lemma~\ref{lem:construct-k:informal}. Together, they verify our main results in Theorem~\ref{thm:main_k:informal} and show that the network trained by SGD prefers to learn Fourier-based circuits. There are more similar results when $k$ is 3 or 5 in Appendix~\ref{app:sec:exp_nn}.

\begin{figure}[!ht]
  \centering
\includegraphics[width=\linewithvalue\linewidth]{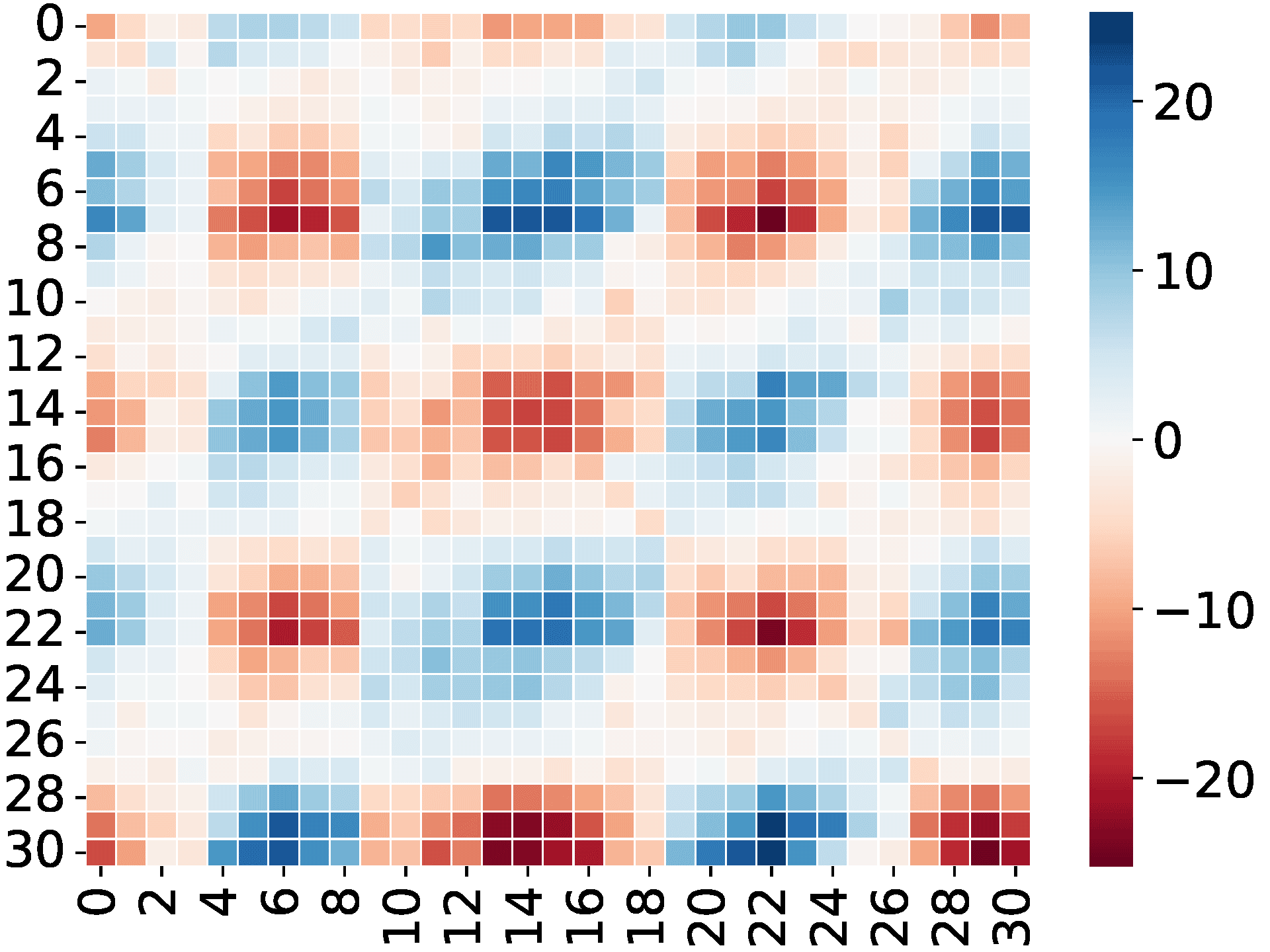}
\includegraphics[width=\linewithvalue\linewidth]{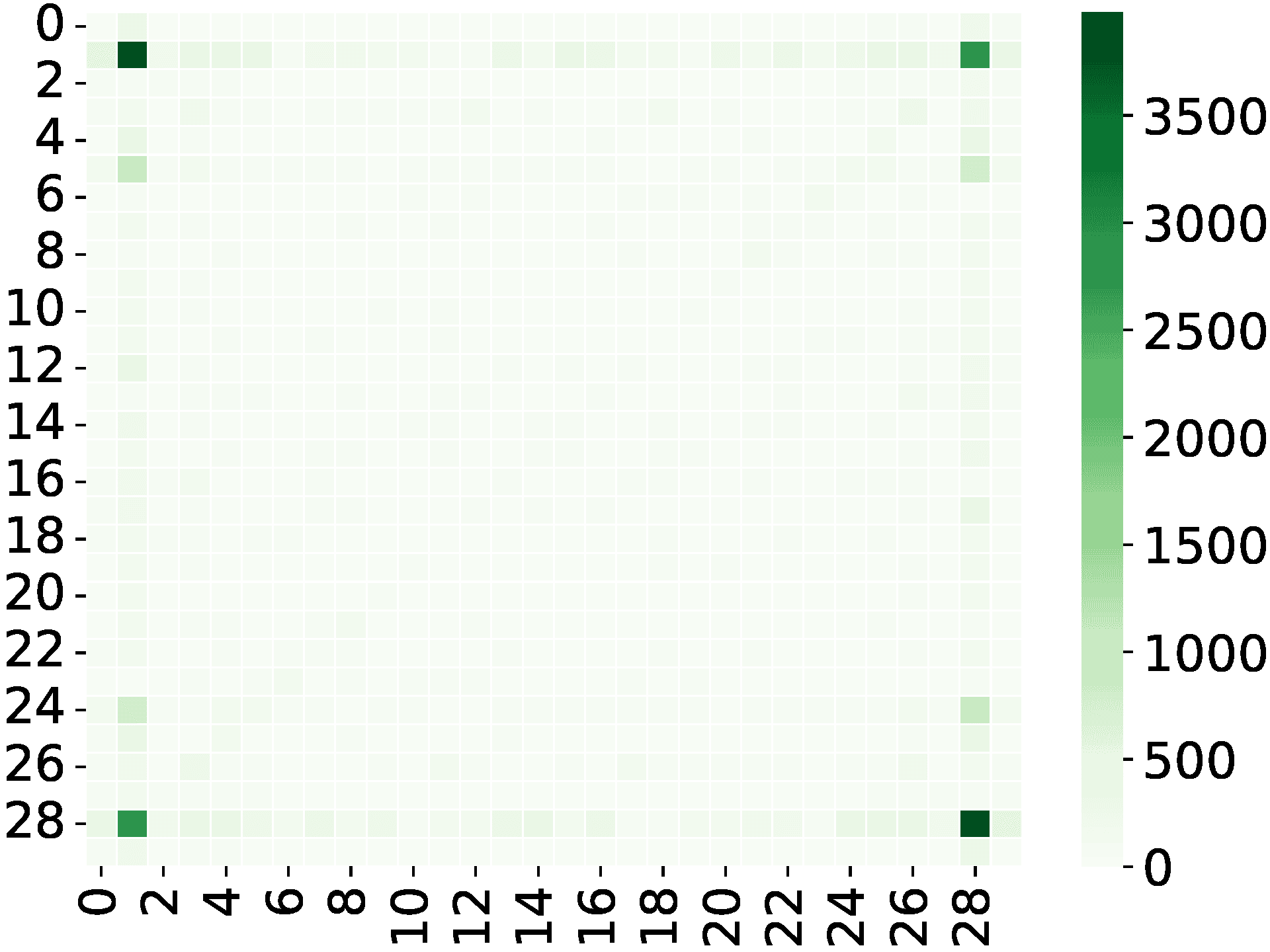}
\includegraphics[width=\linewithvalue\linewidth]{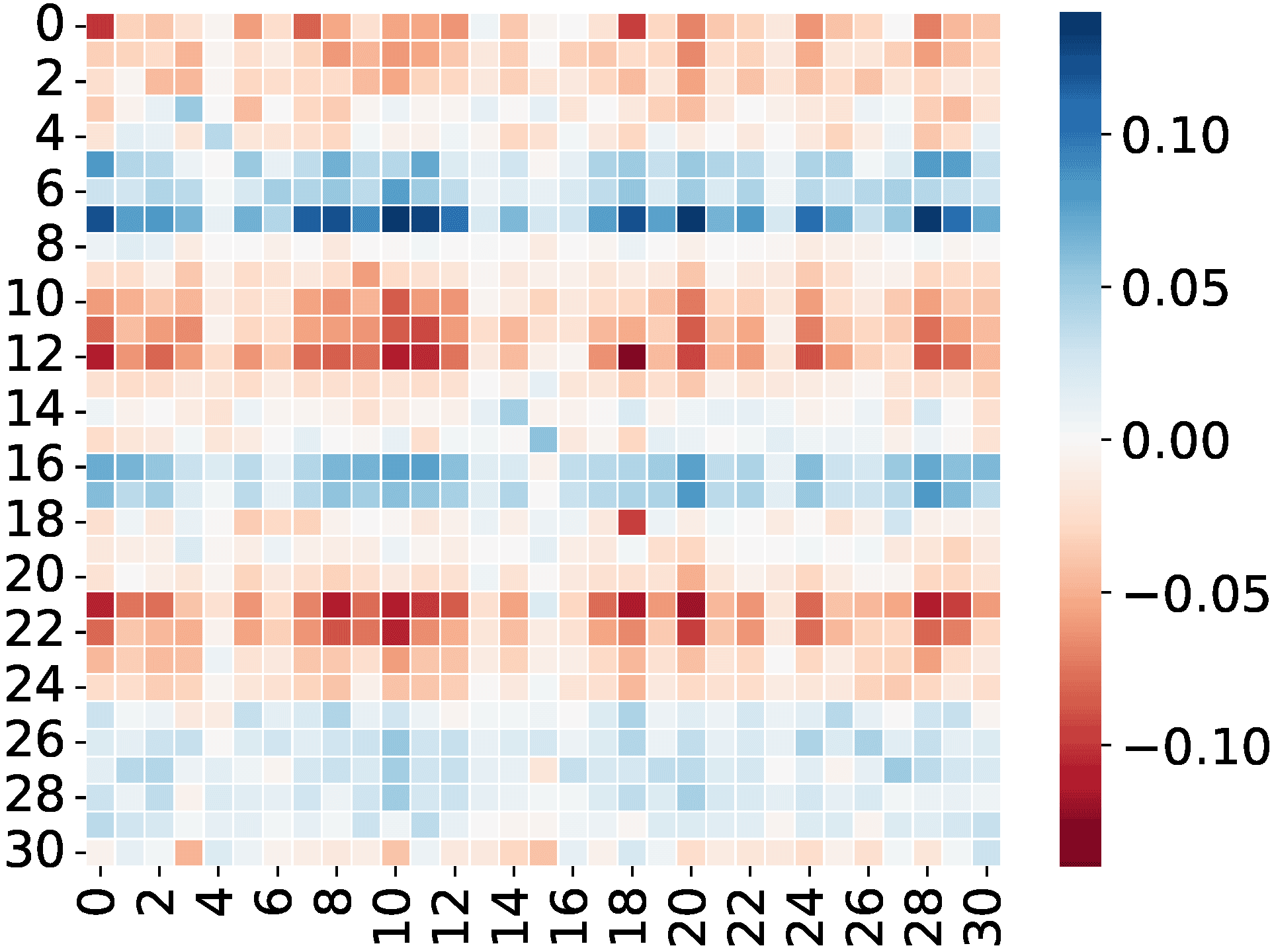}
\includegraphics[width=\linewithvalue\linewidth]{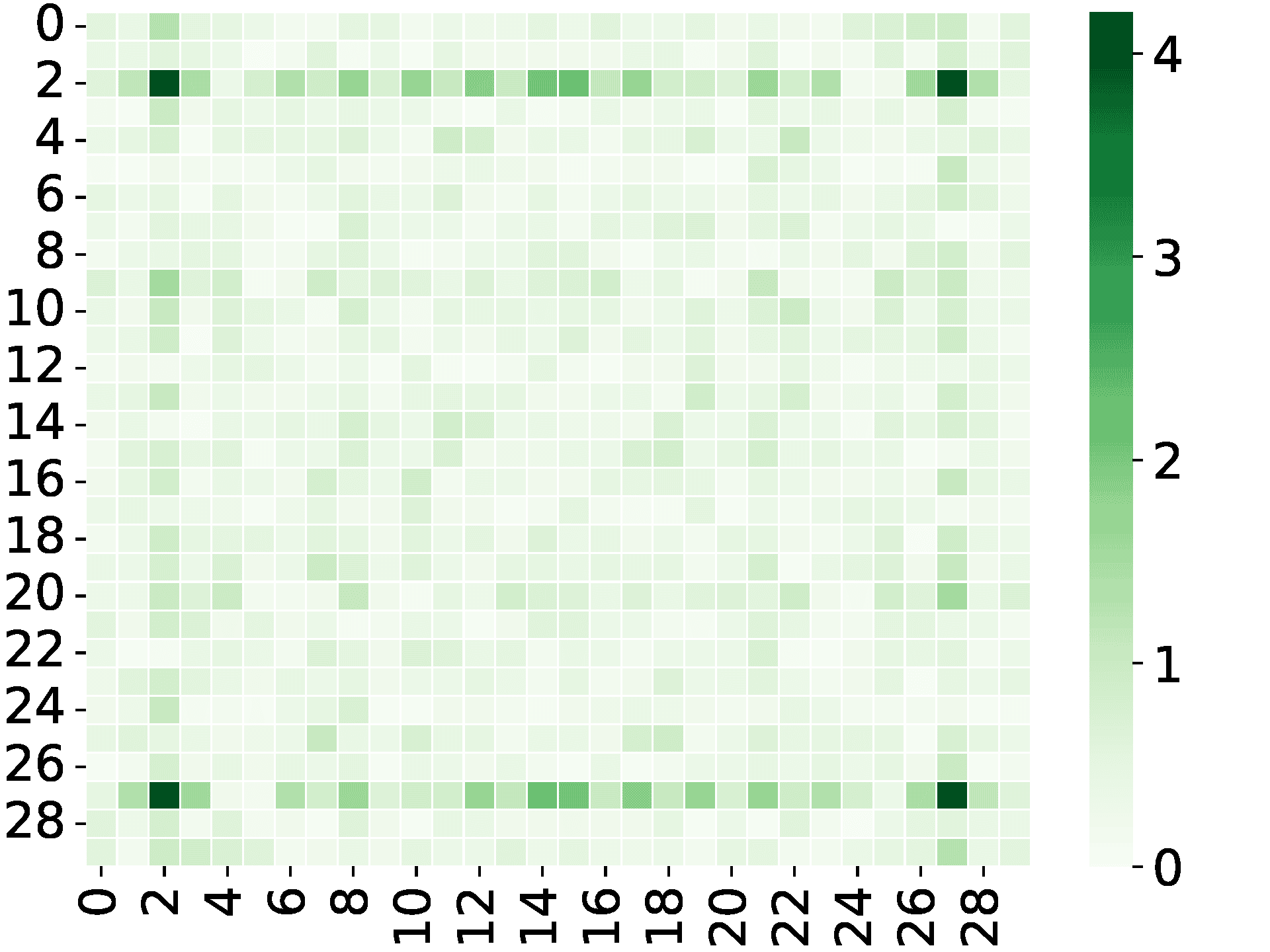}
\includegraphics[width=\linewithvalue\linewidth]{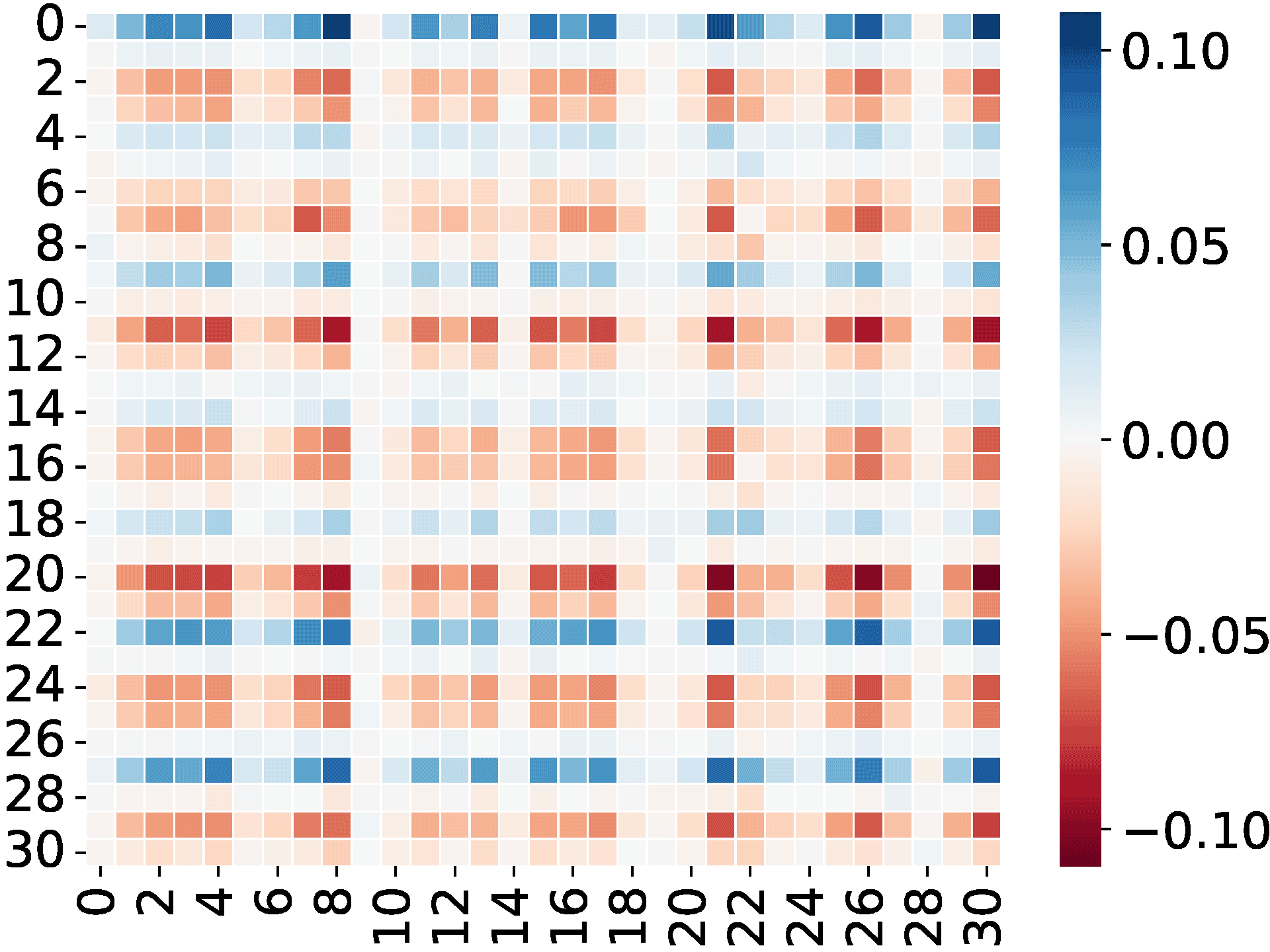}
\includegraphics[width=\linewithvalue\linewidth]{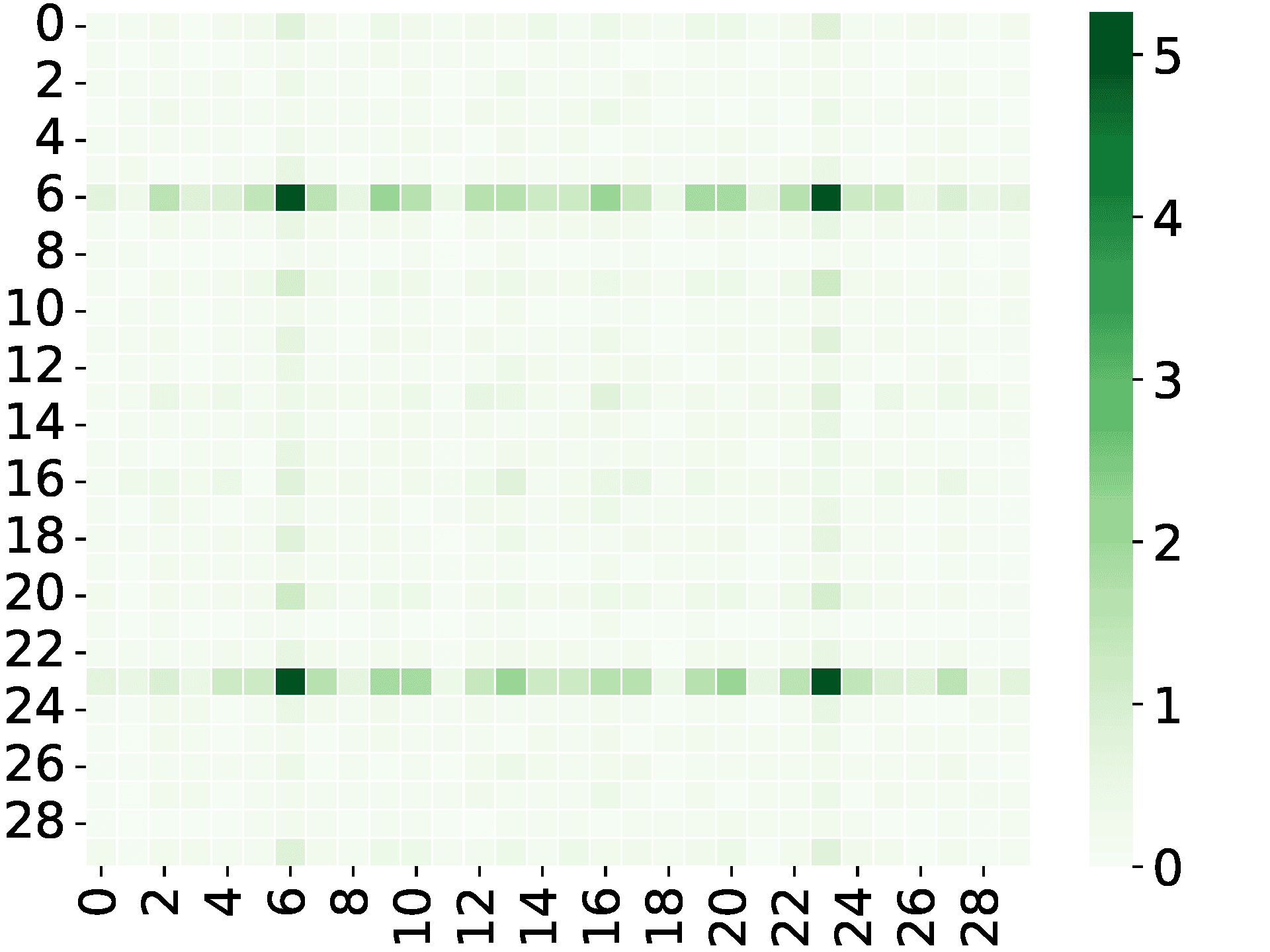}
\includegraphics[width=\linewithvalue\linewidth]{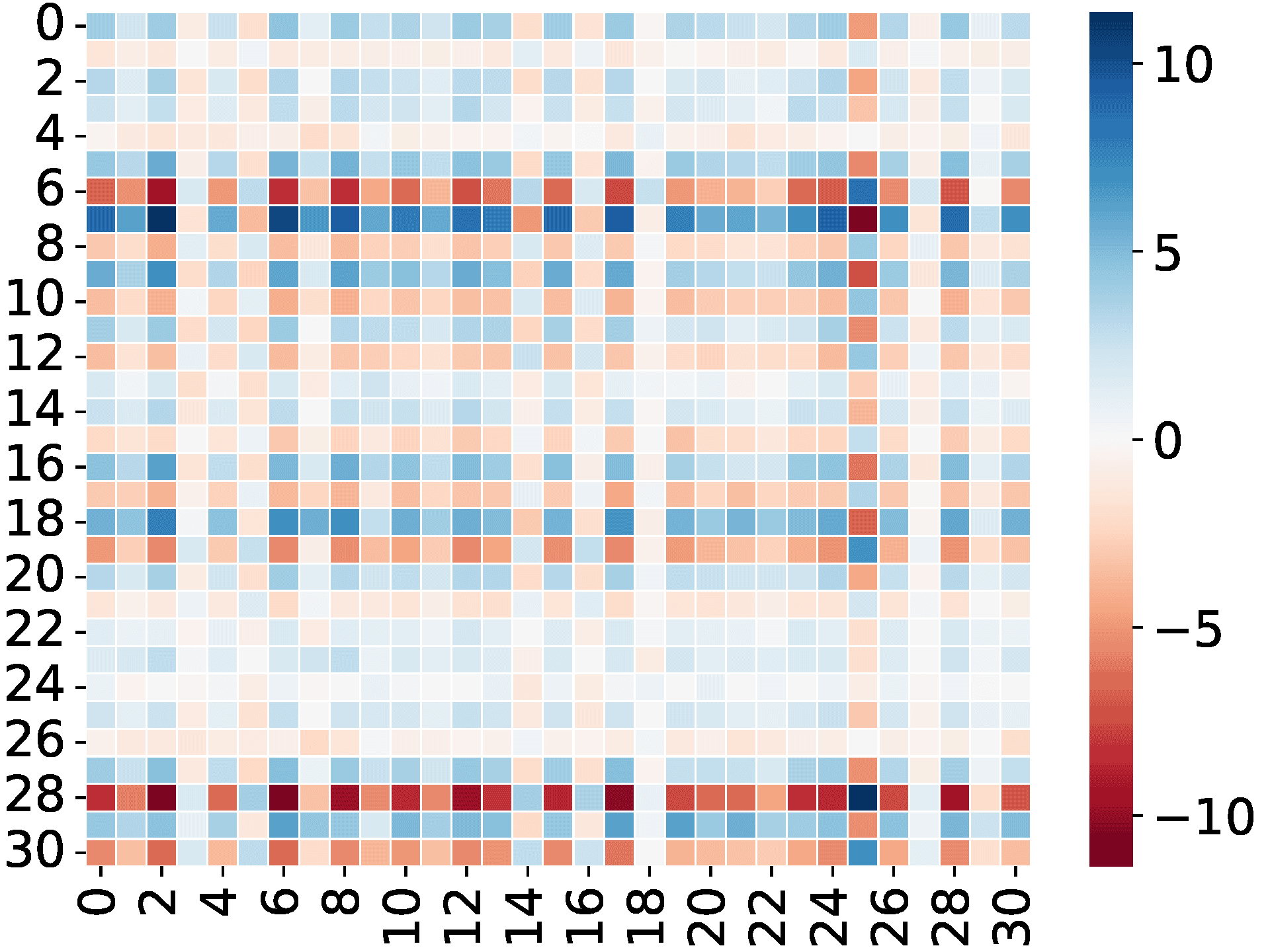}
\includegraphics[width=\linewithvalue\linewidth]{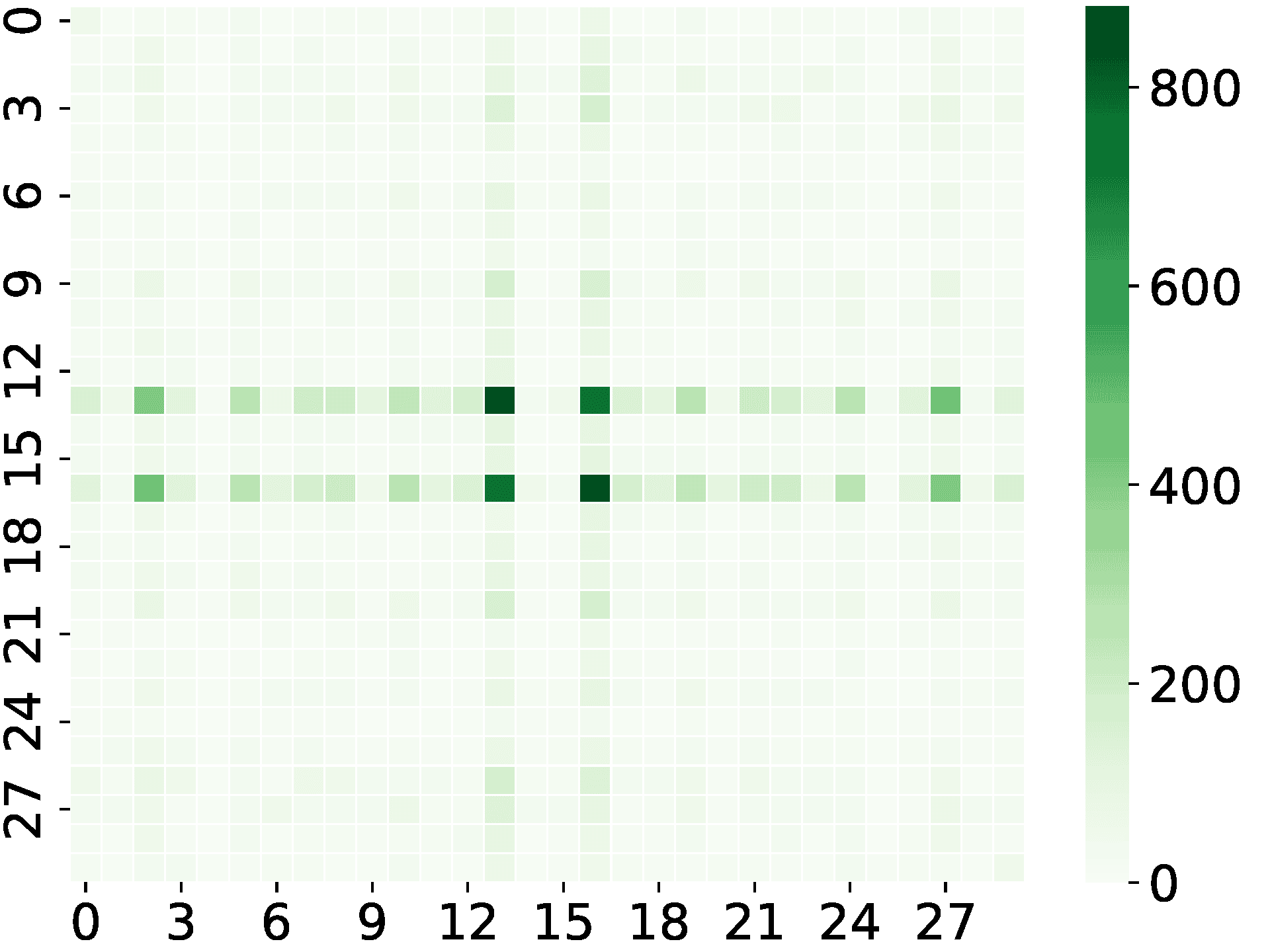}
    \caption{2-dimension cosine shape of the trained $W^{KQ}$ (attention weights) and their Fourier power spectrum. The one-layer transformer with attention heads $m=160$ is trained on $k=4$-sum mod-$p=31$ addition dataset. We even split the whole datasets ($p^k = 31^4$ data points) into training and test datasets. Every row represents a random attention head from the transformer. The left figure shows the final trained attention weights being an apparent 2-dim cosine shape. The right figure shows their 2-dim Fourier power spectrum. The results in the figures are consistent with Figure~\ref{fig:nn_w_k4}. See Figure~\ref{fig:s_k3} and Figure~\ref{fig:s_k5} in Appendix~\ref{app:sec:exp_transformer} for similar results when $k$ is 3 or 5.}
    \label{fig:s_k4}
\end{figure}

\subsection{One-layer Transformer}\label{sec:exp_transfomer}
We find similar results in one-layer transformers. Let $E$ be input embedding and $W^P, W^V, W^K, W^Q$ be projection, value, key and query matrix. The $m$-heads attention layer can be written as 
\begin{align*}
    & W^{P}\begin{pmatrix}
W_1^{V\top} E \cdot \softmax\left({E^\top W_1^{K} W_1^{Q\top} E}\right) \\
\dots \\
W_m^{V\top} E \cdot \softmax\left({E^\top W_m^{K} W_m^{Q\top} E}\right)
\end{pmatrix}.
\end{align*}
We denote $W^{K} W^{Q\top}$ as $W^{KQ}$ and call it attention matrix.
In Figure~\ref{fig:s_k4}, we train a one-layer transformer with $m=160$ heads attention and hidden dimension 128, i.e., above equation, on $k=4$-sum mod-$p=31$ addition dataset, i.e., Eq.~\eqref{eq:data}. Figure~\ref{fig:s_k4} shows that the SGD-trained one-layer transformer learns 2-dim cosine shape attention matrices, which is similar to the one-hidden layer neural networks in Figure~\ref{fig:nn_w_k4}. This means that the attention layer has a learning mechanism similar to neural networks in the modular arithmetic task. It prefers to learn (2-dim) Fourier-based circuits when trained by SGD. There are more similar results when $k$ is 3 or 5 in Appendix~\ref{app:sec:exp_transformer}.

\begin{figure*}[t!]
    \centering
{\includegraphics[width=0.24\linewidth]{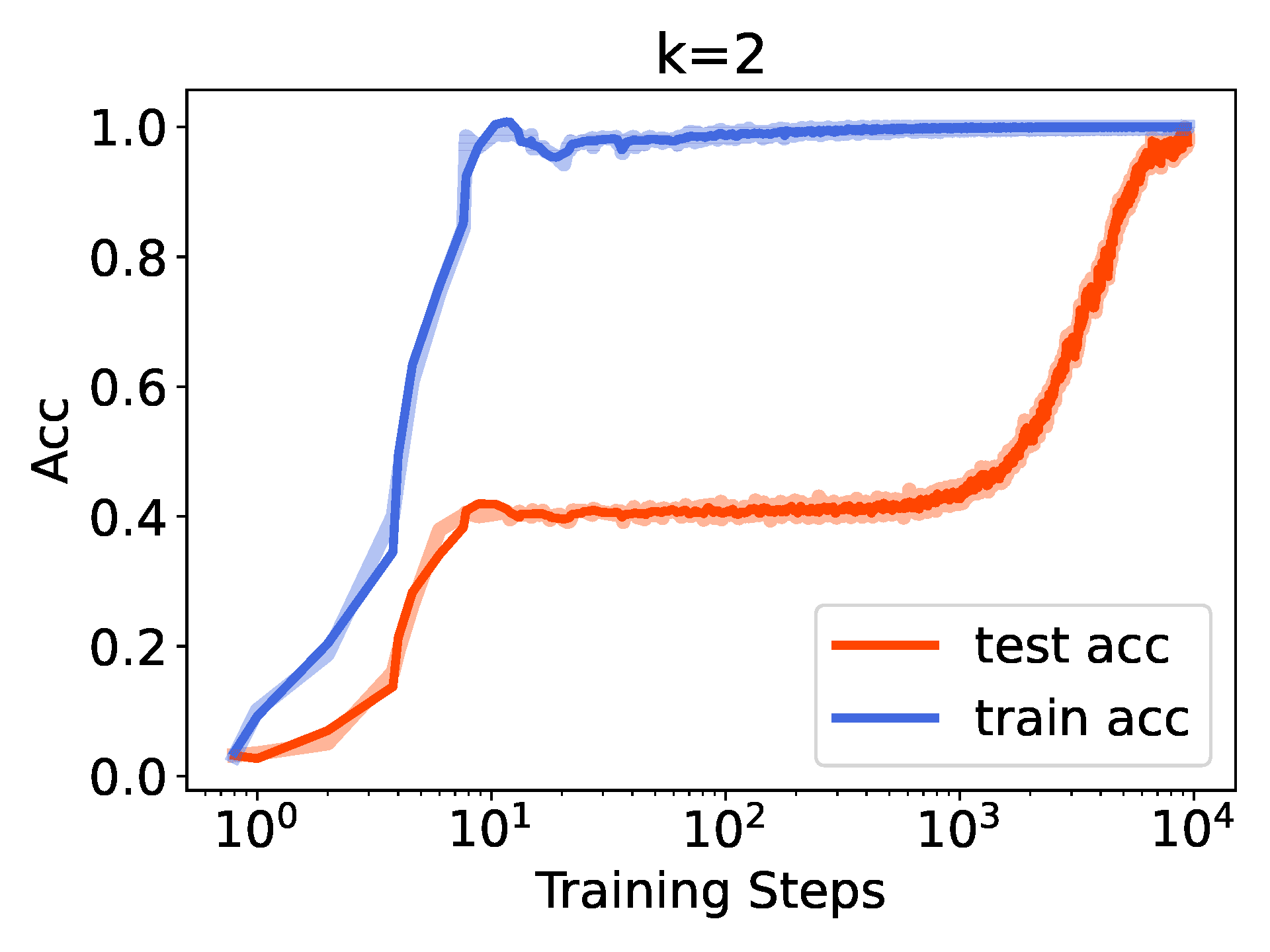}}
{\includegraphics[width=0.24\linewidth]{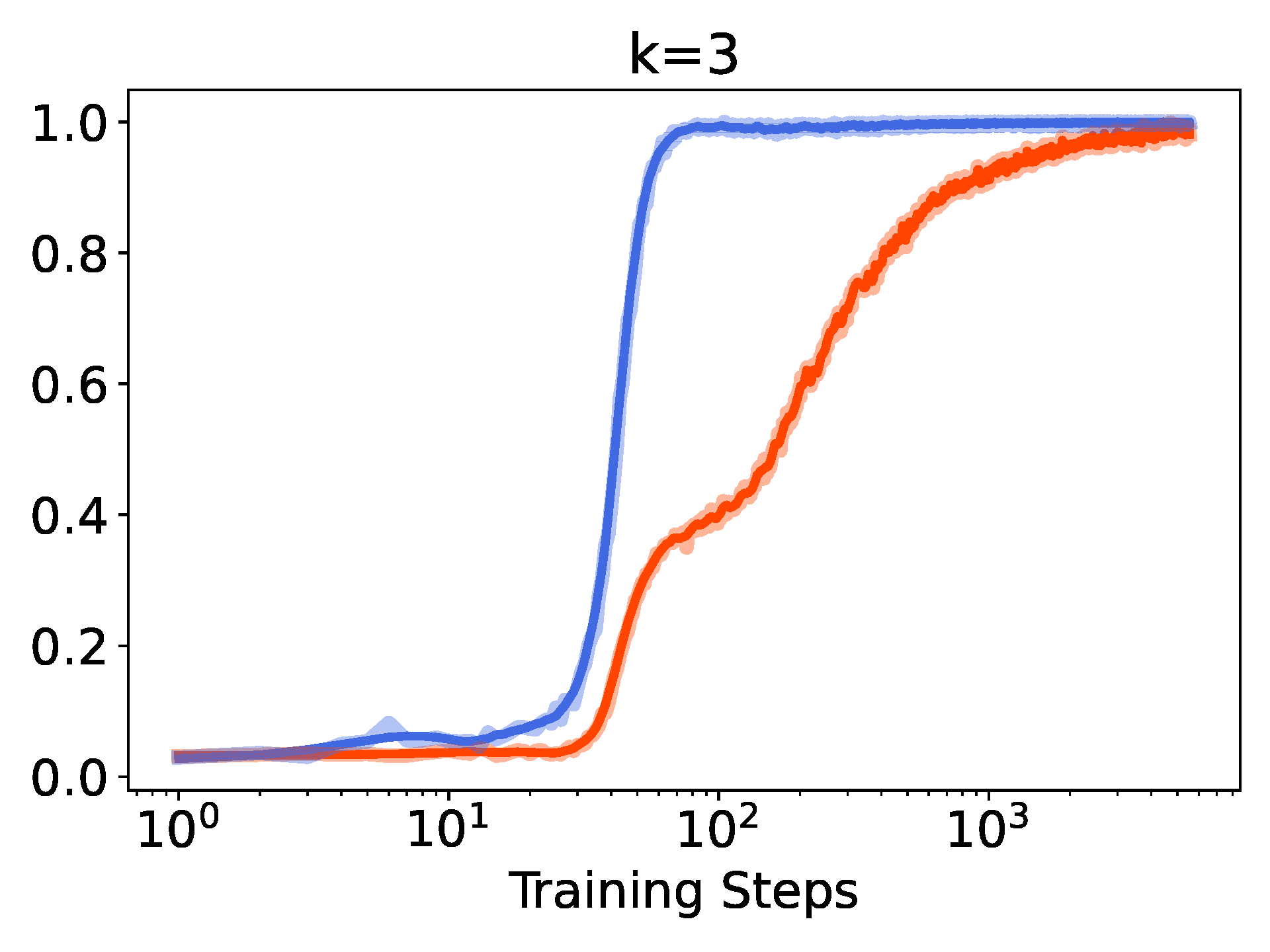}}
{\includegraphics[width=0.24\linewidth]{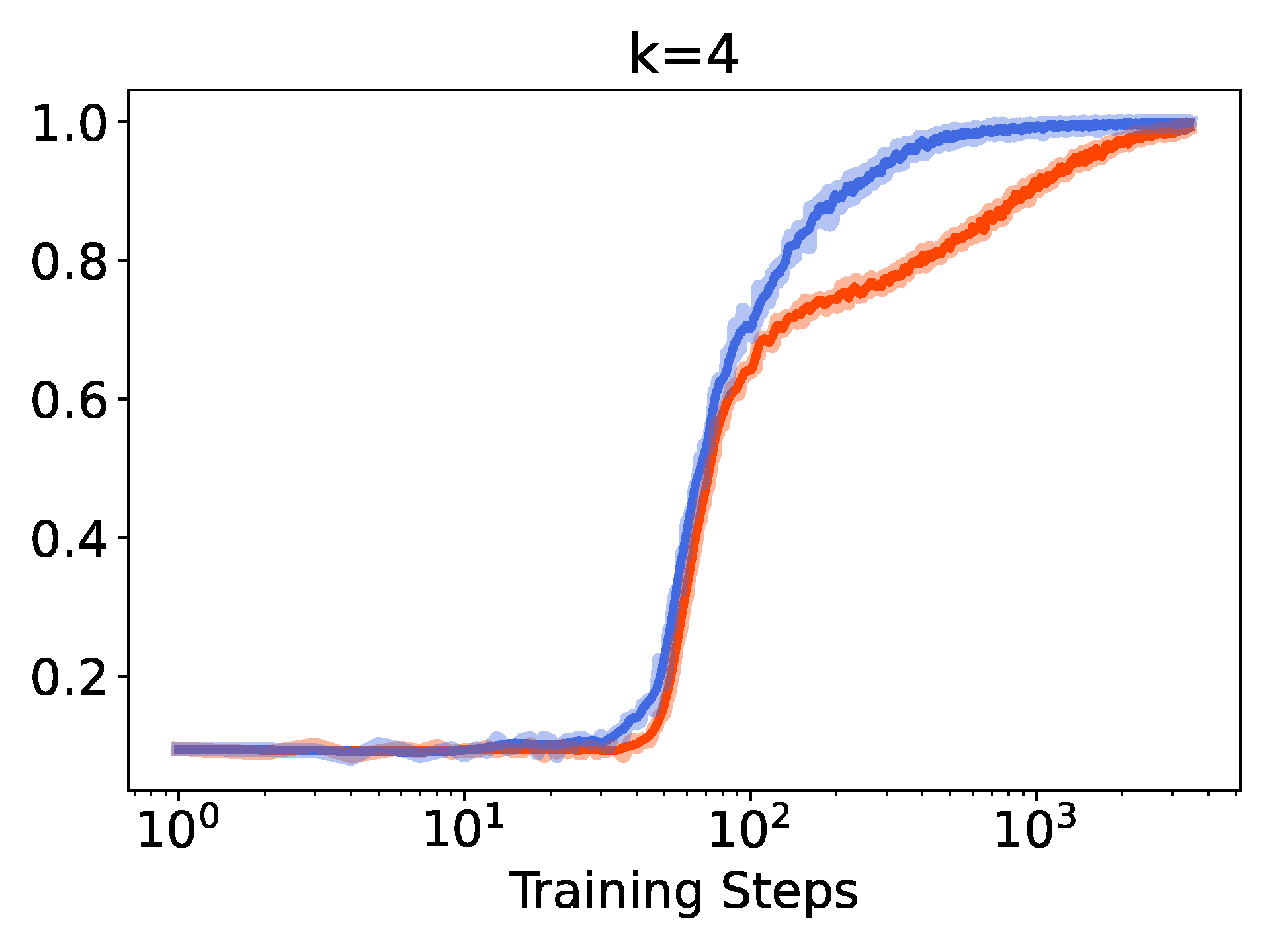}}
{\includegraphics[width=0.24\linewidth]{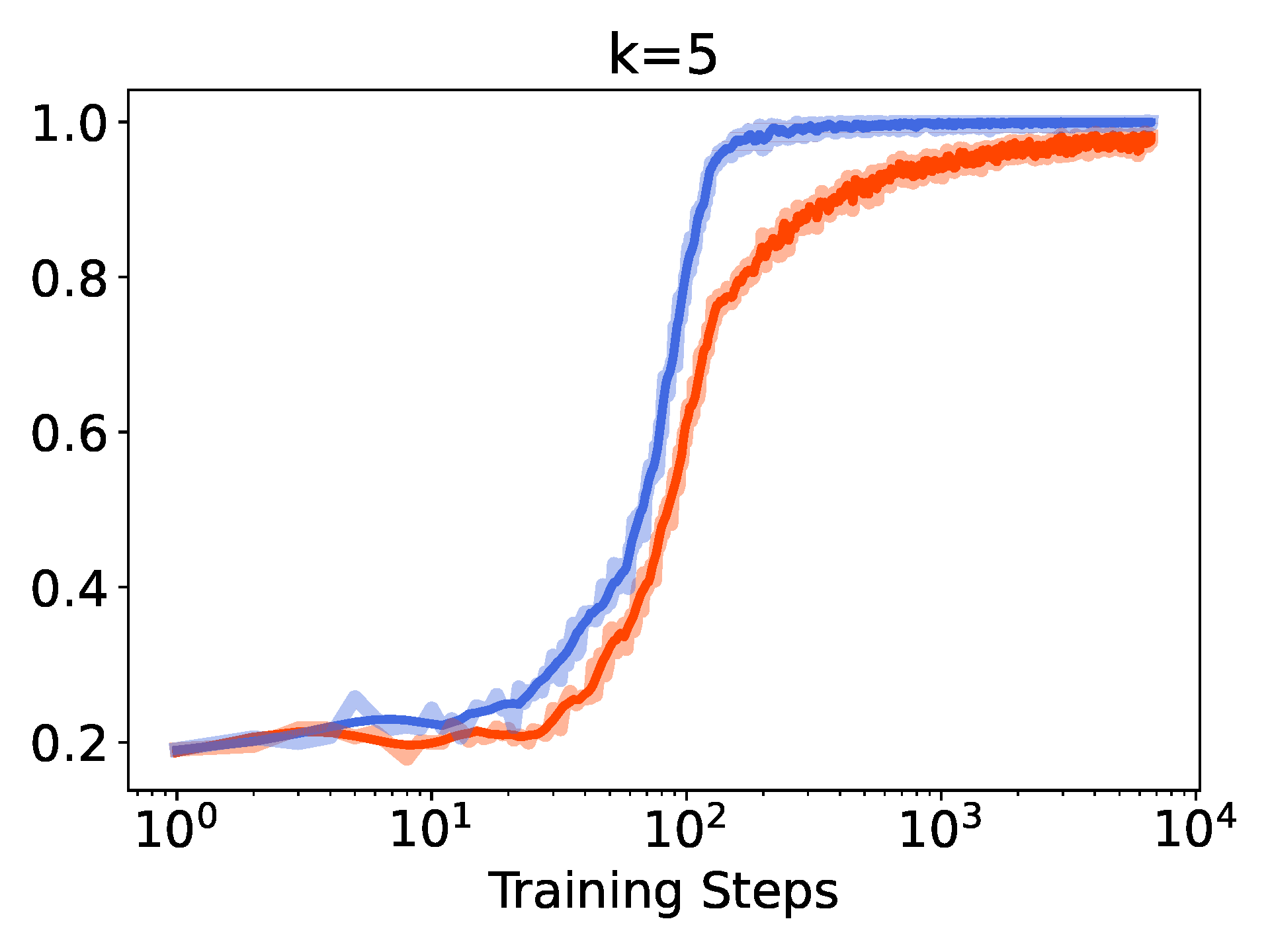}}
    \caption{ Grokking (models abruptly transition from bad generalization to perfect generalization after a large number of training steps) under learning modular addition involving $k=2,3,4,5$ inputs. We train two-layer transformers with $m=160$ attention heads on $k=2,3,4,5$-sum mod-$p = 97,31,11,5$ addition dataset with $50$\% of the data in the training set under AdamW~\cite{loshchilov2017decoupled} optimizer 1e-3 learning rate and 1e-3 weight decay. We use different $p$ to guarantee the dataset sizes are roughly equal to each other. 
    The blue curves show training accuracy, and the red ones show validation accuracy. There is a grokking phenomenon in all figures. However, as $k$ increases, the grokking phenomenon becomes weak. See explanation in Section~\ref{sec:exp}. 
    }
    \label{fig:grok}
\end{figure*}

\subsection{Grokking under Different \texorpdfstring{$k$}{} }\label{sec:exp_grok}
To support the importance of our data setting, we study the grokking phenomenon in our data distribution. 
Following the experiments' protocol in {\cite{pbe+22}}, we show there is the grokking phenomenon under different $k$. We train two-layer transformers with $m = 160$ attention heads and hidden dimension as 128 on $k = 2, 3, 4, 5$-sum mod-$p = 97, 31, 11, 5$ addition dataset with 50\% of the data in training. We use different $p$
to guarantee the dataset sizes are roughly equal to each other. 
Figure~\ref{fig:grok} shows that the grokking weakens as the number of $k$ increases, which is consistent with our analysis.
When $k$ increases, the function class will become more complicated, as we may need more neurons to achieve the max-margin solution. Thus, we use our Theorem~\ref{thm:main_k:informal} as a metric to measure the data complexity.
It implies that when the ground-truth function class becomes ``complicated'', the transformers need to train more steps to fit the training datasets, and the generalization tends to be better. 
Brilliant recent works by~\cite{lyu2023dichotomy,kbgp23} argue that, during learning, the network will be first in the lazy training/NTK regime and then transfer to the rich/feature learning regime sharply, leading to a grokking phenomenon. 
We use learning steps required for regime switch as a metric of grokking strength.

\textbf{``Underfitting'' in NTK but ``overfitting'' in Feature Learning. } 
NTK is a notorious overparameterized regime, which probably needs a much larger number of neurons than our max-margin convergence case, i.e., much larger than $\Omega(2^{2k})$ in Theorem~\ref{thm:main_k:informal}. 
Thus, under the fixed $m$ and increasing $k$, the model may easily escape the NTK regime, or there is no longer an NTK regime. Thus, we will see a weaker grokking phenomenon as the learning steps needed to transfer from the NTK regime to the feature learning regime become fewer. 
With increasing $k$, the model will have an ``underfitting'' issue in the NTK regime, meaning the model must need feature learning to fit the task but cannot only fit the task by NTK. However, the model still has an ``overfitting'' in the feature learning regime. 

\section{DISCUSSION}

\subsection{Grokking in Transformers} 
The interpretability of grokking in Transformers is explored in \cite{ncl+23}. 
By examining various intermediate states within the residual stream of the Transformer model, it is validated that the model employs Fourier features to tackle the modular addition task. 
However, fully comprehending how the Transformer model and LLMs perform modular addition remains challenging based on the current work, particularly from a theoretical standpoint. We contend that beginning with a simplistic model setup and achieving a thorough and theoretical understanding of how the network utilizes Fourier features to address the problem serves as a valuable starting point and it provides a theoretical understanding of the grokking phenomenon. We believe that further study on Transformers will be an interesting and important future direction.

\subsection{Grokking, Benign Overfitting, and Implicit Bias}  Recently, {\cite{xu2023benign}} connects the grokking phenomenon to benign overfitting~\citep{bartlett2020benign,cao2022benign,tsigler2023benign,frei2022benign,frei2023benign}. It shows how the network undergoes a grokking period from catastrophic to benign overfitting.
{\cite{lyu2023dichotomy,kbgp23}} uses implicit bias~\citep{shn18,gunasekar2018characterizing,ji2019implicit,shah2020pitfalls,moroshko2020implicit,chizat2020implicit,lyu2021gradient,jacot2022implicit,xu2023improving,xsw+24} to explain grokking, where grokking happens if the early phase bias implies an overfitting solution while late phase bias implies a generalizable solution.
The intuition from the benign overfitting and the implicit bias well align with our observation in Section~\ref{sec:exp}.
It is interesting and valuable to rigorously analyze the grokking or emergent ability under different function class complexities, e.g., Eq~\eqref{eq:data}. We leave this challenge problem as a future work.

\subsection{High Order Correlation Attention} 
{\cite{sanford2023representational,as23,alman2023capture,lssz24_tat,lls+24_tensor,zly+25}} state that, when $k=3$, $a_1+a_2+a_3 \bmod p$ is hard to be captured by traditional attention. Thus, they introduce high-order attention to capture high-order correlation from the input sequence. However, in Section~\ref{sec:exp}, we show that one-layer transformers have a strong learning ability and can successfully learn modular arithmetic tasks even when $k=5$. This implies that the traditional attention may be more powerful than we expect.

\subsection{Connection to Parity and SQ Hardness}
If we let $p=2$, then $(a_1+\dots+a_k) \bmod p$ will degenerate to parity function, i.e., $b_1, \dots, b_k \in \{\pm 1\}$ and determining $\prod_{i=1}^k b_i$. Parity functions serve as a fundamental set of learning challenges in computational learning theory, often used to demonstrate computational obstacles~\citep{shalev2017failures}.
In particular, $(n, k)$-sparse parity problem is notorious hard to learn, i.e.,  Statistical Query (SQ) hardness~\citep{blum1994weakly}. {\cite{dm20}} showed that one-hidden layer networks need an $\Omega(\exp(k))$ number of neurons or an $\Omega(\exp(k))$ number of training steps to successfully learn it by SGD. In our work, we are studying Eq.~\eqref{eq:nn}, which is a more general function than parity and indeed is a learning hardness. Our Theorem~\ref{thm:main_k:informal} states that we need $\Omega(\exp(k))$ number of neurons to represent the maximum-margin solution, which well aligns with existing works. Our experiential results in Section~\ref{sec:exp} are also consistent. Hence, our modular addition involving $k$ inputs function class is a good data model to analyze and test the model learning ability, i.e., approximation, optimization, and generalization.

\section{CONCLUSION}
We study neural networks and transformers learning on $(a_1+\dots+a_k) \bmod p$. We theoretically show that networks prefer to learn Fourier circuits. Our experiments on neural networks and transformers support our analysis. Finally, we study the grokking phenomenon under this new data setting.

\section{LIMITATIONS}\label{sec:limitation}
Our work has made progress in exploring how neural networks and Transformers can solve complex mathematical problems such as modular addition operation, but the practical application scope of their conclusions is limited.
On the other hand, we admit that our theorem can provide intuition but cannot fully explain the phenomena shown in Figure~\ref{fig:grok}. 
Thus, we would like to introduce this more general data setting to the community so that we can study and understand grokking in a more broad way.
Studying the relationship between the number of neurons and the grokking strength is interesting and important, and we will leave it as our future work.

\section*{Acknowledgement}
Research is partially supported by the National Science Foundation (NSF) Grants 2008559-IIS, 2023239-DMS, CCF-2046710, and Air Force Grant FA9550-18-1-0166.

\ifdefined\isarxiv

\bibliographystyle{alpha}
\bibliography{ref}
\else
\bibliography{ref}
\bibliographystyle{plainnat} 
\input{20_checklist}
\fi

\newpage
\onecolumn
\appendix

\ifdefined\isarxiv
\begin{center}
	\textbf{\LARGE Appendix }
\end{center}
\else
\aistatstitle{Fourier Circuits in Neural Networks and Transformers: A Case Study of Modular Arithmetic with Multiple Inputs: \\
Supplementary Materials}

{\hypersetup{linkcolor=black}
\tableofcontents
\bigbreak
\bigbreak
}
\fi

\paragraph{Roadmap.}
In Section~\ref{sec:impact}, we discuss the societal impacts of our work.
In Section~\ref{app:def}, we introduce some definitions that will be used in the proof. In Section~\ref{app:tools}, we introduce some auxiliary lemma from previous work that we need. In Section~\ref{app:single}, Section~\ref{app:construct}, Section~\ref{app:frequency}, Section~\ref{app:main}, we provide the proof of our Lemmas and our main results. In particular, we provide two versions of proof (1) $k=3$ and (2) general $k \ge 3$. We use $k=3$ version to illustrate our proof intuition and then extend our proof to the general $k$ version. Finally, in Section~\ref{app:exp}, we provide more experimental results and implementation details.

\section{Societal Impact}\label{sec:impact}
Our work aims to understand the potential of large language models in mathematical reasoning and modular arithmetic. Our paper is purely theoretical and empirical in nature (mathematics problem) and thus we foresee no immediate negative ethical impact.

We propose that neural networks and transformers prefer to learn Fourier circuits when training on modular addition involving $k$ inputs under SGD, which may have a positive impact on the machine learning community. We hope our work will inspire effective algorithm design and promote a better understanding of large language models learning mechanisms.

\section{More Notations and Definitions}\label{app:def}

We use $\i$ to denote $\sqrt{-1}$.
Let $z = a + \i b$ denote a complex number where $a$ and $b$ are real numbers. Then we have $\ov{z} = a - \i b$ and $|z| := \sqrt{a^2 + b^2}$.

For any positive integer $n$, we use $[n]$ to denote set $\{1,2,\cdots,n\}$. We use $\E[]$ to denote expectation. We use $\Pr[]$ to denote probability. We use $z^\top$ to denote the transpose of a vector $z$. 

Considering a vector $z$, we denote the $\ell_2$ norm as  $\|z\|_2:=( \sum_{i=1}^n z_i^2 )^{1/2}$. We denote the $\ell_1$ norm as $\|z\|_1:=\sum_{i=1}^n |z_i| $. The number of non-zero entries in vector $z$ is defined as $\|z\|_0$.  $\| z \|_{\infty}$ is  defined as   $\max_{i \in [n]} |z_i|$.

We define the vector norm and matrix norm as the following.
\begin{definition}[$L_b$ (vector) norm]
Given a vector $v \in \R^n$ and $b \geq 1$, we have 
$
\|v\|_b := (\sum_{i=1}^n |v_{i}|^b)^{1/b}.
$
\end{definition}
\begin{definition}[$L_{a,b}$ (matrix) norm]\label{def:norm}
The $L_{a,b}$ norm of a network with parameters $\theta = \{\theta_i \}_{i=1}^m$ is $\|\theta\|_{a,b}:= (\sum_{i=1}^m \|\theta_i \|_a^b)^{1/b}$, where $\theta_i $ denotes the vector of concatenated parameters for a single neuron. 
\end{definition}

We define our regularized training objective function.
\begin{definition}
Let $l$ be the cross-entropy loss.  Our regularized training objective function is 
\begin{align*}
\mathcal{L}_\lambda(\theta):=\frac{1}{|D_p|} \sum_{(x, y) \in D_p} l(f(\theta, x), y)+\lambda\|\theta\|_{2,k+1}.
\end{align*}
\end{definition}

\begin{definition}
We define $\Theta^*: = \arg \max _{\theta \in \Theta} h(\theta)$.
\end{definition}

Finally, let $\Omega := \mathbb{R}^{p \times (k+1)}$ denote the domain of each $\theta_i$, and let $\Omega'$ be a subset of $\Omega$. We say the parameter set $\theta = \{\theta_1, \ldots, \theta_m \}$ has directional support on $\Omega'$, if for every $i \in [m]$, either $\theta_i = 0$ or there exists $\alpha_i > 0$ such that $\alpha_i \theta_i \in \Omega'$.
\section{Tools from Previous Work}\label{app:tools}
 
Section~\ref{app:tools:implying_single_combined_neurons} states that we can use the single neuron level optimization to get the maximum-margin network. Section~\ref{app:tools:maximum_margin_for_multiclass} introduces the maximum-margin for multi-class. 

\subsection{Tools from Previous Work: Implying Single/Combined Neurons}\label{app:tools:implying_single_combined_neurons}

\begin{lemma}[Lemma 5 in page 8 in \cite{meo+23}]\label{lem:combine}
If the following conditions hold
\begin{itemize}
    \item Given $\Theta := \{\theta:\|\theta\|_{a, b} \leq 1 \}$.
    \item Given $\Theta_q^{\prime *} := \arg \max _{\theta \in \Theta} \E_{(x, y) \sim q} [g^{\prime}(\theta, x, y) ]$.
    \item Given $\Omega := \{\theta_i:\|\theta_i\|_a \leq 1 \}$. 
    \item Given $\Omega_q^{\prime *} :=\arg \max _{\theta_i \in \Omega} \E_{(x, y) \sim q} [\psi^{\prime}(\theta, x, y) ]$.
\end{itemize}
Then:

\begin{itemize}
    \item 
    Let $\theta \in \Theta_q^{\prime *}$. We have $\theta$ only has directional support on $\Omega_q^{\prime *}$.
    \item 
    Given $\theta_1^*, \ldots, \theta_m^* \in \Omega_q^{\prime *}$, we have for any set of neuron scalars where $\sum_{i=1}^m \alpha_i^\nu=1, \alpha_i \geq 0$,  the weights $\theta= \{\alpha_i \theta_i^* \}_{i=1}^m$ is in $\Theta_q^{\prime *}$.
\end{itemize}

\end{lemma}
Given $q^*$, then we can get the $\theta^*$ satisfying
\begin{align}\label{eq:definition_theta_star_prime}
     \theta^* \in \underset{\theta \in \Theta}{\arg \min } \underset{(x, y) \sim q^*}{\E}[g^{\prime}(\theta, x, y)].
\end{align}

\subsection{Tools from Previous Work: Maximum Margin for Multi-Class}\label{app:tools:maximum_margin_for_multiclass}

\begin{lemma}[Lemma 6 in page 8 in \cite{meo+23}]\label{lemma:multi-class}
If the following conditions hold
\begin{itemize}
\item Given $\Theta= \{\theta:\|\theta\|_{a, b} \leq 1 \}$ and $\Theta_q^{\prime *}=\arg \max _{\theta \in \Theta} \E_{(x, y) \sim q} [g^{\prime}(\theta, x, y) ]$. 
\item  Given $\Omega= \{\theta_i:\|\theta_i\|_a \leq 1 \}$ and $\Omega_q^{\prime *}=\arg \max _{\theta_i \in \Omega} \E_{(x, y) \sim q} [\psi^{\prime}(\theta, x, y) ]$.
\item Suppose that  $\exists \{\theta^*, q^* \}$ such that Equations~\eqref{eq:definition_q_star} and~\eqref{eq:definition_theta_star_prime}, and~\ref{lem:translation_property} holds.
\end{itemize}

Then, we can show:
\begin{itemize}
    \item $\theta^* \in \arg \max _{\theta \in \Theta} g(\theta, x, y)$
    \item $\forall \hat{\theta} \in \arg \max _{\theta \in \Theta} \min _{(x, y) \in D} g(\theta, x, y)$ the below properties hold:
    
    - $\hat{\theta}$ only has directional support on $\Omega_{q^*}^{\prime *}$.
    
    - $\forall (x, y) \in \operatorname{spt} (q^* ), f(\hat{\theta}, x, y)-\max _{y^{\prime} \in \mathcal{Y} \backslash\{y\}} f (\hat{\theta}, x, y^{\prime} )=\gamma^*$.
\end{itemize}

\end{lemma}

\begin{condition}[Condition C.1 in page 8 in \cite{meo+23}]

    We have $g^{\prime} (\theta^*, x, y )=g (\theta^*, x, y )$ for all $(x, y) \in \operatorname{spt} (q^* )$, where $\operatorname{spt}$ is the support. It means: 
    $
        \{y' \in \mathcal{Y} \backslash\{y\}: \tau(x, y)[y']>0\} \subseteq \underset{y' \in \mathcal{Y} \backslash\{y\}}{\arg \max } f (\theta^*, x )[y'].
    $
\end{condition}
\section{Class-weighted Max-margin Solution of Single Neuron}\label{app:single}

Section~\ref{app:single:definitions} introduces some definitions. Section~\ref{app:single:transfer_discrete_fourier_space} shows how we transfer the problem to discrete Fourier space. Section~\ref{app:single:get_solution_set} proposes the weighted margin of the single neuron. Section~\ref{app:single:transfer_discrete_fourier_space_k} shows how we transfer the problem to discrete Fourier space for general $k$ version. Section~\ref{app:single:get_solution_set_k} provides the solution set for general $k$ version and the maximum weighted margin for a single neuron. 

\subsection{Definitions}\label{app:single:definitions}

\begin{definition}\label{def:eta_3}
When $k=3$, let
\begin{align*}
    \eta_{u_1,u_2,u_3, w}(\delta) := \E_{a_1, a_2, a_3}[(u_1(a_1)+u_2(a_2)+u_3(a_3))^3 w(a_1+a_2+a_3-\delta)].
\end{align*}
\end{definition}

\begin{definition}\label{def:margin_max_problem}
Let $\eta$ be defined in Definition~\ref{def:eta_3}. When $k=3$, provided the following conditions are met
\begin{itemize}
    \item We denote ${\cal B}$ as  the ball that $\| u_1 \|^2  + \| u_2 \|^2 + \| u_3 \|^2 + \| w \|^2 \leq 1$.
\end{itemize}

We define 
\begin{align*}
     \Omega_q^{\prime *} = & \underset{u_1,u_2,u_3,w \in {\cal B} }{\arg \max } (\eta_{u_1,u_2,u_3, w}(0) - \E_{\delta \neq 0} [\eta_{u_1,u_2,u_3,w}(\delta)]). 
\end{align*}
\end{definition}

\subsection{Transfer to Discrete Fourier Space}\label{app:single:transfer_discrete_fourier_space}

The goal of this section is to prove the following Lemma,
\begin{lemma}\label{lemma:fourier_space}
When $k=3$, provided the following conditions are met
\begin{itemize}
    \item We denote ${\cal B}$ as the ball that $\| u_1 \|^2 + \|u_2 \|^2 + \| u_3 \|^2 + \| w \|^2 \leq 1$.
    \item We define $\Omega_q^{'*}$  in Definition~\ref{def:margin_max_problem}.
    \item We adopt the uniform class weighting: $\forall c' \neq a_1 + a_2 + a_3, ~~\tau (a_1, a_2, a_3)[c'] := 1/(p - 1)$.
\end{itemize}

We have the following 
    \begin{align*}
        \Omega_q^{\prime *} 
    = & \underset{u_1,u_2,u_3, ,w \in {\cal B} }{\arg \max } \frac{6}{(p-1) p^{3}} \sum_{j \neq 0} \hat{u}_1(j) \hat{u}_2(j) \hat{u}_3(j) \hat{w}(-j).
    \end{align*}
\end{lemma}

\begin{proof}
 
We have 
\begin{align*}
\eta_{u_1, u_2, u_3, w}(\delta)= & ~ \E_{a_1, a_2, a_3}[(u_1(a_1)+u_2(a_2)+u_3(a_3))^3 w(a_1+a_2+a_3-\delta)] \\
= & ~ \E_{a_1, a_2, a_3}[
(u_1(a_1)^3 + 3u_1(a_1)^2 u_2(a_2) + 3u_1(a_1)^2 u_3(a_3) + 3u_1(a_1) u_2(a_2)^2  \\
& ~ + 6u_1(a_1) u_2(a_2) u_3(a_3) + 3u_1(a_1) u_3(a_3)^2 + u_2(a_2)^3 + 3u_2(a_2)^2 u_3(a_3) \\
& ~ + 3u_2(a_2) u_3(a_3)^2 + u_3(a_3)^3) w(a_1 + a_2 + a_3 - \delta)
].
\end{align*}

Recall ${\cal B}$ is defined as Lemma Statement.

The goal is to solve the following mean margin maximization problem:
\begin{align}\label{eq:margin_max_problem}
    & ~ \underset{u_1,u_2,u_3, w \in {\cal B} }{\arg \max } (\eta_{u_1,u_2,u_3, w}(0) - \E_{\delta \neq 0} [\eta_{u_1,u_2,u_3, w}(\delta)]) \notag\\ 
    =& ~ \frac{p}{p-1} (\eta_{u_1,u_2,u_3, w}(0)-\E_\delta [\eta_{u_1,u_2,u_3, w}(\delta)]),
\end{align}
where the equation follows $\tau (a_1, a_2, a_3)[c'] := 1/(p - 1) ~~ \forall c' \neq a_1 + a_2 + a_3$ and $1-{1\over p-1} = \frac{p}{p-1}$.

First, note that  
\begin{align*}
& ~ \E_{a_1,a_2,a_3} [ u_1(a_1)^3 w(a_1+a_2+a_3 - \delta) ] \\
= & ~ \E_{a_1} [ u_1(a_1)^3 \E_{a_2,a_3} [ w(a_1+a_2+a_3 - \delta) ] ] \\
= & ~ 0,
\end{align*}
where the first step follows from taking out the $u_1(a_1)$ from the expectation for  $a_2,a_3$, and the last step is from the definition of $w$.

Similarly for the $u_2(a_2)^{3}$,$u_3(a_3)^{3}$ components of $\eta$, they equal to $0$.

Note that 
\begin{align*}
& ~ \E_{a_1,a_2,a_3} [ u_1(a_1)^2 u_2(a_2) w(a_1+a_2+a_3 - \delta) ] \\
= & ~ \E_{a_1} [ u_1(a_1)^2 \E_{a_2} [ u_2(a_2) \E_{a_3}[ w(a_1+a_2+a_3 - \delta) ] ] ] \\
= & ~ 0,
\end{align*}
where the first step follows from simple algebra and the last step comes from the definition of $w$. 

Similarly for the $ u_1(a_1)^2 u_3(a_3)$, $ u_2(a_2)^2 u_1(a_1)$, $ u_2(a_2)^2 u_3(a_3)$, $ u_3(a_3)^2 u_1(a_1)$, $ u_3(a_3)^2 u_2(a_2)$ components of $\eta$, they equal to $0$.

Hence, we can rewrite Eq.~\eqref{eq:margin_max_problem} as

\begin{align*}
   \underset{u_1,u_2,u_3, w \in {\cal B} }{\arg \max } \frac{6 p}{p-1} (\tilde{\eta}_{u_1,u_2,u_3, w} (0) - \E_{\delta}[\tilde{\eta}_{u_1,u_2,u_3, w} (\delta)]), 
\end{align*}

where

\begin{align*}
    \tilde{\eta}_{u_1,u_2,u_3, w}(\delta): = \E_{a_1,a_2,a_3} [ u_1(a_1) u_2(a_2) u_3(a_3) w(a_1+a_2+a_3 - \delta) ].
\end{align*}

Let $\rho:=e^{2 \pi \i / p}$, and let $\hat{u}_1, \hat{u}_2, \hat{u}_3, \hat{w}$ be the DFT of $u_1,u_2,u_3$, and $w$ respectively:
\begin{align*}
    & ~ \tilde{\eta}_{u_1,u_2,u_3, w} (\delta) \\ 
    = & ~\E_{a_1, a_2, a_3} [(\frac{1}{p} \sum_{j_1=0}^{p-1} \hat{u}_1(j_1) \rho^{j_1 a_1})(\frac{1}{p} \sum_{j_2=0}^{p-1} \hat{u}_2(j_2) \rho^{j_2 a_2})(\frac{1}{p} \sum_{j_3=0}^{p-1} \hat{u}_3(j_3) \rho^{j_3 a_3}) (\frac{1}{p} \sum_{j_4=0}^{p-1} \hat{w}(j_4) \rho^{j_4(a_1+a_2+a_3-\delta)})] \\
    = & ~ \frac{1}{p^{4}} \sum_{j_1, j_2, j_3, j_4} \hat{u}_1(j_1) \hat{u}_2(j_2) \hat{u}_3(j_3) \hat{w}(j_4) \rho^{-j_4 \delta} (\E_{a_1} [\rho^{(j_1+j_4) a_1}])(\E_{a_2}[ \rho^{(j_2+j_4) a_2}])( \E_{a_3}[\rho^{(j_3+j_4) a_3}]) \\
    = & ~ \frac{1}{p^{4}} \sum_{j} \hat{u}_1(j) \hat{u}_2(j) \hat{u}_3(j) \hat{w}(-j) \rho^{j \delta} \quad 
\end{align*}
where the first step follows from $\rho:=e^{2 \pi \i / p}$ and $\hat{u}_1, \hat{u}_2,\hat{u}_3, \hat{w}$ are the discrete Fourier transforms of $u_1,u_2,u_3,w$, the second step comes from simple algebra, the last step is from that only terms where $ j_1+j_4=j_2+j_4=j_3+j_4=0 $ survive.

Hence, we need to maximize

\begin{align}\label{eq:discrete_fourier_transforms_uvw}
& ~ \frac{6 p}{p-1}(\tilde{\eta}_{u_1,u_2,u_3, w}(0)-\E_{\delta}[\tilde{\eta}_{u_1,u_2,u_3, w}(\delta)]) \notag\\
= & ~ \frac{6 p}{p-1}(\frac{1}{p^{4}} \sum_{j} \hat{u}_1(j) \hat{u}_2(j) \hat{u}_3(j)\hat{w}(-j)-\frac{1}{p^{4}} \sum_{j} \hat{u}_1(j) \hat{u}_2(j) \hat{u}_3(j)\hat{w}(-j)(\E_{\delta} \rho^{j \delta})) \notag \\
= & ~ \frac{6}{(p-1) p^{3}} \sum_{j \neq 0} \hat{u}_1(j) \hat{u}_2(j)\hat{u}_3(j) \hat{w}(-j). \notag \\
= & ~ \frac{6}{(p-1) p^{3}} \sum_{j \in [ -(p-1)/2, +(p-1)/2 ] \backslash 0 } \hat{u}_1(j) \hat{u}_2(j)\hat{u}_3(j) \hat{w}(-j). 
\end{align}
where the first step is from $\tilde{\eta}_{u_1,u_2,u_3, w}(\delta)$ definition, the second step is from $\E_{\delta} \rho^{j \delta} = 0$ when $j\neq 0$, and the last step follows from simple algebra. 

\end{proof}

\subsection{Get Solution Set}\label{app:single:get_solution_set}

\begin{lemma}\label{lem:margin_soln}
When $k=3$, provided the following conditions are met
\begin{itemize}
    \item We denote ${\cal B}$ as the ball that $\| u_1 \|^2 + \|u_2 \|^2 + \| u_3 \|^2 + \| w \|^2 \leq 1$.
    \item We define $\Omega_q^{'*}$ in  Definition~\ref{def:margin_max_problem}.
    \item We adopt the uniform class weighting: $\forall c' \neq a_1 + a_2 + a_3, ~~\tau (a_1, a_2, a_3)[c'] := 1/(p - 1)$.
    \item For any $\zeta \in \{1, \ldots, \frac{p-1}{2}\}$, there exists a scaling constant $\beta \in \mathbb{R}$ and 
    \begin{align*}
    u_1(a_1) &= \beta \cdot \cos ( \theta_{u_1}^* + 2 \pi \zeta a_1 /p ) \\
    u_2(a_2) & =\beta \cdot \cos (\theta_{u_2}^*+2 \pi \zeta a_2 / p) \\
    u_3(a_3) & =\beta \cdot \cos (\theta_{u_3}^*+2 \pi \zeta a_3 / p) \\
    w(c) & =\beta \cdot \cos (\theta_w^*+2 \pi \zeta c / p)
\end{align*}
where $\theta_{u_1}^*, \theta_{u_2}^*, \theta_{u_3}^*, \theta_w^* \in \R$ are some phase offsets satisfying $\theta_{u_1}^*+\theta_{u_2}^*+\theta_{u_3}^*=\theta_w^*$.
\end{itemize}

Then, we have the following 
    \begin{align*}
        \Omega_q^{\prime *}  
    = & \{(u_1, u_2, u_3, w)\},
    \end{align*}
and 
\begin{align*}
    \underset{u_1,u_2,u_3,w \in {\cal B}}{\max } (\eta_{u_1,u_2,u_3, w}(0) - \E_{\delta \neq 0} [\eta_{u_1,u_2,u_3,w}(\delta)])= \frac{3}{16} \cdot \frac{1}{p(p-1)}.
\end{align*}

\end{lemma}
\begin{proof}
By Lemma~\ref{lemma:fourier_space}, we only need to maximize Equation~\eqref{eq:discrete_fourier_transforms_uvw}.

Thus, the mass of $\hat{u}_1, \hat{u}_2,  \hat{u}_3$, and $\hat{w}$ must be concentrated on the same frequencies. For all $j \in \mathbb{Z}_{p}$, we have
\begin{align}\label{eq:u_1_ov_u_2_ov_u_3_ov_w_ov}
    \hat{u}_1(-j)=\overline{\hat{u}_1(j)}, \hat{u}_2(-j)=\overline{\hat{u}_2(j)}, 
    \hat{u}_3(-j)=\overline{\hat{u}_3(j)}, \hat{w}(-j)=\overline{\hat{w}(j)}
\end{align}
 as  $u_1,u_2,u_3, w$ are real-valued.

For all $j \in \mathbb{Z}_{p}$ and for $u_1,u_2,u_3, w$, we denote $\theta_{u_1}, \theta_{u_2}, \theta_{u_3}, \theta_{w} \in[0,2 \pi)^{p}$ as  their phase, e.g.:
\begin{align*}
    \hat{u}_1(j)=|\hat{u}_1(j)| \exp (\i \theta_{u_1}(j)).
\end{align*}

Consider the odd $p$, Equation~\eqref{eq:discrete_fourier_transforms_uvw} becomes:

\begin{align*}
\eqref{eq:discrete_fourier_transforms_uvw} = & ~ \frac{6}{(p-1) p^{3}} \sum_{j \in [ -(p-1)/2, +(p-1)/2 ] \backslash 0 } \hat{u}_1(j) \hat{u}_2(j)\hat{u}_3(j) \hat{w}(-j) \\ 
= & ~ \frac{6}{(p-1) p^{3}} \sum_{j=1}^{(p-1) / 2}( \hat{u}_1(j) \hat{u}_2(j) \hat{u}_3(j) \overline{\hat{w}(j)} + \overline{\hat{u}_1(j) \hat{u}_2(j)\hat{u}_3(j)} \hat{w}(j) ) \\
= & ~ \frac{6}{(p-1) p^{3}} \sum_{j=1}^{(p-1) / 2}| \hat{u}_1(j) || \hat{u}_2(j) | | \hat{u}_3(j) || \hat{w}(j)| \cdot \\
& ~ \big( \exp (\i(\theta_{u_1}(j) + \theta_{u_2}(j) + \theta_{u_3}(j) - \theta_{w}(j))   + \exp (\i(-\theta_{u_1}(j) - \theta_{u_2}(j) -\theta_{u_3}(j) + \theta_{w}(j)) \big) \\
= & ~ \frac{12}{(p-1) p^{3}} \sum_{j=1}^{(p-1) / 2}| \hat{u}_1(j) || \hat{u}_2(j) | | \hat{u}_3(j) || \hat{w}(j)| \cos (\theta_{u_1}(j)+\theta_{u_2}(j) + \theta_{u_3}(j)-\theta_{w}(j)) . 
\end{align*}
where the first step comes from definition \eqref{eq:discrete_fourier_transforms_uvw}, the second step follows from Eq.~\eqref{eq:u_1_ov_u_2_ov_u_3_ov_w_ov}, the third step comes from $\hat{u}_1(-j)=\overline{\hat{u}_1(j)}$ and $\hat{u}_1(j)=|\hat{u}_1(j)| \exp (\i \theta_{u_1}(j))$, the last step follow from Euler's formula. 

Thus, we need to optimize: 
\begin{align}\label{eq:optimize_discrete_fourier_transforms_uvw}
    \max_{u_1,u_2,u_3,w \in {\cal B}} \frac{12}{(p-1) p^{3}} \sum_{j=1}^{(p-1) / 2}| \hat{u}_1(j) || \hat{u}_2(j) | | \hat{u}_3(j) || \hat{w}(j)| \cos (\theta_{u_1}(j)+\theta_{u_2}(j) + \theta_{u_3}(j)-\theta_{w}(j)) .
\end{align}

 The norm constraint  $\| u_1 \|^2 + \|u_2 \|^2 + \| u_3 \|^2 + \| w \|^2 \leq 1$ is equivalent to
\begin{align*}
    \|\hat{u}_1\|^{2}+\|\hat{u}_2\|^{2}+ \|\hat{u}_3\|^{2}+\|\hat{w}\|^{2} \leq p
\end{align*}
by using Plancherel's theorem.
Thus, we need to select them in such a way that
\begin{align*}
\theta_{u_1}(j)+\theta_{u_2}(j)+\theta_{u_3}(j)=\theta_{w}(j),
\end{align*}
ensuring that, for each $j$, the expression $\cos(\theta_{u_1}(j) + \theta_{u_2}(j) + \theta_{u_3}(j) - \theta_{w}(j)) = 1$ is maximized, except in cases where the scalar of the $j$-th term is $0$.

This further simplifies the problem to:
\begin{align}\label{eq:reduce_optimize_discrete_fourier_transforms_uvw}
    \max _{|\hat{u}_1| ,|\hat{u}_2| ,|\hat{u}_3| ,|\hat{w}| : \|\hat{u}_1\|^{2} + \|\hat{u}_2\|^{2} +\|\hat{u}_3\|^{2} + \|\hat{w}\|^{2} \leq p} \frac{12}{(p-1) p^{3}} \sum_{j=1}^{(p-1) / 2}|\hat{u}_1(j) || \hat{u}_2(j) || \hat{u}_3(j) || \hat{w}(j)|.
\end{align}

Then, we have
\begin{align}\label{eq:inequality_quadratic_geometric}
    |\hat{u}_1(j)||\hat{u}_2(j)||\hat{u}_3(j)||\hat{w}(j)| \leq(\frac{1}{4} \cdot ( |\hat{u}_1(j)|^{2} + |\hat{u}_2(j)|^{2}+ |\hat{u}_3(j)|^{2} + |\hat{w}(j)|^{2} ) )^{2}.
\end{align}
where the first step is from inequality of quadratic and geometric means.

We define $z:\{1, \ldots, \frac{p-1}{2}\} \rightarrow \R$ as 
\begin{align*}
z(j) := |\hat{u}_1(j)|^{2}+|\hat{u}_2(j)|^{2}+|\hat{u}_3(j)|^{2}+|\hat{w}(j)|^{2}.
\end{align*}

We need to have $\hat{u}_1(0)=\hat{u}_2(0)=\hat{u}_3(0)=\hat{w}(0)=0$. Then, the upper-bound of  Eq.~\eqref{eq:reduce_optimize_discrete_fourier_transforms_uvw} is given by
\begin{align*}
    & ~ \frac{12}{(p-1) p^{3}} \cdot \max _{\|z\|_{1} \leq \frac{p}{2}} \sum_{j=1}^{(p-1) / 2}(\frac{z(j)}{4})^{2} \\
    = & ~ \frac{3}{4(p-1) p^{3}} \cdot \max _{\|z\|_{1} \leq \frac{p}{2}} \sum_{j=1}^{(p-1) / 2} z(j)^2  \\
    = & ~ \frac{3}{4(p-1) p^{3}}\cdot \max _{\|z\|_{1} \leq \frac{p}{2}}\|z\|_{2}^2 \\
    \leq & ~ \frac{3}{4(p-1) p^{3}} \cdot \frac{p^2}{4} \\
    = & ~ \frac{3}{16} \cdot \frac{1}{p(p-1)},
\end{align*}
where the first step follows from simple algebra, the second step comes from the definition of $L_2$ norm, the third step follows from $\|z\|_{2} \leq \| z \|_1 \leq \frac{p}{2}$, the last step comes from simple algebra.  

For the inequality of quadratic and geometric means,  Eq.~\eqref{eq:inequality_quadratic_geometric} becomes equality when $|\hat{u}_1(j)|=|\hat{u}_2(j)|=|\hat{u}_3(j)|=|\hat{w}(j)|$. To achieve $\|z\|_{2}=\frac{p}{2}$,  all the mass must be placed on a single frequency. Hence, for some frequency $\zeta \in\{1, \ldots, \frac{p-1}{2}\}$, to achieve the upper bound, we have:

\begin{align}\label{eq:hat_u_same}
    |\hat{u}_1(j)| = |\hat{u}_2(j)| = |\hat{u}_3(j)| = |\hat{w}(j)| = \Big\{\begin{array}{ll}
    \sqrt{p / 8} & \text { if } j= \pm \zeta \\
    0 & \text { otherwise }
    \end{array} ,
\end{align}

In this case,  Eq.~\eqref{eq:reduce_optimize_discrete_fourier_transforms_uvw} matches the upper bound.

\begin{align*}
    \frac{12}{(p-1) p^{3}} \cdot(\frac{p}{8})^{2}=\frac{3}{16} \cdot \frac{1}{p(p-1)},
\end{align*}
where the first step is by simple algebra. Hence, the maximum-margin is $\frac{3}{16} \cdot \frac{1}{p(p-1)}$.

Let $\theta_{u_1}^{*} : =\theta_{u_1}(\zeta)$. Combining all the results, up to scaling, it is established that all neurons which maximize the expected class-weighted margin conform to the form:
\begin{align*}
    u_1(a_1) 
    = & ~ \frac{1}{p} \sum_{j=0}^{p-1} \hat{u}_1(j) \rho^{j a_1} \\
  = & ~ \frac{1}{p} \cdot (\hat{u}_1(\zeta) \rho^{\zeta a_1}+\hat{u}_1(-\zeta) \rho^{-\zeta a_1}) \\
  = & ~ \frac{1}{p} \cdot ( \sqrt{\frac{p}{8}} \exp (\i \theta_{u_1}^{*}) \rho^{\zeta a_1}+\sqrt{\frac{p}{8}} \exp (-\i \theta_{u_1}^{*}) \rho^{-\zeta a_1} ) \\
  = & ~ \sqrt{\frac{1}{2 p}} \cos (\theta_{u_1}^{*}+2 \pi \zeta a_1 / p),
\end{align*}
where the first step comes from the definition of $u_1(a)$, the second step and third step follow from Eq.~\eqref{eq:hat_u_same}, the last step follows from Euler’s formula. 

Similarly, 
\begin{align*}
    u_2(a_2) = & ~ \sqrt{\frac{1}{2 p}} \cos (\theta_{u_2}^{*} + 2 \pi \zeta a_2 / p) \\
    u_3(a_3) = & ~ \sqrt{\frac{1}{2 p}} \cos (\theta_{u_3}^{*} + 2 \pi \zeta a_3 / p) \\
    w(c) = & ~ \sqrt{\frac{1}{2 p}} \cos (\theta_{w}^{*} + 2 \pi \zeta c / p),
\end{align*}
for some phase offsets $\theta_{u_1}^{*}, \theta_{u_2}^{*}, \theta_{u_3}^{*}, \theta_{w}^{*} \in \R$ satisfying $\theta_{u_1}^{*}+\theta_{u_2}^{*}+\theta_{u_3}^{*}=\theta_{w}^{*}$ and some $\zeta \in \mathbb{Z}_{p} \backslash\{0\}$, where $u_1,u_2,u_3$, and $w$ shares the same $\zeta$.
    
\end{proof}

\subsection{Transfer to Discrete Fourier Space for General \texorpdfstring{$k$}{} Version}\label{app:single:transfer_discrete_fourier_space_k}

\begin{definition}\label{def:eta_k}
Let
\begin{align*}
    \eta_{u_1,\dots,u_k, w}(\delta) := \E_{a_1, \dots, a_k}[(u_1(a_1)+\dots+u_k(a_k))^k w(a_1+\dots+a_k-\delta)].
\end{align*}
\end{definition}

\begin{definition}\label{def:margin_max_problem-k}
Let $\eta$ be defined in Definition~\ref{def:eta_k}. Provided the following conditions are met
\begin{itemize}
    \item We denote ${\cal B}$ as the ball that $\| u_1 \|^2  + \dots + \| u_k \|^2 + \| w \|^2 \leq 1$.
\end{itemize}

We define 
\begin{align*}
     \Omega_q^{\prime *} = & \underset{u_1,\dots,u_k,w \in {\cal B} }{\arg \max } (\eta_{u_1,\dots,u_k, w}(0) - \E_{\delta \neq 0} [\eta_{u_1,\dots,u_k,w}(\delta)]). 
\end{align*}
\end{definition}

The goal of this section is to prove the following Lemma,
\begin{lemma}\label{lemma:fourier_space-k}
Provided the following conditions are met
\begin{itemize}
    \item Let ${\cal B}$ denote the ball that $\| u_1 \|^2 + \dots + \| u_k \|^2 + \| w \|^2 \leq 1$.
    \item We define $\Omega_q^{'*}$ in Definition~\ref{def:margin_max_problem-k}.
    \item We adopt the uniform class weighting: $\forall c' \neq a_1 + \dots + a_k, ~~\tau (a_1, \dots, a_k)[c'] := 1/(p - 1)$.
\end{itemize}

We have the following 
    \begin{align*}
        \Omega_q^{\prime *} 
    = & \underset{u_1,\dots,u_k,w \in {\cal B} }{\arg \max } \frac{k!}{(p-1) p^{k}} \sum_{j \neq 0} \hat{w}(-j) \prod_{i=1}^k \hat{u}_i(j).
    \end{align*}
\end{lemma}

\begin{proof}
 
We have 
\begin{align*}
\eta_{u_1, \dots, u_k, w}(\delta)= & ~ \E_{a_1, \dots, a_k}[(u_1(a_1)+\dots+u_k(a_k))^k w(a_1+\dots+a_k-\delta)].
\end{align*}

The goal is to solve the following mean margin maximization problem:
\begin{align}\label{eq:margin_max_problem-k}
    & ~ \underset{u_1,\dots,u_k, w \in {\cal B} }{\arg \max } (\eta_{u_1,\dots,u_k, w}(0) - \E_{\delta \neq 0} [\eta_{u_1,\dots,u_k, w}(\delta)]) \notag\\ 
    =& ~ \frac{p}{p-1} (\eta_{u_1,\dots,u_k, w}(0)-\E_\delta [\eta_{u_1,\dots,u_k, w}(\delta)]),
\end{align}
where the equation follows $\tau (a_1, \dots, a_k)[c'] := 1/(p - 1) ~~ \forall c' \neq a_1 + \dots + a_k$ and $1-{1\over p-1} = \frac{p}{p-1}$.
 
We note that all terms are zero rather than $w(\cdot)\cdot\prod_{i=1}^k u_i(a_i)$. 

Hence, we can rewrite Eq.~\eqref{eq:margin_max_problem-k} as

\begin{align*}
   \underset{u_1,\dots,u_k, w \in {\cal B} }{\arg \max } \frac{k! p}{p-1} (\tilde{\eta}_{u_1,\dots,u_k, w} (0) - \E_{\delta}[\tilde{\eta}_{u_1,\dots,u_k, w} (\delta)]), 
\end{align*}

where
\begin{align*}
    \tilde{\eta}_{u_1,\dots,u_k, w}(\delta): = \E_{a_1,\dots,a_k} [ w(a_1+\dots+a_k - \delta) \prod_{i=1}^k u_i(a_i)  ].
\end{align*}

Let $\rho:=e^{2 \pi \i / p}$, and $\hat{u}_1, \dots, \hat{u}_k, \hat{w}$ denote the discrete Fourier transforms of $u_1,\dots,u_k$, and $w$ respectively. We have
\begin{align*}
    & ~ \tilde{\eta}_{u_1,\dots,u_k, w} (\delta) = \frac{1}{p^{k+1}} \sum_{j=0}^{p-1} \hat{w}(-j) \rho^{j \delta} \prod_{i=1}^k \hat{u}_i(j)   
\end{align*}
which comes from $\rho:=e^{2 \pi \i / p}$ and $\hat{u}_1, \dots,\hat{u}_k, \hat{w}$ are the discrete Fourier transforms of $u_1,\dots,u_k,w$.

Hence, we need to maximize

\begin{align}\label{eq:discrete_fourier_transforms_uvw-k}
& ~ \frac{k! p}{p-1}(\tilde{\eta}_{u_1,\dots,u_k, w}(0)-\E_{\delta}[\tilde{\eta}_{u_1,\dots,u_k, w}(\delta)]) \notag\\
= & ~ \frac{k! p}{p-1} \cdot \left(\frac{1}{p^{k+1}} \sum_{j=0}^{p-1} \hat{w}(-j)\prod_{i=1}^k \hat{u}_i(j)-\frac{1}{p^{k+1}} \sum_{j=0}^{p-1} \hat{w}(-j)(\E_{\delta} [ \rho^{j \delta} ] )\prod_{i=1}^k \hat{u}_i(j)\right) \notag \\
= & ~ \frac{k!}{(p-1) p^{k}} \sum_{j \neq 0} \hat{w}(-j)\prod_{i=1}^k \hat{u}_i(j). \notag \\
= & ~ \frac{k!}{(p-1) p^{k}} \sum_{j \in [ -(p-1)/2, +(p-1)/2 ] \backslash 0 } \hat{w}(-j)\prod_{i=1}^k \hat{u}_i(j). 
\end{align}
where the first step follows from the definition of $\tilde{\eta}_{u_1,\dots,u_k, w}(\delta)$, the second step follows from $\E_{\delta} [ \rho^{j \delta} ] = 0$ when $j\neq 0$, the last step is from simple algebra. 

\end{proof}

\subsection{Get Solution Set for General \texorpdfstring{$k$}{} Version} \label{app:single:get_solution_set_k}

\begin{lemma}[Formal version of Lemma \ref{lem:margin_soln-k:informal}]\label{lem:margin_soln-k}
Provided the following conditions are met
\begin{itemize}
    \item We denote ${\cal B}$ as the ball that $\| u_1 \|^2 + \dots + \| u_k \|^2 + \| w \|^2 \leq 1$.
    \item Let $\Omega_q^{'*}$ be defined as Definition~\ref{def:margin_max_problem-k}.
    \item We adopt the uniform class weighting: $\forall c' \neq a_1 + \dots + a_k, ~~\tau (a_1, \dots, a_k)[c'] := 1/(p - 1)$.
    \item For any $\zeta \in \{1, \ldots, \frac{p-1}{2}\}$, there exists a scaling constant $\beta \in \mathbb{R}$ and 
    \begin{align*}
    u_1(a_1) &= \beta \cdot \cos ( \theta_{u_1}^* + 2 \pi \zeta a_1 /p ) \\
     u_2(a_2) &= \beta \cdot \cos ( \theta_{u_2}^* + 2 \pi \zeta a_2 /p ) \\
    & \dots \\
    u_k(a_k) & =\beta \cdot \cos (\theta_{u_k}^*+2 \pi \zeta a_k / p) \\
    w(c) & =\beta \cdot \cos (\theta_w^*+2 \pi \zeta c / p)
\end{align*}
where $\theta_{u_1}^*, \dots, \theta_{u_k}^*, \theta_w^* \in \R$ are some phase offsets satisfying  $\theta_{u_1}^*+\dots+\theta_{u_k}^*=\theta_w^*$.
\end{itemize}

Then, we have the following 
    \begin{align*}
        \Omega_q^{\prime *}  
    = & \{(u_1, \dots, u_k, w)\},
    \end{align*}
and 
\begin{align*}
    \underset{u_1,\dots,u_k,w \in {\cal B}}{\max } (\eta_{u_1,\dots,u_k, w}(0) - \E_{\delta \neq 0} [\eta_{u_1,\dots,u_k,w}(\delta)])= \frac{2(k!)}{(2k+2)^{(k+1)/2}(p-1) p^{(k-1)/2}} .
\end{align*}

\end{lemma}
\begin{proof}
By Lemma~\ref{lemma:fourier_space-k}, we only need to maximize Equation~\eqref{eq:discrete_fourier_transforms_uvw-k}.
Thus, the mass of $\hat{u}_1, \dots, \hat{u}_k$, and $\hat{w}$ must be concentrated on the same frequencies. For all $j \in \mathbb{Z}_{p}$, we have
\begin{align}\label{eq:u_1_ov_u_2_ov_u_3_ov_w_ov-k}
    \hat{u}_i(-j)=\overline{\hat{u}_i(j)},  ~~~  \hat{w}(-j)=\overline{\hat{w}(j)}
\end{align}
as $u_1,\dots,u_k, w$ are real-valued.
For all $j \in \mathbb{Z}_{p}$ and for $u_1,u_2,u_3, w$, we denote $\theta_{u_1}, \dots, \theta_{u_k}, \theta_{w} \in[0,2 \pi)^{p}$ as their phase, e.g.:
\begin{align}\label{eq:u_1_u_2_u_3}
    \hat{u}_1(j)=|\hat{u}_1(j)| \exp (\i \theta_{u_1}(j)).
\end{align}

Considering odd $p$, Equation~\eqref{eq:discrete_fourier_transforms_uvw-k} becomes:

\begin{align*}
\eqref{eq:discrete_fourier_transforms_uvw-k} = & ~ \frac{k!}{(p-1) p^{k}} \sum_{j \in [ -(p-1)/2, +(p-1)/2 ] \backslash 0 } \hat{w}(-j)\prod_{i=1}^k \hat{u}_i(j) \\ 
= & ~ \frac{k!}{(p-1) p^{k}} \sum_{j=1}^{(p-1) / 2} ( \prod_{i=1}^k \hat{u}_i(j) \overline{\hat{w}(j)} + \hat{w}(j) \prod_{i=1}^k\overline{\hat{u}_i(j)} ) \\
= & ~ \frac{2(k!)}{(p-1) p^{k}} \sum_{j=1}^{(p-1) / 2}| \hat{w}(j)| \cos (\sum_{i=1}^k \theta_{u_i}(j)-\theta_{w}(j)) \prod_{i=1}^k | \hat{u}_i(j) |. 
\end{align*}
where the first step follows from definition \eqref{eq:discrete_fourier_transforms_uvw-k}, the second step comes from Eq.~\eqref{eq:u_1_ov_u_2_ov_u_3_ov_w_ov-k}, the last step follows from Eq.~\eqref{eq:u_1_u_2_u_3}, i.e., Euler's formula. 

Thus, we need to optimize: 
\begin{align}\label{eq:optimize_discrete_fourier_transforms_uvw-k}
    \max_{u_1,\dots,u_k,w \in {\cal B}} \frac{2(k!)}{(p-1) p^{k}} \sum_{j=1}^{(p-1) / 2}| \hat{w}(j)| \cos (\sum_{i=1}^k \theta_{u_i}(j)-\theta_{w}(j)) \prod_{i=1}^k | \hat{u}_i(j) |.
\end{align}

We can transfer the norm constraint to
\begin{align*}
    \|\hat{u}_1 \|^{2}+\dots +  \|\hat{u}_k \|^{2}+\|\hat{w}\|^{2} \leq p
\end{align*}
by using Plancherel's theorem.

Therefore, we need to select them in a such way that $\theta_{u_1}(j)+\dots+\theta_{u_k}(j)=\theta_{w}(j)$, 
ensuring that, for each $j$, the expression $\cos (\theta_{u_1}(j)+\dots+\theta_{u_k}(j)-\theta_{w}(j))=1$ is maximized, except in cases where the scalar of the $j$-th term is $0$.

This further simplifies the problem to:
\begin{align}\label{eq:reduce_optimize_discrete_fourier_transforms_uvw-k}
    \max _{ \| \hat{u}_1 \|^{2}+\dots +  \|\hat{u}_k \|^{2}+\|\hat{w}\|^{2} \leq p} \frac{2(k!)}{(p-1) p^{k}} \sum_{j=1}^{(p-1) / 2}| \hat{w}(j)| \prod_{i=1}^k | \hat{u}_i(j) |.
\end{align}

Then, we have
\begin{align}\label{eq:inequality_quadratic_geometric-k}
    | \hat{w}(j)| \prod_{i=1}^k | \hat{u}_i(j) | \leq(\frac{1}{k+1} \cdot ( |\hat{u}_1(j)|^{2} + \dots + |\hat{u}_k(j)|^{2} + |\hat{w}(j)|^{2} ) )^{(k+1)/2}.
\end{align}
where the first step follows from inequality of quadratic and geometric means.

We define $z:\{1, \ldots, \frac{p-1}{2}\} \rightarrow \R$, where 
\begin{align*}
    z(j) := |\hat{u}_1 (j) |^{2}+\dots +|\hat{u}_k(j)|^{2}+|\hat{w}(j)|^{2}.
\end{align*}

We need to have $\hat{u}_1 (0)=\dots=\hat{u}_k(0)=\hat{w}(0)=0$. Then, the upper-bound of Equation~\eqref{eq:reduce_optimize_discrete_fourier_transforms_uvw-k} is given by
\begin{align*}
    & ~ \frac{2(k!)}{(p-1) p^{k}} \cdot \max _{\|z\|_{1} \leq \frac{p}{2}} \sum_{j=1}^{(p-1) / 2}(\frac{z(j)}{k+1})^{(k+1)/2} \\
    = & ~ \frac{2(k!)}{(k+1)^{(k+1)/2}(p-1) p^{k}} \cdot \max _{\|z\|_{1} \leq \frac{p}{2}} \sum_{j=1}^{(p-1) / 2} z(j)^{(k+1)/2} \\
    \leq & ~ \frac{2(k!)}{(k+1)^{(k+1)/2}(p-1) p^{k}} \cdot (p/2)^{(k+1)/2} \\
    = & ~ \frac{2(k!)}{(2k+2)^{(k+1)/2}(p-1) p^{(k-1)/2}}  ,
\end{align*}
where the first step follows from simple algebra, the second step comes from the definition of $L_2$ norm, the third step follows from $\|z\|_{2} \leq \| z \|_1 \leq \frac{p}{2}$, the last step follows from simple algebra.  

For the inequality of quadratic and geometric means,  Eq.~\eqref{eq:inequality_quadratic_geometric-k} becomes equality when $|\hat{u}_1(j)|=\dots=|\hat{u}_k (j)|=|\hat{w}(j)|$.
To achieve $\|z\|_{2}=\frac{p}{2}$,  all the mass must be placed on a single frequency. Hence, for some frequency $\zeta \in\{1, \ldots, \frac{p-1}{2}\}$, to achieve the upper bound, we have:

\begin{align}\label{eq:hat_u_same-k}
    |\hat{u}_1 (j)| = \dots = |\hat{u}_k(j)| = |\hat{w}(j)| = \Big\{ \begin{array}{ll}
    \sqrt{p \over 2(k+1)}, & \text { if } j= \pm \zeta; \\
    0, & \text { otherwise. }
    \end{array} 
\end{align}
In this case, Equation~\eqref{eq:reduce_optimize_discrete_fourier_transforms_uvw-k} matches the upper bound.
Hence, this is the maximum-margin.

Let $\theta_{u_1}^{*} : =\theta_{u_1}(\zeta)$. Combining all the results, up to scaling, it is established that all neurons which maximize the expected class-weighted margin conform to the form:
\begin{align*}
    u_1(a_1) 
    = & ~ \frac{1}{p} \sum_{j=0}^{p-1} \hat{u}_1(j) \rho^{j a_1} \\
= & ~ \frac{1}{p} \cdot (\hat{u}_1(\zeta) \rho^{\zeta a_1}+\hat{u}_1(-\zeta) \rho^{-\zeta a_1} ) \\
= & ~ \frac{1}{p} \cdot ( \sqrt{\frac{p}{2(k+1)}} \exp (\i \theta_{u_1}^{*}) \rho^{\zeta a_1}+\sqrt{\frac{p}{2(k+1)}} \exp (-\i \theta_{u_1}^{*}) \rho^{-\zeta a_1} ) \\
= & ~ \sqrt{\frac{2}{(k+1) p}} \cos (\theta_{u_1}^{*}+2 \pi \zeta a_1 / p),
\end{align*}
where the first step comes from the definition of $u_1(a)$, the second step and third step follow from Eq.~\eqref{eq:hat_u_same-k}, the last step follows from Eq.~\eqref{eq:u_1_u_2_u_3} i.e., Euler's formula. 

We have similar results for other neurons
where $\theta_{u_1}^{*}, \dots, \theta_{u_k}^{*}, \theta_{w}^{*} \in \R$ satisfying $\theta_{u_1}^{*}+\dots+\theta_{u_k}^{*}=\theta_{w}^{*}$ and some $\zeta \in \mathbb{Z}_{p} \backslash\{0\}$, where $u_1,\dots,u_k$, and $w$ shares the same $\zeta$.
    
\end{proof}
\section{Construct Max Margin Solution}\label{app:construct}

Section~\ref{app:construct:sum_to_product_identities} proposed the sum-to-product identities for $k$ inputs. Section~\ref{app:construct:constructions_for_theta} shows how we construct $\theta^*$ when $k=3$. Section~\ref{app:construct:constructions_for_theta_k} gives the constructions for $\theta^*$ for general $k$ version.

\subsection{Sum-to-product Identities}\label{app:construct:sum_to_product_identities}

\begin{lemma}[Sum-to-product Identities]\label{lem:sum2product}
If the following conditions hold
\begin{itemize}
    \item Let $a_1,\dots,a_k$ denote any $k$ real numbers
\end{itemize} 
We have
\begin{itemize}
\item {\bf Part 1.}
\begin{align*}
    2^2\cdot 2! \cdot a_1 a_2  & = (a_1+ a_2)^2 - (a_1- a_2)^2 - (-a_1+ a_2)^2+(-a_1-a_2)^2 
\end{align*}
\item {\bf Part 2.}
\begin{align*}
    2^3\cdot 3! \cdot a_1 a_2 a_3 & = (a_1+ a_2+ a_3)^3 - (a_1+ a_2- a_3)^3 - (a_1- a_2+ a_3)^3-(-a_1+ a_2+ a_3)^3 \\
    & + (a_1- a_2- a_3)^3 + (-a_1+ a_2- a_3)^3 + (-a_1- a_2+ a_3)^3-(-a_1- a_2- a_3)^3  
\end{align*}
\item {\bf Part 3.}
\begin{align*}
2^k \cdot k! \cdot  \prod_{i=1}^k a_i = \sum_{c \in \{-1,+1\}^k } (-1)^{(k-\sum_{i=1}^k c_i)/2} ( \sum_{j=1}^k c_j a_j)^k.
\end{align*}
\end{itemize}
\end{lemma}
\begin{proof}

{\bf Proof of Part 1.}
 
We define $A_1, A_2, A_3, A_4$ as follows
\begin{align*}
    A_1 :=  ~ (a_1+ a_2)^2, A_2 :=  ~ (a_1- a_2)^2, A_3 :=  ~ (-a_1+ a_2)^2,  A_4 :=  ~ (-a_1-a_2)^2, 
\end{align*}

For the first term, we have
\begin{align*}
    A_1 = & ~  a_1^2 + a_2^2  +  2a_1a_2.
\end{align*}

For the second term, we have
\begin{align*}
    A_2 = & ~  a_1^2 + a_2^2  -  2a_1a_2.
\end{align*}

For the third term, we have
\begin{align*}
    A_3 = & ~  a_1^2 + a_2^2  -  2a_1a_2.
\end{align*}

For the fourth term, we have
\begin{align*}
    A_4 = & ~  a_1^2 + a_2^2  +  2a_1a_2.
\end{align*}

Putting things together, we have
\begin{align*}
(a_1+ a_2)^2 - (a_1- a_2)^2 - (-a_1+ a_2)^2+(-a_1-a_2)^2 
= & ~A_1 - A_2 -A_3 +A_4\\
= & ~ 8 a_1 a_2  \\
= & ~  2^3 a_1 a_2  
\end{align*}
{\bf Proof of Part 2.}

We define $B_1, B_2, B_3, B_4, B_5, B_6, B_7, B_8$ as follows
\begin{align*}
    B_1 := & ~ (a_1+a_2+a_3)^3, \\
    B_2 := & ~ (a_1+a_2-a_3)^3, \\
    B_3 := & ~ (a_1-a_2+a_3)^3, \\
    B_4 := & ~ (-a_1+a_2+a_3)^3, \\
    B_5 := & ~ (a_1-a_2-a_3)^3, \\
    B_6 := & ~ (-a_1+a_2-a_3)^3, \\
    B_7 := & ~ (-a_1-a_2+a_3)^3, \\
    B_8 := & ~ (-a_1-a_2-a_3)^3, 
\end{align*}
For the first term, we have
\begin{align*}
    B_1 = & ~  a_1^3 + a_2^3  + a_3^3 + 3a_2a_3^2 + 3a_2a_1^2 + 3a_1a_3^2 + 3a_1a_2^2 + 3a_3a_1^2  + 3a_3a_2^2   + 6a_1a_2a_3.
\end{align*}

For the second term, we have
\begin{align*}
    B_2 = & ~  a_1^3 + a_2^3 - a_3^3 + 3a_2a_3^2 + 3a_2a_1^2 + 3a_1a_3^2 + 3a_1a_2^2 - 3a_3a_1^2  - 3a_3a_2^2   - 6a_1a_2a_3.
\end{align*}

For the third term, we have
\begin{align*}
    B_3 = & ~  a_1^3 - a_2^3  + a_3^3 - 3a_2a_3^2 - 3a_2a_1^2 + 3a_1a_3^2 + 3a_1a_2^2 + 3a_3a_1^2  + 3a_3a_2^2   - 6a_1a_2a_3.
\end{align*}

For the fourth term, we have
\begin{align*}
    B_4 = & ~ -a_1^3 + a_2^3  + a_3^3 + 3a_2a_3^2 + 3a_2a_1^2 - 3a_1a_3^2 - 3a_1a_2^2 + 3a_3a_1^2  + 3a_3a_2^2   - 6a_1a_2a_3.
\end{align*}

For the fifth term, we have
\begin{align*}
    B_5 = & ~ a_1^3 - a_2^3  - a_3^3 - 3a_2a_3^2 - 3a_2a_1^2 + 3a_1a_3^2 + 3a_1a_2^2 - 3a_3a_1^2  - 3a_3a_2^2   + 6a_1a_2a_3.
\end{align*}

For the sixth term, we have
\begin{align*}
    B_6 = & ~ -a_1^3 + a_2^3  - a_3^3 + 3a_2a_3^2 + 3a_2a_1^2 - 3a_1a_3^2 - 3a_1a_2^2 - 3a_3a_1^2  - 3a_3a_2^2   + 6a_1a_2a_3.
\end{align*}

For the seventh term, we have
\begin{align*}
    B_7 = & ~ -a_1^3 - a_2^3  + a_3^3 - 3a_2a_3^2 - 3a_2a_1^2 - 3a_1a_3^2 - 3a_1a_2^2 + 3a_3a_1^2  + 3a_3a_2^2   + 6a_1a_2a_3.
\end{align*}

For the eighth term, we have
\begin{align*}
    B_8 = & ~ -a_1^3 - a_2^3  - a_3^3 - 3a_2a_3^2 - 3a_2a_1^2 - 3a_1a_3^2 - 3a_1a_2^2 - 3a_3a_1^2  - 3a_3a_2^2   - 6a_1a_2a_3.
\end{align*}

Putting things together, we have
\begin{align*}
& (a_1+ a_2+ a_3)^3 - (a_1+ a_2- a_3)^3 - (a_1- a_2+ a_3)^3-(-a_1+ a_2+ a_3)^3 \\
    & + (a_1- a_2- a_3)^3 + (-a_1+ a_2- a_3)^3 + (-a_1- a_2+ a_3)^3-(-a_1- a_2- a_3)^3  \\
= & ~B_1 - B_2 -B_3 -B_4+ B_5 + B_6 +B_7 -B_8\\
= & ~ 48 a_1 a_2 a_3 \\
= & ~ 3 \cdot 2^4 a_1 a_2 a_3 
\end{align*}

{\bf Proof of Part 3.} 

\begin{align*}
2^k \cdot k! \cdot  \prod_{i=1}^k a_i = \sum_{c \in \{-1,+1\}^k } (-1)^{(k-\sum_{i=1}^k c_i)/2} ( \sum_{j=1}^k c_j a_j)^k.
\end{align*}
We first let $a_1 = 0$. Then each term on RHS can find a corresponding negative copy of this term.  In detail, let $c_1$ change sign and we have, $(-1)^{(k-c_1-\sum_{i=2}^k c_i)/2} ( c_1 \cdot 0+\sum_{j=2}^k c_j a_j)^k = - (-1)^{(k+c_1-\sum_{i=2}^k c_i)/2} ( -c_1 \cdot 0+\sum_{j=2}^k c_j a_j)^k$. We can find this mapping is always one-to-one and onto mapping with each other. Thus, we have RHS
is constant $0$ regardless of $a_2, \dots, a_k$. Thus, $a_1$ is a factor of RHS. By symmetry, $a_2, \dots, a_k$ also are factors of RHS. Since RHS is $k$-th order, we have RHS$=\alpha \prod_{i=1}^k a_i$ where $\alpha$ is a constant. Take $a_1=\dots=a_k=1$, we have $\alpha=2^k \cdot k! =$RHS. Thus, we finish the proof. 
\end{proof}

\subsection{Constructions for \texorpdfstring{$\theta^*$}{}}\label{app:construct:constructions_for_theta}

\begin{lemma}\label{lem:construct}
When $k=3$, provided the following conditions are met
\begin{itemize}
    \item We denote ${\cal B}$ as the ball that $\| u_1 \|^2 + \|u_2 \|^2 + \| u_3 \|^2 + \| w \|^2 \leq 1$.
    \item We define $\Omega_q^{'*}$ in Definition~\ref{def:margin_max_problem}.
    \item We adopt the uniform class weighting: $\forall c' \neq a_1 + a_2 + a_3, ~~\tau (a_1, a_2, a_3)[c'] := 1/(p - 1)$.
    \item Let $\cos_\zeta(x)$ denote $\cos (2 \pi \zeta x / p)$
    \item Let $\sin _\zeta(x)$ denote $\sin (2 \pi \zeta x / p)$
\end{itemize}
Then, we have 
\begin{itemize}
\item 
The maximum $L_{2,4}$-margin solution $\theta^*$ will consist of $16(p-1)$ neurons $\theta^*_i \in \Omega_q^{'*}$ to simulate $\frac{p-1}{2}$ type of cosine computation, each cosine computation is uniquely determined a $\zeta \in \{1, \ldots, \frac{p-1}{2}\}$. In particular, for each $\zeta$ the cosine computation is   $\cos_\zeta(a_1+a_2+a_3-c), \forall a_1,a_2,a_3,c \in \mathbb{Z}_p$.
\end{itemize}
\end{lemma}

\begin{proof}

Referencing Lemma~\ref{lem:margin_soln}, we can identify elements within $\Omega_q^{'}$. Our set $\theta^*$ will be composed of $16(p-1)$ neurons, including $32$ neurons dedicated to each frequency in the range $1, \ldots, \frac{p-1}{2}$. Focusing on a specific frequency $\zeta$, for the sake of simplicity, let us use $\cos_\zeta(x)$ to represent $\cos (2 \pi \zeta x / p)$ and $\sin _\zeta(x)$ likewise. We note:

\begin{align}\label{eq:cos_a_c}
    \cos_\zeta(a_1+a_2+a_3-c)= & \cos_\zeta(a_1+a_2+a_3) \cos_\zeta(c) + \sin_\zeta(a_1+a_2+a_3) \sin_\zeta(c)  \notag\\
    = & ~ \cos_\zeta(a_1+a_2) \cos_\zeta(a_3) \cos_\zeta(c) - \sin_\zeta(a_1+a_2) \sin_\zeta(a_3) \cos_\zeta(c) \notag\\
    & +\sin_\zeta(a_1+a_2) \cos_\zeta(a_3) \sin_\zeta(c) + \cos_\zeta(a_1+a_2) \sin_\zeta(a_3) \sin_\zeta(c)  \notag\\ 
    = &  ~ (\cos_\zeta(a_1) \cos_\zeta(a_2) - \sin_\zeta(a_1) \sin_\zeta(a_2)
)\cos_\zeta(a_3) \cos_\zeta(c) \notag\\
 & ~ - (\sin_\zeta(a_1) \cos_\zeta(a_2) + \cos_\zeta(a_1) \sin_\zeta(a_2)
)\sin_\zeta(a_3) \cos_\zeta(c)  \notag\\
    &  ~ + (\sin_\zeta(a_1) \cos_\zeta(a_2) + \cos_\zeta(a_1) \sin_\zeta(a_2)
) \cos_\zeta(a_3) \sin_\zeta(c) \notag\\
   & ~ +  ((\cos_\zeta(a_1) \cos_\zeta(a_2) - \sin_\zeta(a_1) \sin_\zeta(a_2)
))\sin_\zeta(a_3) \sin_\zeta(c)  \notag\\
    = &  ~ \cos_\zeta(a_1) \cos_\zeta(a_2) \cos_\zeta(a_3) \cos_\zeta(c) - \sin_\zeta(a_1) \sin_\zeta(a_2)
\cos_\zeta(a_3) \cos_\zeta(c)   \notag\\
 & ~ - \sin_\zeta(a_1) \cos_\zeta(a_2)\sin_\zeta(a_3) \cos_\zeta(c)  - \cos_\zeta(a_1) \sin_\zeta(a_2)\sin_\zeta(a_3) \cos_\zeta(c) 
 \notag\\
    &  ~ + \sin_\zeta(a_1) \cos_\zeta(a_2) \cos_\zeta(a_3) \sin_\zeta(c)+ \cos_\zeta(a_1) \sin_\zeta(a_2)\cos_\zeta(a_3) \sin_\zeta(c)   \notag\\
   & ~ +  \cos_\zeta(a_1) \cos_\zeta(a_2)\sin_\zeta(a_3) \sin_\zeta(c) - \sin_\zeta(a_1) \sin_\zeta(a_2)\sin_\zeta(a_3) \sin_\zeta(c) 
\end{align}
where all steps comes from trigonometric function.

Each of these $8$ terms can be implemented by $4$ neurons $\phi_1, \phi_2, \cdots,  \phi_{4}$.
Consider the first term, $\cos_\zeta(a_1) \cos_\zeta(a_2) \cos_\zeta(a_3) \cos_\zeta(c) $. 

For the $i$-th neuron, we have 
\begin{align*}
\phi_i  =  (u_{i,1} (a_1)+u_{i,2} (a_2)+u_{i,3} (a_3))^3 \cdot w_i(c).
\end{align*}

By changing $(\theta_{i,j})^{*}$, we can change the constant factor of $\cos_\zeta(\cdot)$ to be $+\beta$ or $-\beta$. Hence, we can view $u_{i,j}(\cdot),w_{i}(\cdot)$ as the following:
\begin{align*}
    u_{i,1} (\cdot):= & ~ p_{i,1}    \cdot \cos_\zeta(\cdot), \\
    u_{i,2}(\cdot):=  & ~p_{i,2}   \cdot \cos_\zeta(\cdot), \\
    u_{i,3}(\cdot):=  & ~p_{i,3}    \cdot  \cos_\zeta(\cdot), \\
    w_{i}(\cdot):=  & ~p_{i,4}   \cdot \cos_\zeta(\cdot)
\end{align*}
where $p_{i,j} \in \{ -1,1\}$.

For simplicity, let $d_i$ denote $\cos_\zeta(a_i)$.

We set $(\theta_{u_1}^*, \theta_{u_2}^*, \theta_{u_3}^*, \theta_w^*) = (0,0,0,0)$, then
\begin{align*}
p_{1,1}, p_{1,2}, p_{1,3}, p_{1,4}  = 1,
\end{align*}
then we have 
\begin{align*}
    \phi_1 =   (d_1 + d_2 + d_3 )^3 \cos_\zeta(c).
\end{align*}

We set $(\theta_{u_1}^*, \theta_{u_2}^*, \theta_{u_3}^*, \theta_w^*) = (0,0,\pi,\pi)$, then $p_{2,1}, p_{2,2}  = 1$ and $p_{2,3},p_{2,4} = -1$, then we have 
\begin{align*}
\phi_2 = -  (d_1 + d_2 - d_3 )^3 \cos_\zeta(c).
\end{align*}

We set $(\theta_{u_1}^*, \theta_{u_2}^*, \theta_{u_3}^*, \theta_w^*) = (0,\pi,0,\pi)$, then  $p_{3,1}, p_{3,3}  = 1$ and $p_{3,2},p_{3,4} = -1$, then we have 
\begin{align*}
\phi_3 = -  (d_1 - d_2 + d_3 )^3 \cos_\zeta(c).
\end{align*}

We set $(\theta_{u_1}^*, \theta_{u_2}^*, \theta_{u_3}^*, \theta_w^*) = (\pi,0,0,\pi)$, then  $p_{4,1}, p_{4,4}  = -1$ and $p_{2,2},p_{2,3} = 1$, then we have 
\begin{align*}
\phi_4 = -  (-d_1 + d_2 + d_3 )^3 \cos_\zeta(c).
\end{align*}

Putting them together, we have
\begin{align}
& ~ \sum_{i=1}^{4} \phi_i(a_1,a_2,a_3) \notag \\ 
= & ~ \sum_{i=1}^{4}(u_{i,1} (a_1)+u_{i,2} (a_2)+u_{i,3} (a_3))^3 w_i (c)  \notag\\
= & ~ \sum_{i=1}^{4}(p_{i,1} \cos_\zeta(a_1)+p_{i,2} \cos_\zeta(a_2)+p_{i,3} \cos_\zeta(a_3))^3 w_i(c)  \notag\\
 = & ~   [(d_1+d_2+d_3)^3 - (d_1+d_2-d_3)^3 - (d_1-d_2+d_3)^3-(-d_1+d_2+d_3)^3]\cos_\zeta(c) \notag\\
 = & ~ 24 d_1 d_2 d_3 \cos_\zeta(c) \notag \\
 = & ~ 24 \cos_\zeta(a_1) \cos_\zeta(a_2) \cos_\zeta(a_3) \cos_\zeta(c)
\end{align}
where the first step comes from the definition of $\phi_i$, the second step comes from the definition of $u_{i,j}$, the third step comes from $d_i = \cos_\zeta (a_i)$, the fourth step comes from simple algebra, the last step comes from $d_i = \cos_\zeta (a_i)$. 

Similarly, consider $- \sin_\zeta(a_1) \sin_\zeta(a_2)
\cos_\zeta(a_3) \cos_\zeta(c) $.

We set $(\theta_{u_1}^*, \theta_{u_2}^*, \theta_{u_3}^*, \theta_w^*) = (\pi/2,\pi/2,0,\pi)$, then we have 
\begin{align*}
\phi_1 = -  (\sin_\zeta(a_1) +\sin_\zeta(a_2)+\cos_\zeta(a_3))^3 \cos_\zeta(c).
\end{align*}
We set $(\theta_{u_1}^*, \theta_{u_2}^*, \theta_{u_3}^*, \theta_w^*) = (\pi/2,\pi/2,-\pi,0)$, then we have 
\begin{align*}
\phi_2 =  (\sin_\zeta(a_1) +\sin_\zeta(a_2)-\cos_\zeta(a_3))^3 \cos_\zeta(c).
\end{align*}
We set $(\theta_{u_1}^*, \theta_{u_2}^*, \theta_{u_3}^*, \theta_w^*) = (\pi/2,-\pi/2,0,0)$, then we have 
\begin{align*}
\phi_3 =  (\sin_\zeta(a_1) -\sin_\zeta(a_2)+\cos_\zeta(a_3))^3 \cos_\zeta(c).
\end{align*}
We set $(\theta_{u_1}^*, \theta_{u_2}^*, \theta_{u_3}^*, \theta_w^*) = (-\pi/2,\pi/2,0,0)$, then we have 
\begin{align*}
\phi_4 =  (-\sin_\zeta(a_1) +\sin_\zeta(a_2)+\cos_\zeta(a_3))^3 \cos_\zeta(c).
\end{align*}
Putting them together, we have
\begin{align}
& ~ \sum_{i=1}^{4} \phi_i(a_1,a_2,a_3) \notag \\ 
= & ~ -24 \sin_\zeta(a_1) \sin_\zeta(a_2)
\cos_\zeta(a_3) \cos_\zeta(c) 
\end{align}
Similarly, all other six terms in Eq.~\eqref{eq:cos_a_c} can be composed by four neurons with different $(\theta_{u_1}^*, \theta_{u_2}^*, \theta_{u_3}^*, \theta_w^*)$.

When we include such $56$ neurons for all frequencies $\zeta \in \{1, \ldots, \frac{p-1}{2} \}$, we have that the network will calculate the following function

\begin{align*}
f(a_1, a_2, a_3, c  ) & =\sum_{\zeta=1}^{(p-1) / 2} \cos _\zeta(a_1+a_2+a_3-c) \\
& =\sum_{\zeta=1}^{p-1} \frac{1}{2} \cdot \exp (2 \pi \i \zeta(a_1+a_2+a_3-c) / p) \\
& = \begin{cases}\frac{p-1}{2} & \text { if } a_1+a_2+a_3=c \\
0 & \text { otherwise }\end{cases}
\end{align*}

where the first step comes from the definition of $f(a_1, a_2, a_3, c )$, the second step comes from Euler’s formula, the last step comes from the properties of discrete Fourier transform.

The scaling factor $\beta$ for each neuron can be selected such that the entire network maintains an $L_{2,4}$-norm of 1. In this setup, every data point lies exactly on the margin, meaning $q=\operatorname{unif} (\mathbb{Z}_p )$ uniformly covers points on the margin, thus meeting the criteria for $q^*$ as outlined in Definition \ref{def:q_star_theta_star}. Furthermore, for any input $(a_1, a_2, a_3)$, the function $f$ yields an identical result across all incorrect labels $c^{\prime}$, adhering to Condition \ref{lem:translation_property}.
\end{proof}

\subsection{Constructions for \texorpdfstring{$\theta^*$}{} for General \texorpdfstring{$k$}{} Version}\label{app:construct:constructions_for_theta_k}

\begin{lemma}[Formal version of Lemma \ref{lem:construct-k:informal}]\label{lem:construct-k}
Provided the following conditions are met
\begin{itemize}
    \item We denote ${\cal B}$ as the ball that $\| u_1 \|^2 + \dots +  \| u_k \|^2 + \| w \|^2 \leq 1$.
    \item We define $\Omega_q^{'*}$ in Definition~\ref{def:margin_max_problem}.
    \item We adopt the uniform class weighting: $\forall c' \neq a_1 + \dots + a_k, ~~\tau (a_1, \dots, a_k)[c'] := 1/(p - 1)$.
    \item Let $\cos_\zeta(x)$ denote $\cos (2 \pi \zeta x / p)$
    \item Let $\sin _\zeta(x)$ denote $\sin (2 \pi \zeta x / p)$
\end{itemize}
Then, we have 
\begin{itemize}
\item 
The maximum $L_{2,k+1}$-margin solution $\theta^*$ will consist of $2^{2k-1} \cdot {p-1 \over 2}$ neurons $\theta^*_i \in \Omega_q^{'*}$ to simulate $\frac{p-1}{2}$ type of cosine computation, each cosine computation is uniquely determined a $\zeta \in \{1, \ldots, \frac{p-1}{2}\}$. In particular, for each $\zeta$ the cosine computation is   $\cos_\zeta(a_1+\dots+a_k-c), \forall a_1,\dots,a_k,c \in \mathbb{Z}_p$.
\end{itemize}
\end{lemma}

\begin{proof}

By Lemma~\ref{lem:margin_soln}, we can get elements of $\Omega_q^{'*}$. 
Our set $\theta^*$ will be composed of $2^{2k-1} \cdot {p-1 \over 2}$ neurons, including $2^{2k-1} $ neurons dedicated to each frequency in the range $1, \ldots, \frac{p-1}{2}$. Focusing on a specific frequency $\zeta$, for the sake of simplicity, let us use $\cos_\zeta(x)$ to represent $\cos (2 \pi \zeta x / p)$ and $\sin _\zeta(x)$ likewise.

We define
\begin{align*}
    a_{[k]} := \sum_{i=1}^k a_k
\end{align*}
and we also define
\begin{align*}
    a_{k+1} := - c.
\end{align*}

For easy of writing, we will write $\cos_{\zeta}$ as $\cos$ and $\sin_{\zeta}$ as $\sin$. We have the following. 
\begin{align}
    & ~ \cos_\zeta(\sum_{i = 1}^k a_i - c ) \notag \\
    = & ~ \cos(\sum_{i = 1}^k a_i - c ) \notag \\
    = & ~ \cos ( a_{[k+1]} ) \notag \\
    = & ~ \cos( a_{[k]} ) \cos(a_{k+1}) - \sin( a_{[k]} ) \sin( a_{k+1} ) \notag \\  
    = & ~ \cos( a_{[k-1]} + a_{k} ) \cos (a_{k+1}) - \sin( a_{[k-1]} + a_k ) \sin(a_{k+1}) \notag \\
    = & ~  \cos ( a_{[k-1]} ) \cos(a_k) \cos(a_{k+1}) - \sin (a_{[k-1]}) \sin(a_k) \cos(a_{k+1}) \notag \\
    & ~ -  \sin (a_{[k-1]}) \cos (a_k) \sin (a_{k+1}) - \cos(a_{[k-1]}) \sin(a_k) \sin( a_{k+1}) \notag \\
    = & ~ \sum_{b \in \{0,1\}^{k+1}} \prod_{i=1}^{k+1} \cos^{1-b_i}(a_i) \cdot \sin^{b_i}(a_i) \cdot {\bf 1}[ \sum_{i=1}^{k+1} b_i \%2 = 0 ] \cdot (-1)^{ {\bf 1 }[ \sum_{i=1}^{k+1} b_i \% 4 = 2 ] }, \label{eq:k_cos_sum}
\end{align} 
where the first step comes from the simplicity of writing, the second step comes from the definition of $a_{[k+1]}$ and $a_{k+1}$, the third step comes from the trigonometric function, the fourth step also follows trigonometric function, and the last step comes from the below two observations:

\begin{itemize}
    \item First, we observe  that 
$\cos(a+b) = \cos(a)\cos(b) - \sin(a)\sin(b)$ and $\sin(a+b) = \sin(a)\cos(b) + \cos(a)\sin(b)$. When we split $\cos$ once, we will remove one $\cos$ product and we may add zero or two $\sin$ products. When we split $\sin$ once, we may remove one $\sin$ product and we will add one $\sin$ product as well. Thus, we can observe that the number of $\sin$ products in each term is always even. 
\item 
Second, we observe only when we split $\cos$ and add two $\sin$ products will introduce a $-1$ is this term. Thus, when the number of $\sin$ products $\%4 = 2$, the sign of this term will be $-1$. Otherwise, it will be $+1$. 
\end{itemize}

Note that we have $2^k$ non-zero term in Eq.~\eqref{eq:k_cos_sum}. Each of these $2^k$ terms can be implemented by $2^{k-1}$ neurons $\phi_1,  \cdots,  \phi_{2^{k-1}}$.

For the $i$-th neuron, we have  
\begin{align*}
\phi_i  = ( \sum_{j=1}^k u_{i,j} (a_j) )^k \cdot w_i(c).
\end{align*}

By changing $(\theta_{i,j})^{*}$, we can change the $u_{i,j} (a_j)$ from $\cos_\zeta(\cdot)$ to be $-\cos_\zeta(\cdot)$ or $\sin_\zeta(\cdot)$ or $-\sin_\zeta(\cdot)$. Denote $\theta_{u_i}^*$ as $(\theta_{i,a_i})^{*}$.

For simplicity, let $d_i$ denote the $i$-th product in one term of Eq.~\eqref{eq:k_cos_sum}. By fact that
\begin{align*}
2^k \cdot k! \cdot  \prod_{i=1}^k d_i = \sum_{c \in \{-1,+1\}^k } (-1)^{(k-\sum_{i=1}^k c_i)/2} ( \sum_{j=1}^k c_j d_j)^k,
\end{align*}
each term can be constructed by $2^{k-1}$ 
neurons (note that there is a symmetric effect so we only need half terms). Based on Eq.~\eqref{eq:k_cos_sum} and the above fact with carefully check, we can see that $\theta_{u_1}^* + \dots + \theta_{u_k}^* = \theta_w^*$. Thus, we need $2^k \cdot 2^{k-1} \cdot {p-1 \over 2}$ neurons in total. 

When we include such $2^k \cdot 2^{k-1}$ neurons for all frequencies $\zeta \in \{1, \ldots, \frac{p-1}{2} \}$, we have the network will calculate the following function
\begin{align*}
f(a_1, \dots, a_k, c  ) & =\sum_{\zeta=1}^{(p-1) / 2} \cos _\zeta(\sum_{i=1}^{k}a_i-c) \\
& =\sum_{\zeta=1}^{p-1} \frac{1}{2} \cdot \exp (2 \pi \i \zeta(\sum_{i=1}^{k}a_i-c) / p) \\
& = \begin{cases}\frac{p-1}{2} & \text { if } \sum_{i=1}^{k}a_i=c \\
0 & \text { otherwise }\end{cases}
\end{align*}

where the first step comes from the definition of $f(a_1, \dots, a_k, c  ) $, the second step comes from Euler’s formula, the last step comes from the properties of discrete Fourier transform.

The scaling parameter $\beta$ for each neuron can be adjusted to ensure that the network possesses an $L_{2,k+1}$-norm of 1. For this network, all data points are positioned on the margin, which implies that $q=\operatorname{unif} (\mathbb{Z}_p )$ naturally supports points along the margin, aligning with the requirements for $q^*$ presented in Definition \ref{def:q_star_theta_star}. Additionally, for every input $(a_1, \dots, a_k)$, the function $f$ assigns the same outcome to all incorrect labels $c^{\prime}$, thereby fulfilling Condition \ref{lem:translation_property}.
\end{proof}
\section{Check Fourier Frequencies}\label{app:frequency}

Section~\ref{app:frequency:all_frequencies} proves all frequencies are used. Section~\ref{app:frequency:all_frequencies_k} proves all frequencies are used for general $k$ version.

\subsection{All Frequencies are Used}\label{app:frequency:all_frequencies}
Let $f: \mathbb{Z}_p^4 \rightarrow \mathbb{C}$. Its multi-dimensional discrete Fourier transform is defined as:

\begin{align*}
    & ~ \hat{f}(j_1, j_2, j_3,j_4) \\
    := & ~ \sum_{a_1 \in \mathbb{Z}_p} e^{-2 \pi \i \cdot j_1 a_1 / p} ( \sum_{a_2 \in \mathbb{Z}_p} e^{-2 \pi \i \cdot j_2 a_2 / p} (\sum_{a_3 \in \mathbb{Z}_p} e^{-2 \pi \i \cdot j_3 a_3 / p} (\sum_{c \in \mathbb{Z}_p} e^{-2 \pi \i \cdot j_4 c / p} f(a_1, a_2, a_3, c))).
\end{align*}

\begin{lemma}\label{lem:frequency}
When $k=3$, if the following conditions hold
\begin{itemize}
    \item We adopt the uniform class weighting: $\forall c' \neq a_1 + a_2 + a_3, ~~\tau (a_1, a_2, a_3)[c'] := 1/(p - 1)$.
    \item $f$ is the maximum $L_{2,4}$-margin solution.
\end{itemize}
Then, for any $j_1=j_2=j_3=-j_4 \neq 0$, we have $\hat{f}(j_1, j_2, j_3,j_4)>0$.
\end{lemma}

\begin{proof}
In this proof, 
let $j_1,j_2,j_3,j_4 \in \mathbb{Z}$, and $\theta_u = \theta_u^* \cdot \frac{p}{2 \pi}$ to simplify the notation. By Lemma~\ref{lem:margin_soln}, 

\begin{align}\label{eq:u_1_theta}
    u_1(a_1)=\sqrt{\frac{1}{2 p}} \cos _p (\theta_{u_1} + \zeta a_1) .
\end{align}

Let
\begin{align*}
& f(a_1, a_2, a_3,c) \\
= & ~ \sum_{h=1}^H \phi_h(a_1, a_2, a_3,c) \\
= & ~ \sum_{h=1}^H (u_{h,1}(a_1) + u_{h,2}(a_2)+u_{h,3}(a_3))^3 w_h(c) \\
= & ~ (\frac{1}{2 p})^{2} \sum_{h=1}^H (\cos _p (\theta_{u_{h,1}}+\zeta_h a_1) + \cos _p (\theta_{u_{h,2}} + \zeta_h a_2) + \cos _p (\theta_{u_{h,3}} + \zeta_h a_3))^3 \cos_p (\theta_{w_h}+\zeta_h c)
\end{align*}
where each neuron conforms to the previously established form, and the width $H$ function is an arbitrary margin-maximizing network. The first step is from the definition of $f(a_1, a_2, a_3, c)$, the subsequent step on the definition of $\phi_h(a_1, a_2, a_3, c)$, and the final step is justified by Eq.~\eqref{eq:u_1_theta}.

We can divide each $\phi$ into ten terms:
\begin{align*}
    & ~ \phi(a_1, a_2, a_3, c) \\
    = & ~ \phi^{(1)}(a_1, a_2, a_3, c) + \dots + \phi^{(10)}(a_1, a_2, a_3, c) \\
    = & ~ \big(u_1(a_1)^3 +u_2(a_2)^3 + u_3(a_3)^3  + 3 u_1(a_1)^2 u_2(a_2) +  3 u_1(a_1)^2 u_3(a_3) + 3 u_2(a_2)^2 u_1(a_1) \\
    &~ + 3 u_2(a_2)^2 u_3(a_3) + 3 u_3(a_3)^2 u_1(a_1) + 3 u_3(a_3)^2 u_2(a_2) +  6 u_1(a_1) u_2(a_2)u_3(a_3)\big) w(c).
\end{align*}

Note, $\rho = e^{2\pi \i/p}$. 
$\widehat{\phi}_1(j_1, j_2, j_3,j_4)$ is nonzero only for $j_1=0$, and $\widehat{\phi}_4(j_1, j_2, j_3,j_4)$ is nonzero only for $j_1=j_2=0$. Similar to other terms. For the tenth term, we have

\begin{align*}
   \widehat{\phi}_{10}(j_1, j_2, j_3,j_4) = & ~ 6 \sum_{a_1, a_2, a_3, c \in \mathbb{Z}_p} u_1(a_1) u_2(a_2)u_3(a_3) w(c) \rho^{-(j_1 a_1+j_2 a_2+j_3 a_3+j_4 c)} \\
   = & ~ 6 \hat{u}_1(j_1) \hat{u}_2(j_2)\hat{u}_3(j_3) \hat{w}(j_4). 
\end{align*}

In particular,

\begin{align*}
\hat{u}_1(j_1) & = \sum_{a_1 \in \mathbb{Z}_p} \sqrt{\frac{1}{2 p}} \cos_p (\theta_{u_1}+\zeta a_1) \rho^{-j_1 a_1} \\
& =(8 p)^{-1 / 2} \sum_{a_1 \in \mathbb{Z}_p} (\rho^{\theta_{u_1}+\zeta a_1} + \rho^{-(\theta_{u_1}+\zeta a_1)}) \rho^{-j_1 a_1} \\
& =(8 p)^{-1 / 2}(\rho^{\theta_{u_1}} \sum_{a_1 \in \mathbb{Z}_p} \rho^{(\zeta-j_1) a_1} + \rho^{-\theta_{u_1}} \sum_{a_1 \in \mathbb{Z}_p} \rho^{-(\zeta + j_1) a_1}) \\
& = \begin{cases}\sqrt{p / 8} \cdot \rho^{\theta_{u_1}} & \text { if } j_1 = + \zeta \\
\sqrt{p / 8} \cdot \rho^{-\theta_{u_1}} & \text { if } j_1 = -\zeta \\
0 & \text { otherwise }\end{cases}
\end{align*}
where the first step comes from $\hat{u}_1(j_1)$ definition, the second step comes from Euler’s formula, the third step comes from simple algebra, the last step comes from the properties of discrete Fourier transform.
Similarly for $\hat{u}_2, \hat{u}_3$ and $\hat{w}$. As we consider $\zeta$ to be nonzero, we ignore the $\zeta=0$ case. Hence, $\widehat{\phi}_{10}(j_1, j_2, j_3,j_4)$ is nonzero only when $j_1, j_2, j_3,j_4$ are all $\pm \zeta$. We can summarize that $\hat{\phi}(j_1, j_2,j_3,j_4)$ can only be nonzero if one of the following satisfies:
\begin{itemize}
\item $j_1\cdot j_2 \cdot j_3 = 0$
\item $j_1, j_2, j_3, j_4 = \pm \zeta$.
\end{itemize}

Setting aside the previously discussed points, it's established in Lemma~\ref{lemma:multi-class} that the function $f$ maintains a consistent margin for various inputs as well as over different classes, i.e., $f$ can be broken down as

\begin{align*}
    f(a_1, a_2, a_3, c) = f_1(a_1, a_2, a_3, c) + f_2(a_1, a_2, a_3, c)
\end{align*}
where

\begin{align*}
    f_1(a_1, a_2, a_3, c) = F(a_1, a_2, a_3)
\end{align*}
for some $F: \mathbb{Z}_p \times \mathbb{Z}_p \times \mathbb{Z}_p \rightarrow \mathbb{R}$, and

\begin{align*}
    f_2(a_1, a_2, a_3, c)=\lambda \cdot \mathbf{1}_{a_1+a_2+a_3=c}
\end{align*}
where $\lambda>0$ is the margin of $f$.
Then, we have the DFT of $f_1$ and $f_2$ are

\begin{align*}
    \hat{f}_1(j_1, j_2, j_3, j_4)= \begin{cases}\hat{F}(j_1, j_2, j_3) & \text { if } j_4=0 \\ 0 & \text { otherwise }\end{cases}
\end{align*}
and

\begin{align*}
\hat{f}_2(j_1, j_2, j_3, j_4)= \begin{cases}
\lambda p^3 & \text { if } j_1=j_2=j_3=-j_4 \\
0 & \text { otherwise }
\end{cases}.
\end{align*}

Hence, when $j_1=j_2=j_3=-j_4 \neq 0$, we must have $\hat{f}(j_1, j_2, j_3,j_4)>0$. 
\end{proof}

\subsection{All Frequencies are Used for General \texorpdfstring{$k$}{} Version}\label{app:frequency:all_frequencies_k}

Let $f: \mathbb{Z}_p^{k+1} \rightarrow \mathbb{C}$. Its multi-dimensional discrete Fourier transform is defined as:
\begin{align*}
    & ~ \hat{f}(j_1, \dots,j_{k+1}) \\
    := & ~ \sum_{a_1 \in \mathbb{Z}_p} e^{-2 \pi \i \cdot j_1 a_1 / p} ( \dots (\sum_{a_k \in \mathbb{Z}_p} e^{-2 \pi \i \cdot j_k a_k / p} (\sum_{c \in \mathbb{Z}_p} e^{-2 \pi \i \cdot j_{k+1} c / p} f(a_1, \dots, a_k, c))).
\end{align*}

\begin{lemma}\label{lem:frequency-k}
If the following conditions hold
\begin{itemize}
    \item We adopt the uniform class weighting: $\forall c' \neq a_1 + \dots + a_k, ~~\tau (a_1, \dots, a_k)[c'] := 1/(p - 1)$.
    \item $f$ is the maximum $L_{2,k+1}$-margin solution.
\end{itemize}
Then, for any $j_1=\dots=j_k=-j_{k+1} \neq 0$, we have $\hat{f}(j_1, \dots ,j_{k+1})>0$.
\end{lemma}

\begin{proof}
For this proof, 
for all $j_1,\dots,j_{k+1} \in \mathbb{Z}$, to simplify the notation, let $\theta_u = \theta_u^* \cdot \frac{p}{2 \pi}$, by Lemma~\ref{lem:margin_soln-k}, so

\begin{align}\label{eq:u_1_theta_zeta}
    u_1(a_1)=\sqrt{\frac{2}{(k+1) p}} \cos _p (\theta_{u_1} + \zeta a_1) .
\end{align}

Let
\begin{align*}
f(a_1, \dots, a_k,c) & = \sum_{h=1}^H \phi_h(a_1, \dots, a_k,c) \\
& =\sum_{h=1}^H (u_{h,1}(a_1) + \dots +u_{h,k}(a_k))^k w_h(c) \\
& = ({\frac{2}{(k+1) p}})^{(k+1)/2} \sum_{h=1}^H (\cos _p (\theta_{u_{h,1}}+\zeta_h a_1) + \dots+ \cos _p (\theta_{u_{h,k}} + \zeta_h a_k))^k \cos_p (\theta_{w_h}+\zeta_h c)
\end{align*}
where each neuron conforms to the previously established form, and the width $H$ function is an arbitrary margin-maximizing network.
The first step is based on the definition of $f(a_1, \dots, a_k, c)$,  the subsequent step on the definition of $\phi_h(a_1, \dots, a_k, c)$, and the final step is justified by Eq.~\eqref{eq:u_1_theta_zeta}.

Each neuron $\phi$ we have

\begin{align*}
   \widehat{\phi}(j_1, \dots, j_k,j_{k+1}) = & ~ k! \sum_{a_1, \dots, a_k, c \in \mathbb{Z}_p} w(c) \rho^{-(j_1 a_1+\dots+j_k a_k+j_{k+1} c)} \prod_{i=1}^k u_i(a_i)   \\
   = & ~ k!  \hat{w}(j_{k+1})\prod_{i=1}^k \hat{u}_i(j_i) . 
\end{align*}

In particular,

\begin{align*}
\hat{u}_1(j_1) & = \sum_{a_1 \in \mathbb{Z}_p} \sqrt{\frac{2}{(k+1) p}} \cos_p (\theta_{u_1}+\zeta a_1) \rho^{-j_1 a_1} \\
& =\sqrt{\frac{1}{2(k+1) p}} \sum_{a_1 \in \mathbb{Z}_p} (\rho^{\theta_{u_1}+\zeta a_1} + \rho^{-(\theta_{u_1}+\zeta a_1)}) \rho^{-j_1 a_1} \\
& =\sqrt{\frac{1}{2(k+1) p}}(\rho^{\theta_{u_1}} \sum_{a_1 \in \mathbb{Z}_p} \rho^{(\zeta-j_1) a_1} + \rho^{-\theta_{u_1}} \sum_{a_1 \in \mathbb{Z}_p} \rho^{-(\zeta + j_1) a_1}) \\
& = \begin{cases}\sqrt{\frac{p}{2(k+1) }} \cdot \rho^{\theta_{u_1}} & \text { if } j_1 = + \zeta \\
\sqrt{\frac{p}{2(k+1) }} \cdot \rho^{-\theta_{u_1}} & \text { if } j_1 = -\zeta \\
0 & \text { otherwise, }\end{cases}
\end{align*}
where the first step comes from $\hat{u}_1(j_1)$ definition, the second step comes from Euler’s formula, the third step comes from simple algebra, the last step comes from the properties of discrete Fourier transform.
Similarly for $\hat{u}_i$ and $\hat{w}$. We consider $\zeta$ to be nonzero, so we ignore the $\zeta=0$ case. Hence, $\widehat{\phi}(j_1, \dots, j_k,j_{k+1})$ is nonzero only when $j_1, \dots, j_k,j_{k+1}$ are all $\pm \zeta$. We can summarize that $\hat{\phi}(j_1, \dots,j_k,j_{k+1})$ can only be nonzero if one of the below conditions satisfies:
\begin{itemize}
\item $ \prod_{i=1}^k j_i = 0$
\item $j_1, \dots, j_k, j_{k+1} = \pm \zeta$.
\end{itemize}
Setting aside the previously discussed points, it's established in Lemma~\ref{lemma:multi-class} that the function $f$ maintains a consistent margin for various inputs as well as over different classes, i.e., $f$ can be broken down as

\begin{align*}
    f(a_1, \dots, a_k, c) = f_1(a_1, \dots, a_k, c) + f_2(a_1, \dots, a_k, c)
\end{align*}
where

\begin{align*}
    f_1(a_1, \dots, a_k, c) = F(a_1, \dots, a_k)
\end{align*}
for some $F: \mathbb{Z}_p^k \rightarrow \mathbb{R}$, and

\begin{align*}
    f_2(a_1, \dots, a_k, c)=\lambda \cdot \mathbf{1}_{a_1+\dots+a_k=c}
\end{align*}
where $\lambda>0$ is the margin of $f$.
Then, we have the DFT of $f_1$ and $f_2$ are

\begin{align*}
    \hat{f}_1(j_1, \dots, j_k, j_{k+1})= \begin{cases}\hat{F}(j_1, \dots, j_k) & \text { if } j_{k+1}=0 \\ 0 & \text { otherwise }\end{cases}
\end{align*}
and

\begin{align*}
\hat{f}_2(j_1, \dots, j_k, j_{k+1})= \begin{cases}
\lambda p^k & \text { if } j_1=\dots=j_k=-j_{k+1} \\
0 & \text { otherwise }
\end{cases}.
\end{align*}

Hence, when $j_1=\dots=j_k=-j_{k+1} \neq 0$, we must have $\hat{f}(j_1, \dots, j_k,j_{k+1})>0$. 
\end{proof}

\section{Proof of Main Result}\label{app:main}

Section~\ref{app:main:main_result_k_3} proves the main result for $k=3$. Section~\ref{app:main:main_result_k} proves the general $k$ version of our main result.

\subsection{Main result for \texorpdfstring{$k=3$}{}} \label{app:main:main_result_k_3}
\begin{theorem}\label{thm:main}
When $k=3$, let $f(\theta, x)$ be the one-hidden layer networks defined in Section~\ref{sec:problem}.
If the following conditions hold
\begin{itemize}
    \item We adopt the uniform class weighting: $\forall c' \neq a_1 + a_2 + a_3, ~~\tau (a_1, a_2, a_3)[c'] := 1/(p - 1)$.
    \item  $m \geq 16(p-1)$ neurons. 
\end{itemize}

Then we have the maximum $L_{2,4}$-margin network satisfying:
\begin{itemize}
    \item  The maximum $L_{2,4}$-margin for a given dataset $D_p$ is:
    \begin{align*}
        \gamma^*=\frac{3}{16} \cdot \frac{1}{p(p-1)}.
    \end{align*}
    \item For each neuron $\phi(\{u_1,u_2, u_3, w\} ; a_1, a_2, a_3)$, there is a constant scalar $\beta \in \R$ and a frequency $\zeta \in\{1, \ldots, \frac{p-1}{2}\}$ satisfying
    \begin{align*}
    u_1(a_1) &= \beta \cdot \cos ( \theta_{u_1}^* + 2 \pi \zeta a_1 /p ) \\
    u_2(a_2) & =\beta \cdot \cos (\theta_{u_2}^*+2 \pi \zeta a_2 / p) \\
    u_3(a_3) & =\beta \cdot \cos (\theta_{u_3}^*+2 \pi \zeta a_3 / p) \\
    w(c) & =\beta \cdot \cos (\theta_w^*+2 \pi \zeta c / p)
    \end{align*}
    where $\theta_{u_1}^*, \theta_{u_2}^*, \theta_{u_3}^*, \theta_w^* \in \R$ are some phase offsets satisfying $\theta_{u_1}^*+\theta_{u_2}^*+\theta_{u_3}^*=\theta_w^*$.
    \item For each frequency $\zeta \in\{1, \ldots, \frac{p-1}{2}\}$, there exists one neuron using this frequency only.
\end{itemize}

\end{theorem}
\begin{proof}
By Lemma~\ref{lem:margin_soln}, we get the single neuron class-weighted margin solution set $\Omega_q^{'*}$ satisfying Condition \ref{lem:translation_property} and $\gamma^*$.  

By Lemma~\ref{lem:construct} and Lemma~\ref{lem:combine}, we can construct network $\theta^*$ which uses neurons in $\Omega_q^{'*}$ and satisfies Condition \ref{lem:translation_property} and Definition \ref{def:q_star_theta_star} with respect to $q = \mathrm{unif}(\mathbb{Z}_{p})$. By Lemma~\ref{lemma:multi-class}, we know it is the maximum-margin solution.  

By Lemma~\ref{lem:frequency}, when $j_1=j_2=j_3=-j_4 \neq 0$, we must have $\hat{f}(j_1, j_2, j_3,j_4)>0$. However, as discrete Fourier transform $\hat{\phi}$ of each neuron is nonzero, for each frequency, we must have that there exists one neuron using it.

\end{proof}

\subsection{Main Result for General \texorpdfstring{$k$}{} Version}\label{app:main:main_result_k}

\begin{theorem}[Formal version of Theorem~\ref{thm:main_k:informal}]\label{thm:main_k:formal}
Let $f(\theta, x)$ be the one-hidden layer networks defined in Section~\ref{sec:problem}.
If the following conditions hold
\begin{itemize}
    \item We adopt the uniform class weighting: $\forall c' \neq a_1 + \dots + a_k, ~~\tau (a_1, \dots, a_k)[c'] := 1/(p - 1)$.
    \item  $m \geq 2^{2k-1} \cdot {p-1 \over 2}$ neurons. 
\end{itemize}

Then we have the maximum $L_{2,k+1}$-margin network satisfying:
\begin{itemize}
    \item  The maximum $L_{2,k+1}$-margin for a given dataset $D_p$ is:
    \begin{align*}
        \gamma^*=\frac{2(k!)}{(2k+2)^{(k+1)/2}(p-1) p^{(k-1)/2}}.
    \end{align*}
    \item For each neuron $\phi(\{u_1,\dots, u_k, w\} ; a_1, \dots, a_k)$ there is a constant scalar $\beta \in \R$ and a frequency $\zeta \in\{1, \ldots, \frac{p-1}{2}\}$ satisfying
    \begin{align*}
    u_1(a_1) &= \beta \cdot \cos ( \theta_{u_1}^* + 2 \pi \zeta a_1 /p ) \\
    & \dots \\
    u_k(a_k) & =\beta \cdot \cos (\theta_{u_k}^*+2 \pi \zeta a_k / p) \\
    w(c) & =\beta \cdot \cos (\theta_w^*+2 \pi \zeta c / p)
    \end{align*}
    where $\theta_{u_1}^*, \dots, \theta_{u_k}^*, \theta_w^* \in \R$ are some phase offsets satisfying  $\theta_{u_1}^*+\dots+\theta_{u_k}^*=\theta_w^*$.
    \item For every frequency $\zeta \in\{1, \ldots, \frac{p-1}{2}\}$, there exists one neuron using this frequency only.
\end{itemize}

\end{theorem}
\begin{proof}
Follow the same proof sketch as Theorem~\ref{thm:main} by Lemma~\ref{lem:margin_soln-k}, Condition \ref{lem:translation_property}, Lemma~\ref{lem:construct-k}, Lemma~\ref{lem:combine}, Definition \ref{def:q_star_theta_star}, Lemma~\ref{lemma:multi-class}, Lemma~\ref{lem:frequency-k}.

\end{proof}

\section{More Empirical Details and Results}\label{app:exp}

\subsection{Implement Details}\label{app:sec:implement}

{\bf Licenses for Existing Assets \& Open Access to Data and Code.} Our code is based on a brilliant open source repository, \url{https://github.com/Sea-Snell/grokking}, which requires MIT License. We provide all of our codes in the supplemental material, including dataset generation code. We do not require open data access as we run experiments on synthetic datasets, i.e., modular addition.   

{\bf Experimental Result Reproducibility.} 
We provide all of our codes in
the supplemental material with a clear README file and clear configuration files for our experiments reproducibility. 

{\bf Experimental Setting/Details \& Experiment Statistical Significance.}
The detailed configuration can be found in supplemental material. We make a copy version here for convenience. 

For two-layer neural network training, we have the following details:
\begin{itemize}
    \item number of data loader workers: $4$
    \item batch size: $1024$
    \item learning rate: $5 \times 10^{-3}$
    \item regularization strength $\lambda$: $0.005$
    \item AdamW hyper-parameter $(\beta_1, \beta_2)$: $(0.9, 0.98)$
    \item warm-up steps: $10$
\end{itemize}

For one-layer Transformer training, we have the following details:
\begin{itemize}
    \item number of data loader workers: $4$
    \item batch size: $1024$
    \item learning rate: $1 \times 10^{-3}$
    \item regularization strength $\lambda$: $0.001$
    \item AdamW hyper-parameter $(\beta_1, \beta_2)$: $(0.9, 0.98)$
    \item warm-up steps: $10$
\end{itemize}

All results we ran 3 times with different random seeds. In Figure~\ref{fig:grok}, we reported the mean and variance range.

{\bf Experiments Compute Resources.} 
All experiments is conducted on single A100 40G NVIDIA GPU. All experiments can be finished in at most three days.  

\subsection{One-hidden Layer Neural Network}\label{app:sec:exp_nn}
In Figure~\ref{fig:nn_w_k3} and Figure~\ref{fig:nn_freq_k3}, we use SGD to train a two-layer network with $m=1536=2^{2k-2} \cdot (p-1)$ neurons, i.e., Eq.~\eqref{eq:nn}, on $k=3$-sum mod-$p=97$ addition dataset, i.e., Eq.~\eqref{eq:data}. In Figure~\ref{fig:nn_w_k5} and Figure~\ref{fig:nn_freq_k5}, we use SGD to train a two-layer network with $m=5632=2^{2k-2} \cdot (p-1)$ neurons, i.e., Eq.~\eqref{eq:nn}, on $k=5$-sum mod-$p=23$ addition dataset, i.e., Eq.~\eqref{eq:data}. 

Figure~\ref{fig:nn_w_k3} and Figure~\ref{fig:nn_w_k5} show that the networks trained with stochastic gradient descent have single-frequency hidden neurons, which support our analysis in Lemma~\ref{lem:margin_soln-k:informal}. Furthermore, Figure~\ref{fig:nn_freq_k3} and Figure~\ref{fig:nn_freq_k5} demonstrate that the network will learn all frequencies in the Fourier spectrum which is consistent with our analysis in Lemma~\ref{lem:construct-k:informal}. Together, they verify our main results in Theorem~\ref{thm:main_k:informal} and show that the network trained by SGD prefers to learn Fourier-based circuits.

\subsection{One-layer Transformer}\label{app:sec:exp_transformer}

In Figure~\ref{fig:s_k3} , we train a one-layer transformer with $m=160$ heads attention, on $k=3$-sum mod-$p=61$ addition dataset, i.e., Eq.~\eqref{eq:data}. In Figure~\ref{fig:s_k5} , we train a one-layer transformer with $m=160$ heads attention, on $k=5$-sum mod-$p=17$ addition dataset, i.e., Eq.~\eqref{eq:data}. 

Figure~\ref{fig:s_k3} and Figure~\ref{fig:s_k5} show that the one-layer transformer trained with stochastic gradient descent learns 2-dim cosine shape attention matrices, which is similar to one-hidden layer neural networks in Figure~\ref{fig:nn_w_k3} and Figure~\ref{fig:nn_w_k5}. This means that the attention layer has a similar learning mechanism to neural networks in the modular arithmetic task, where it prefers to learn Fourier-based circuits when trained by SGD. 

\begin{figure}
    \centering
\includegraphics[width=0.8\linewidth]{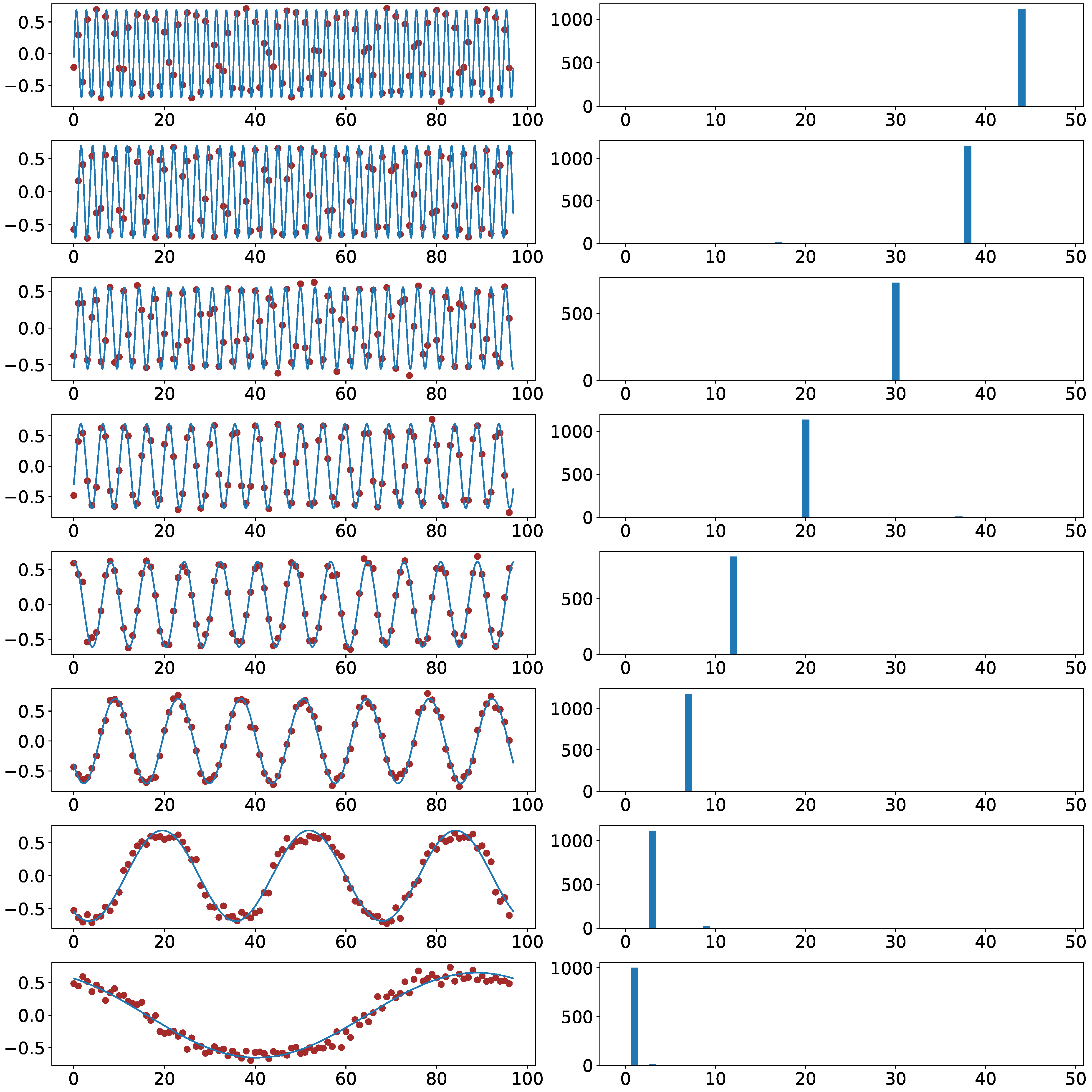}
    \caption{Cosine shape of the trained embeddings (hidden layer weights) and corresponding power of  Fourier spectrum. The two-layer network with $m=1536$ neurons is trained on $k=3$-sum mod-$p=97$ addition dataset. We even split the whole datasets ($p^k = 97^3$ data points) into the training and test datasets. Every row represents a random neuron from the network. The left figure shows the final trained embeddings, with red dots indicating the true weight values, and the pale blue interpolation is achieved by identifying the function that shares the same Fourier spectrum. The right figure shows their Fourier power spectrum. The results in these figures are consistent with our analysis statements in Lemma~\ref{lem:margin_soln-k:informal}.}
    \label{fig:nn_w_k3}
\end{figure}

\begin{figure*}
    \centering
\includegraphics[width=0.50\linewidth]{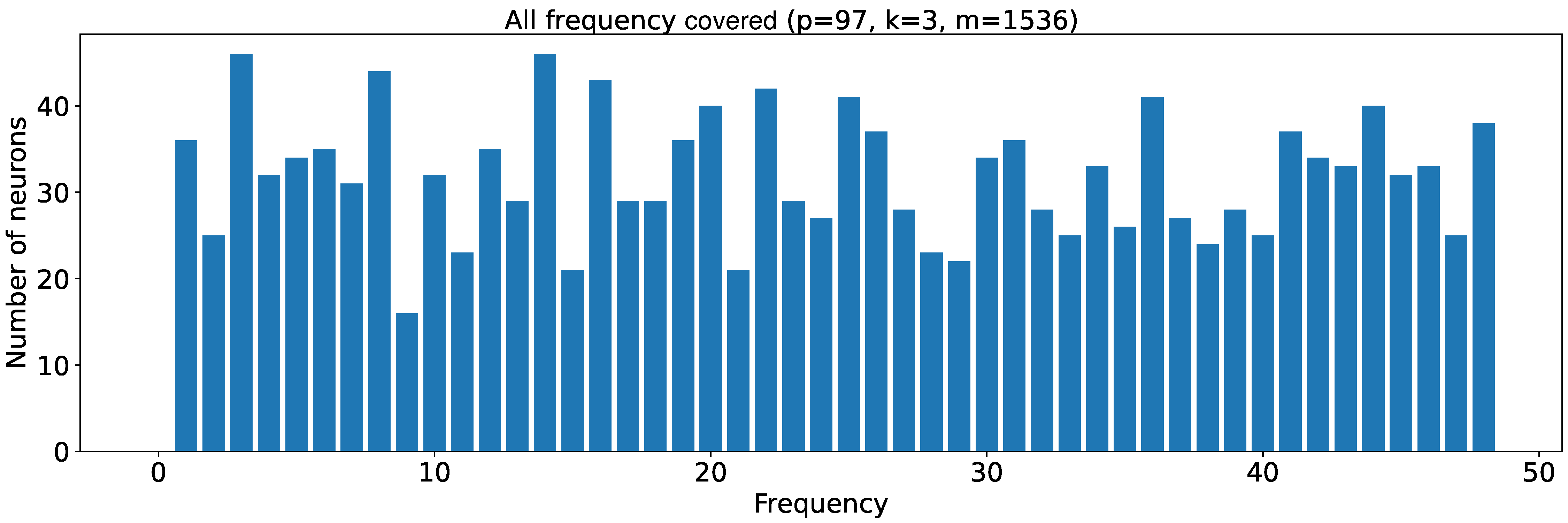}
\includegraphics[width=0.24\linewidth]{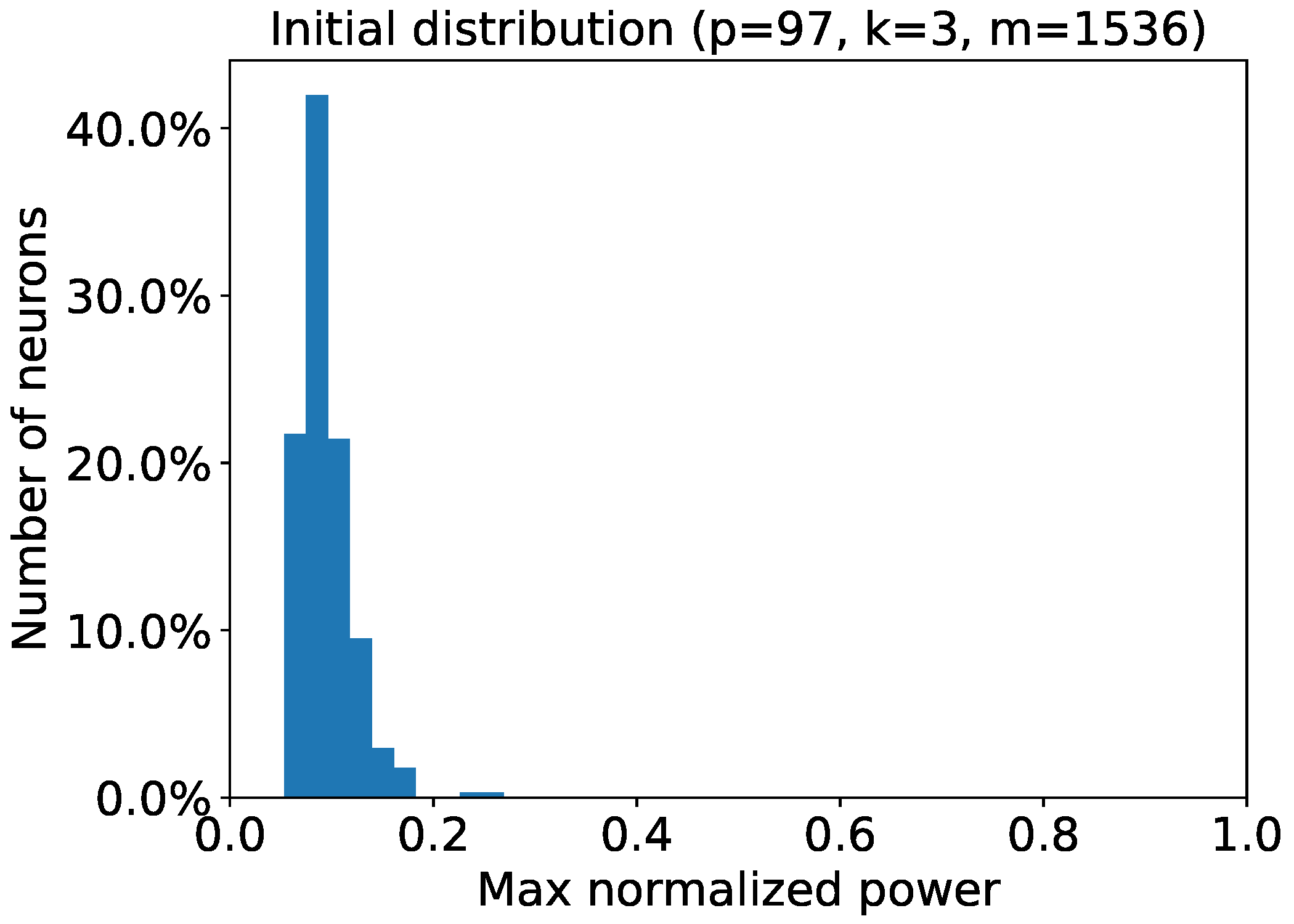}
\includegraphics[width=0.24\linewidth]{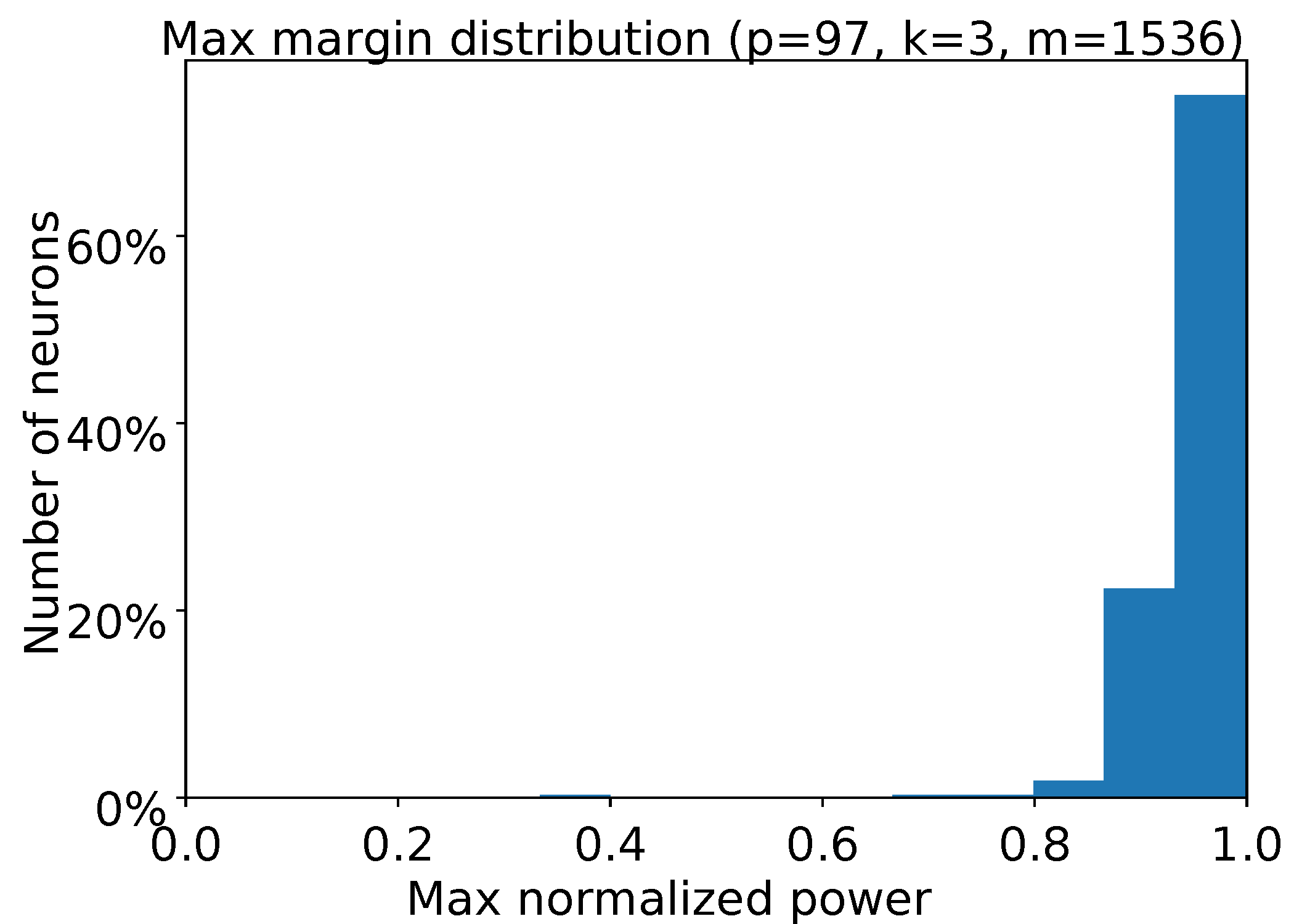}
    \caption{All Fourier spectrum frequencies being covered and the maximum normalized power of the embeddings (hidden layer weights). The one-hidden layer network with $m=1536$ neurons is trained on $k=3$-sum mod-$p=97$ addition dataset. We denote $\hat{u}[i]$ as the Fourier transform of $u[i]$. Let $\max_i |\hat{u}[i]|^2 /( \sum|\hat{u}[j]|^2 )$ be the maximum normalized power. 
    Mapping each neuron to its maximum normalized power frequency, (a) shows the final frequency distribution of the embeddings. 
    Similar to our construction analysis in Lemma~\ref{lem:construct-k:informal}, we have an almost uniform distribution over all frequencies. 
    (b) shows the maximum normalized power of the neural network with random initialization. (c) shows, in frequency space, the embeddings of the final trained network are one-sparse, i.e., maximum normalized power being almost 1 for all neurons. This is consistent with our maximum-margin analysis results in Lemma~\ref{lem:construct-k:informal}.}
    \label{fig:nn_freq_k3}
\end{figure*}

\begin{figure}
    \centering
\includegraphics[width=0.8\linewidth]{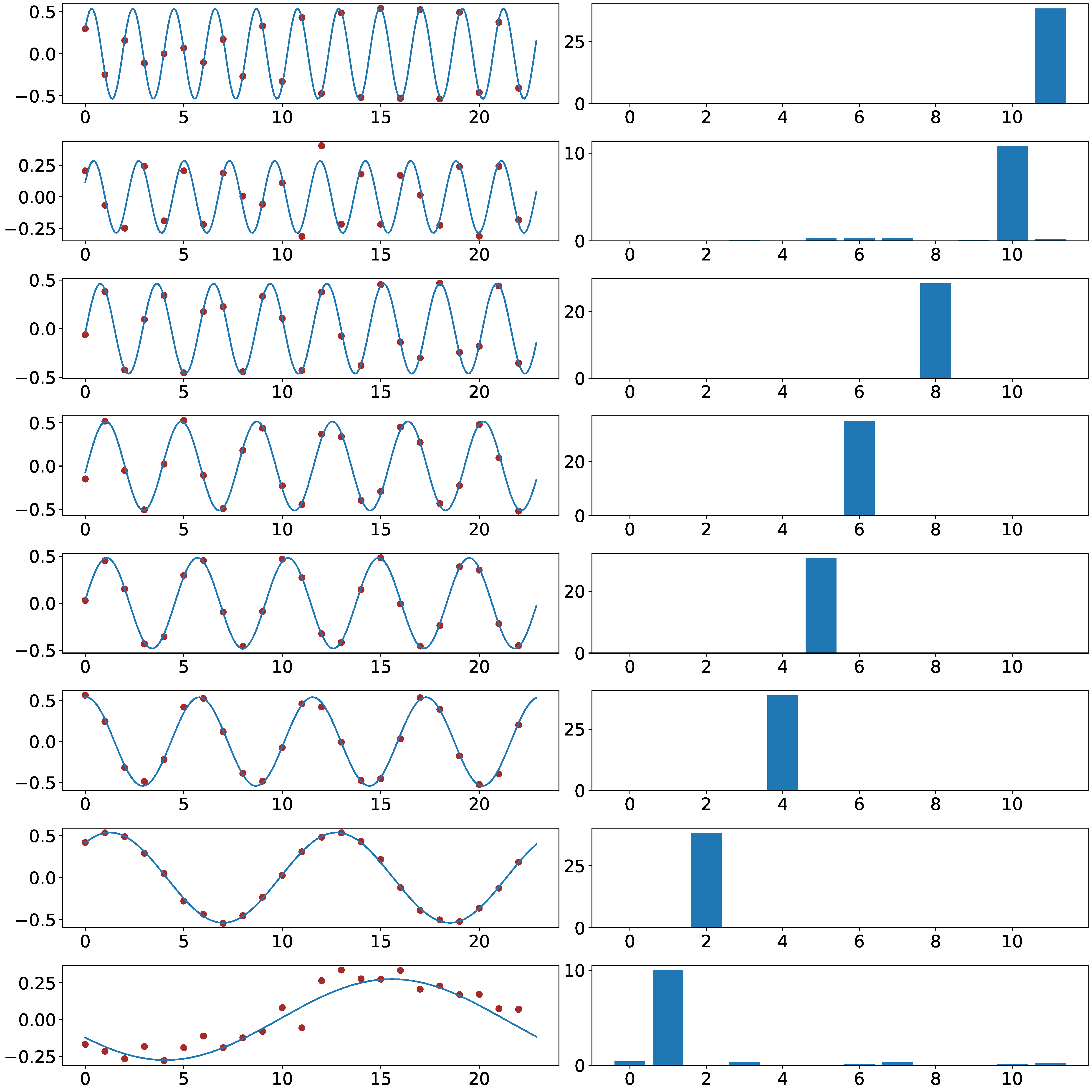}
    \caption{Cosine shape of the trained embeddings (hidden layer weights) and corresponding power of  Fourier spectrum. The two-layer network with $m=5632$ neurons is trained on $k=5$-sum mod-$p=23$ addition dataset. We even split the whole datasets ($p^k = 23^5$ data points) into the training and test datasets. Every row represents a random neuron from the network. The left figure shows the final trained embeddings, with red dots indicating the true weight values, and the pale blue interpolation is achieved by identifying the function that shares the same Fourier spectrum. The right figure shows their Fourier power spectrum. The results in these figures are consistent with our analysis statements in Lemma~\ref{lem:margin_soln-k:informal}.}
    \label{fig:nn_w_k5}
\end{figure}

\begin{figure*}
    \centering
\includegraphics[width=0.50\linewidth]{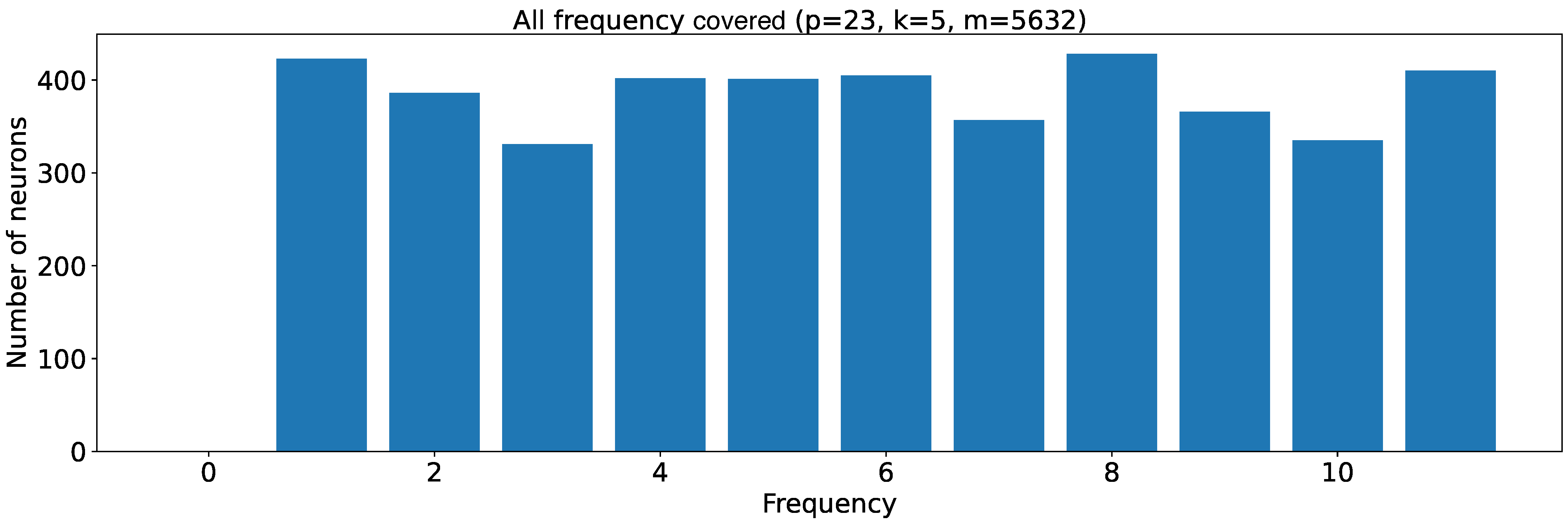}
\includegraphics[width=0.24\linewidth]{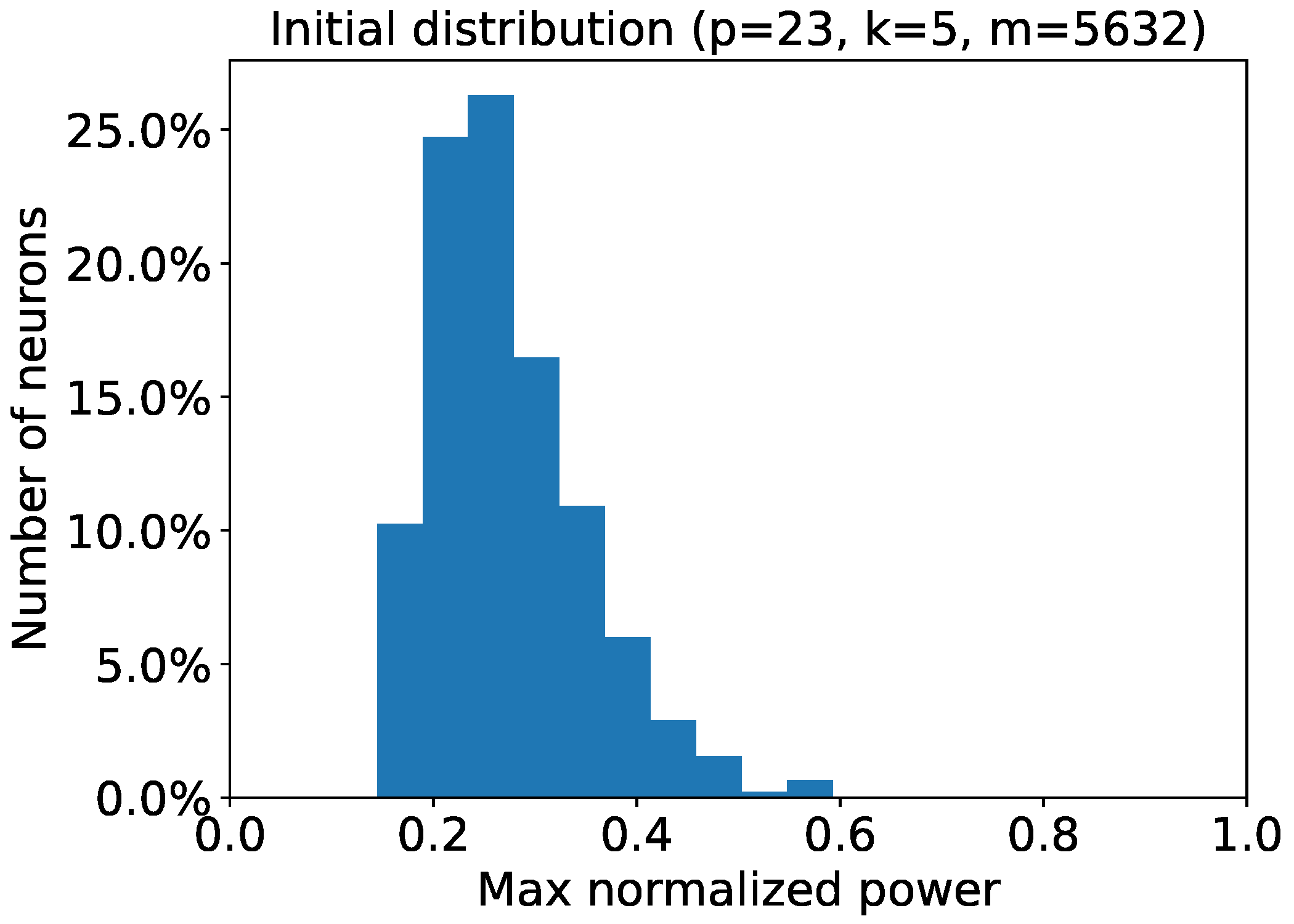}
\includegraphics[width=0.24\linewidth]{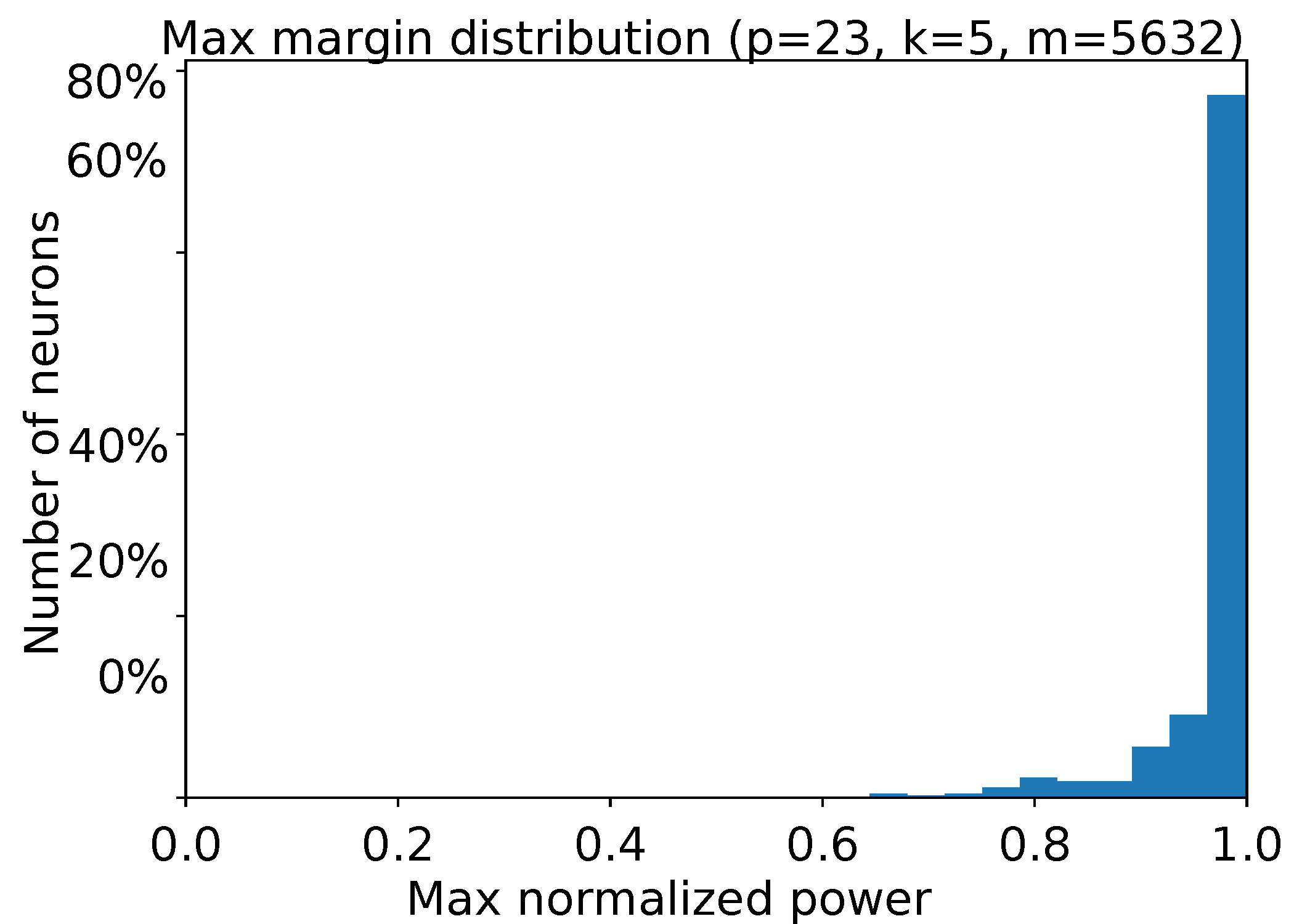}
    \caption{All Fourier spectrum frequencies being covered and the maximum normalized power of the embeddings (hidden layer weights). The one-hidden layer network with $m=5632$ neurons is trained on $k=5$-sum mod-$p=23$ addition dataset. We denote $\hat{u}[i]$ as the Fourier transform of $u[i]$. Let $\max_i |\hat{u}[i]|^2 /( \sum|\hat{u}[j]|^2 )$ be the maximum normalized power. 
    Mapping each neuron to its maximum normalized power frequency, (a) shows the final frequency distribution of the embeddings. 
    Similar to our construction analysis in Lemma~\ref{lem:construct-k:informal}, we have an almost uniform distribution over all frequencies. 
    (b) shows the maximum normalized power of the neural network with random initialization. (c) shows, in frequency space, the embeddings of the final trained network are one-sparse, i.e., maximum normalized power being almost 1 for all neurons. This is consistent with our maximum-margin analysis results in Lemma~\ref{lem:construct-k:informal}.}
    \label{fig:nn_freq_k5}
\end{figure*}

\begin{figure}
    \centering
\includegraphics[width=0.38\linewidth]{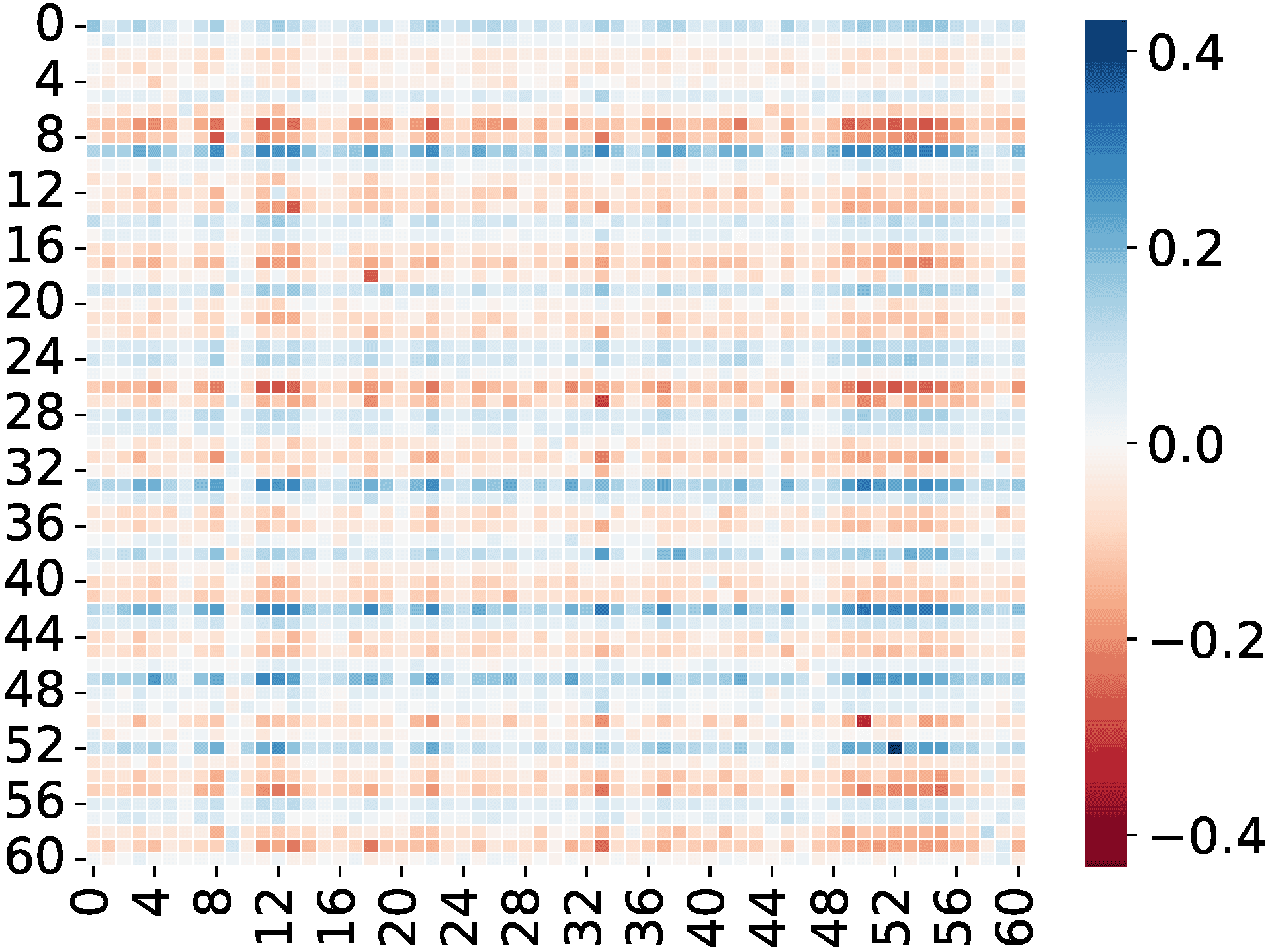}
\includegraphics[width=0.38\linewidth]{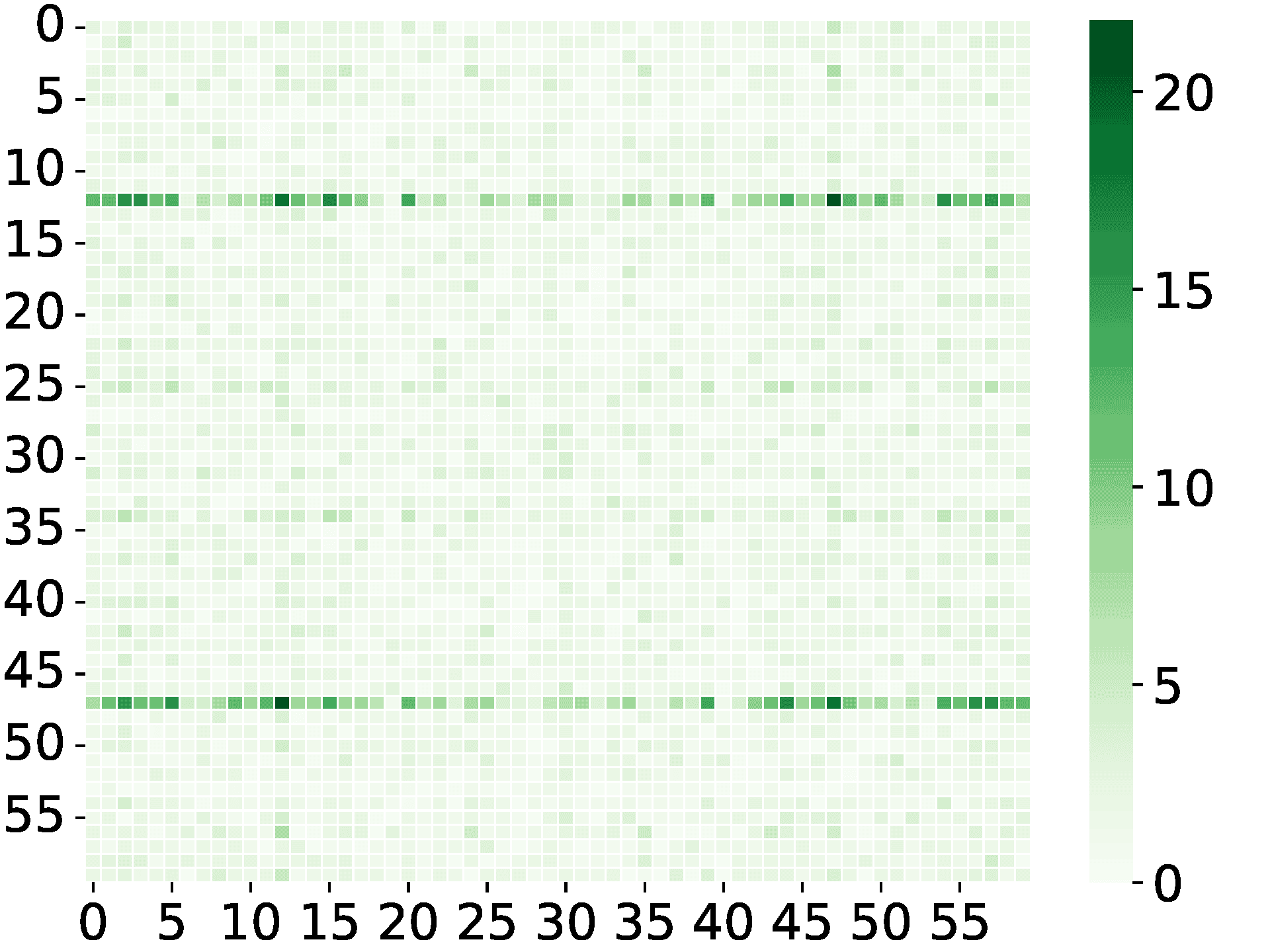}
\includegraphics[width=0.38\linewidth]{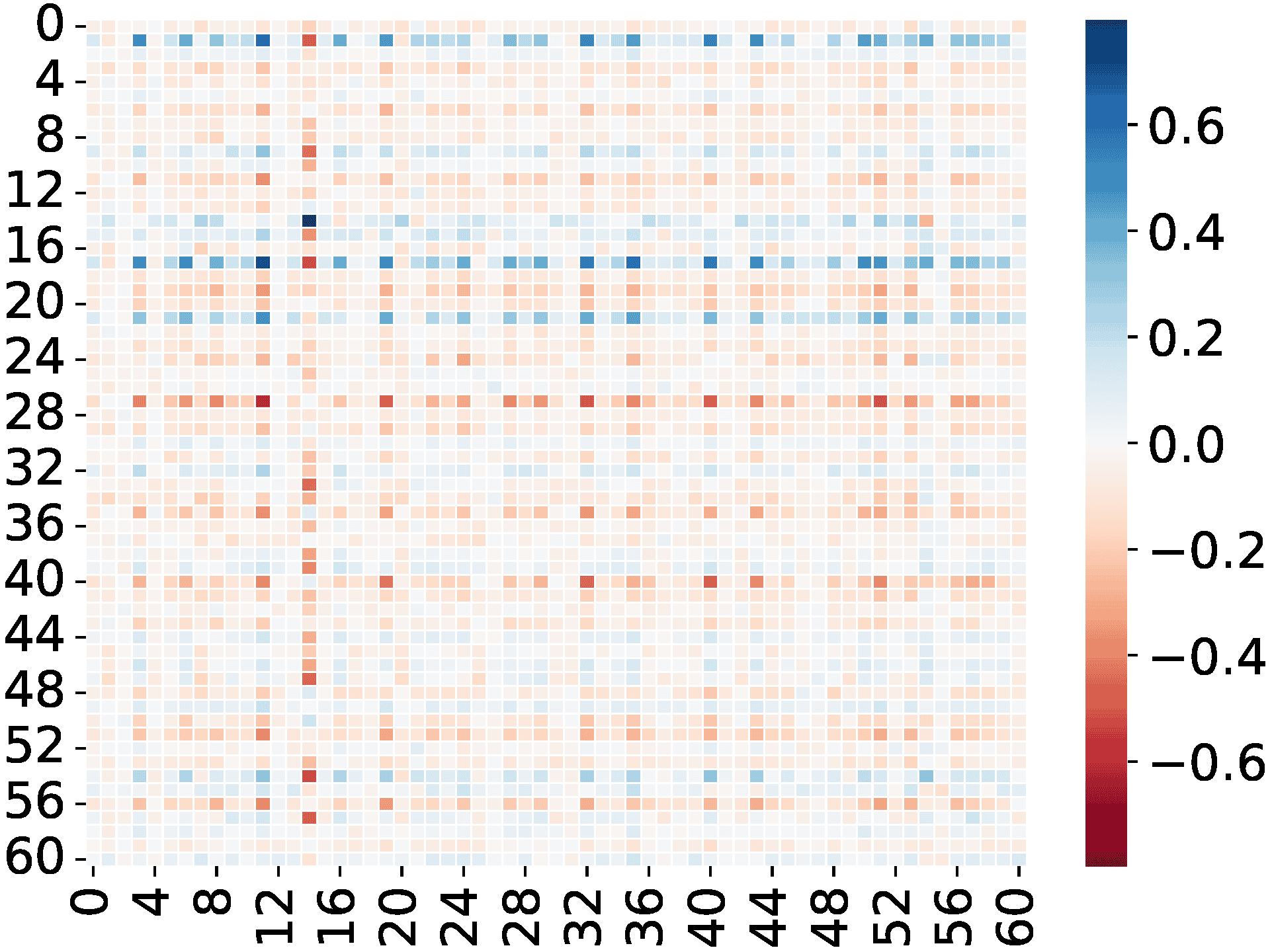}
\includegraphics[width=0.38\linewidth]{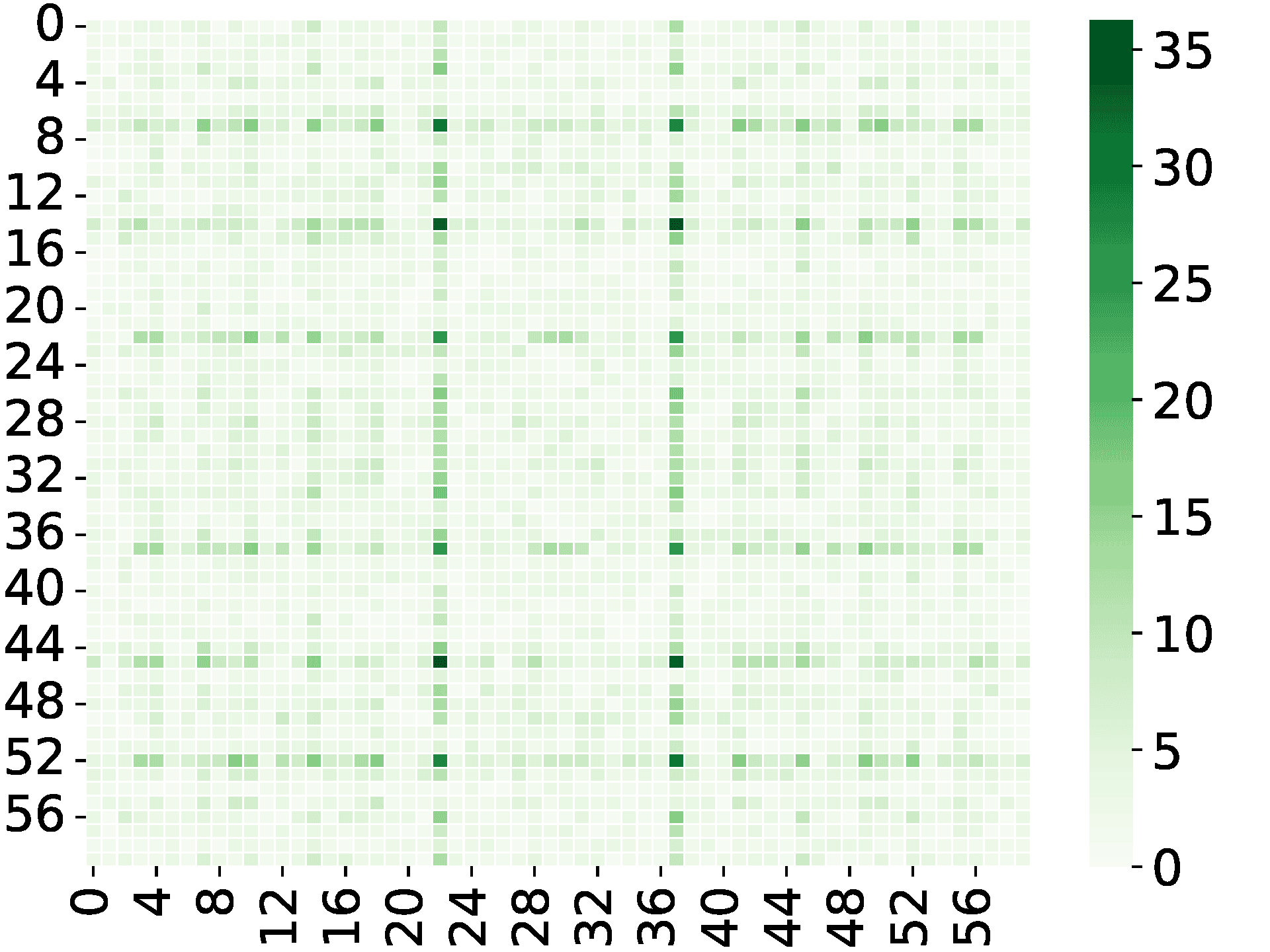}
\includegraphics[width=0.38\linewidth]{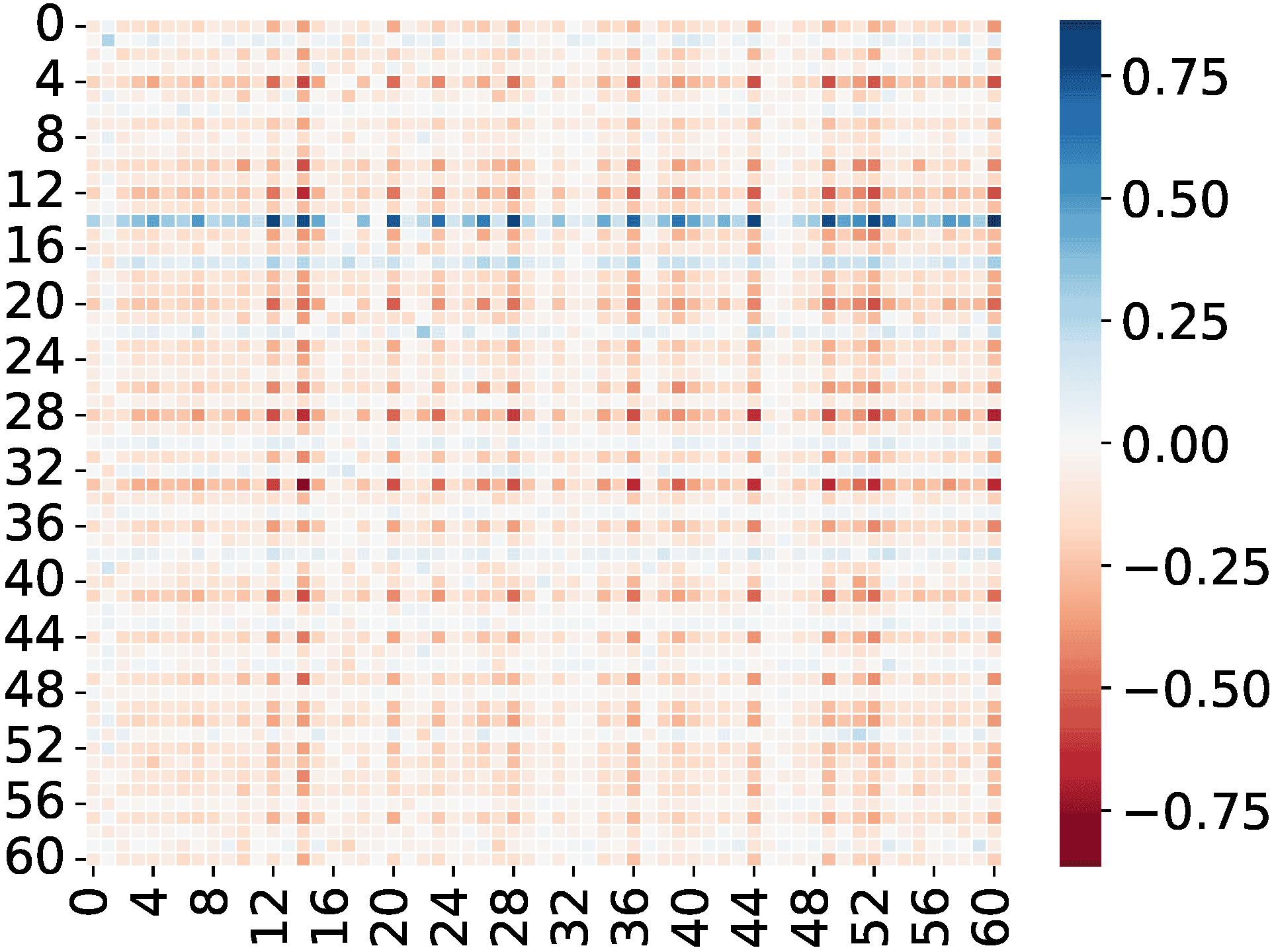}
\includegraphics[width=0.38\linewidth]{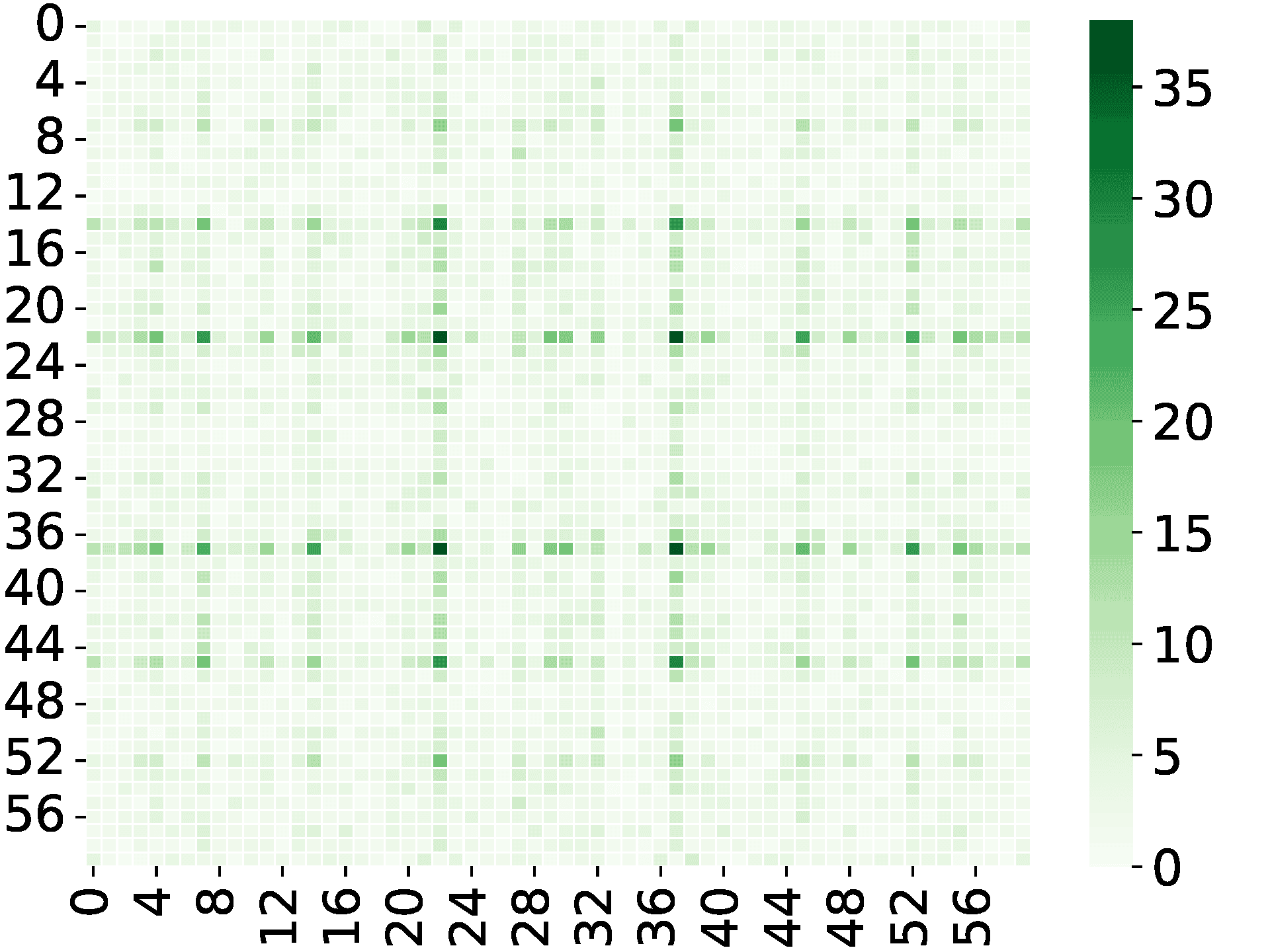}
\includegraphics[width=0.38\linewidth]{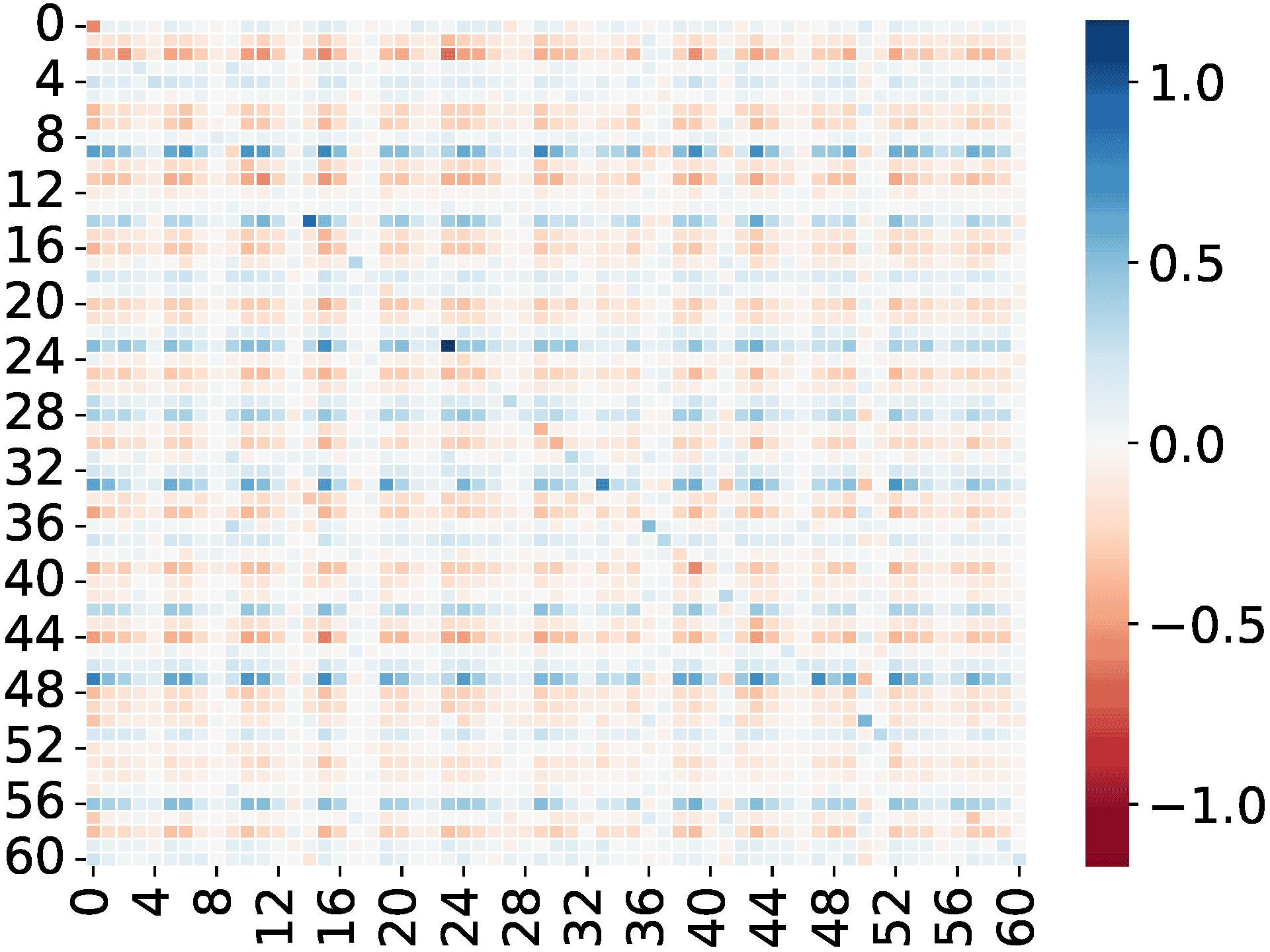}
\includegraphics[width=0.38\linewidth]{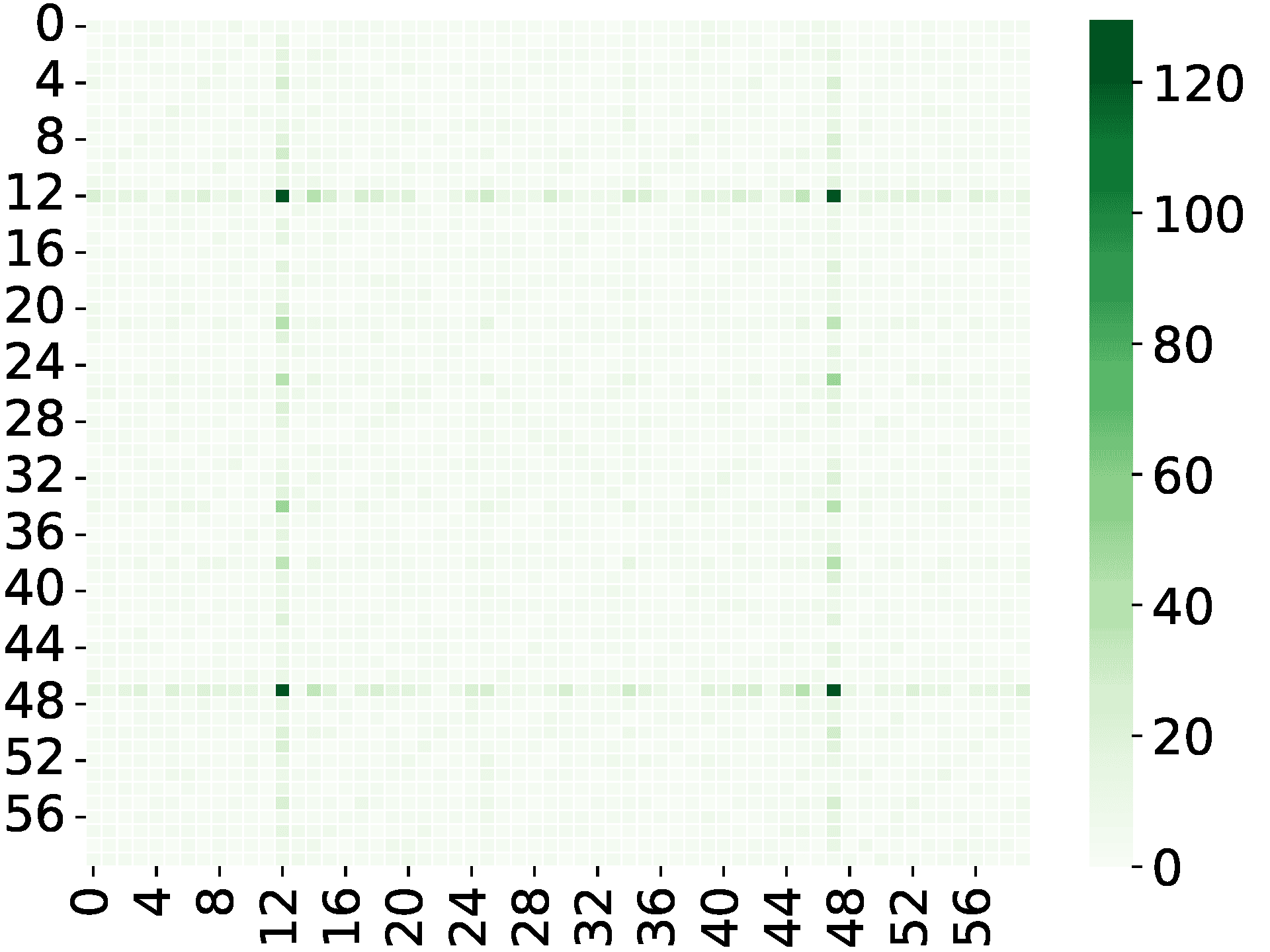}
    \caption{2-dimension cosine shape of the trained $W^{KQ}$ (attention weights) and their Fourier power spectrum. The one-layer transformer with attention heads $m=160$ is trained on $k=3$-sum mod-$p=61$ addition dataset. We even split the whole datasets ($p^k = 61^3$ data points) into training and test datasets. Every row represents a random attention head from the transformer. The left figure shows the final trained attention weights being an apparent 2-dim cosine shape. The right figure shows their 2-dim Fourier power spectrum. The results in these figures are consistent with Figure~\ref{fig:nn_w_k3}.}
    \label{fig:s_k3}
\end{figure}

\begin{figure}
    \centering
\includegraphics[width=0.38\linewidth]{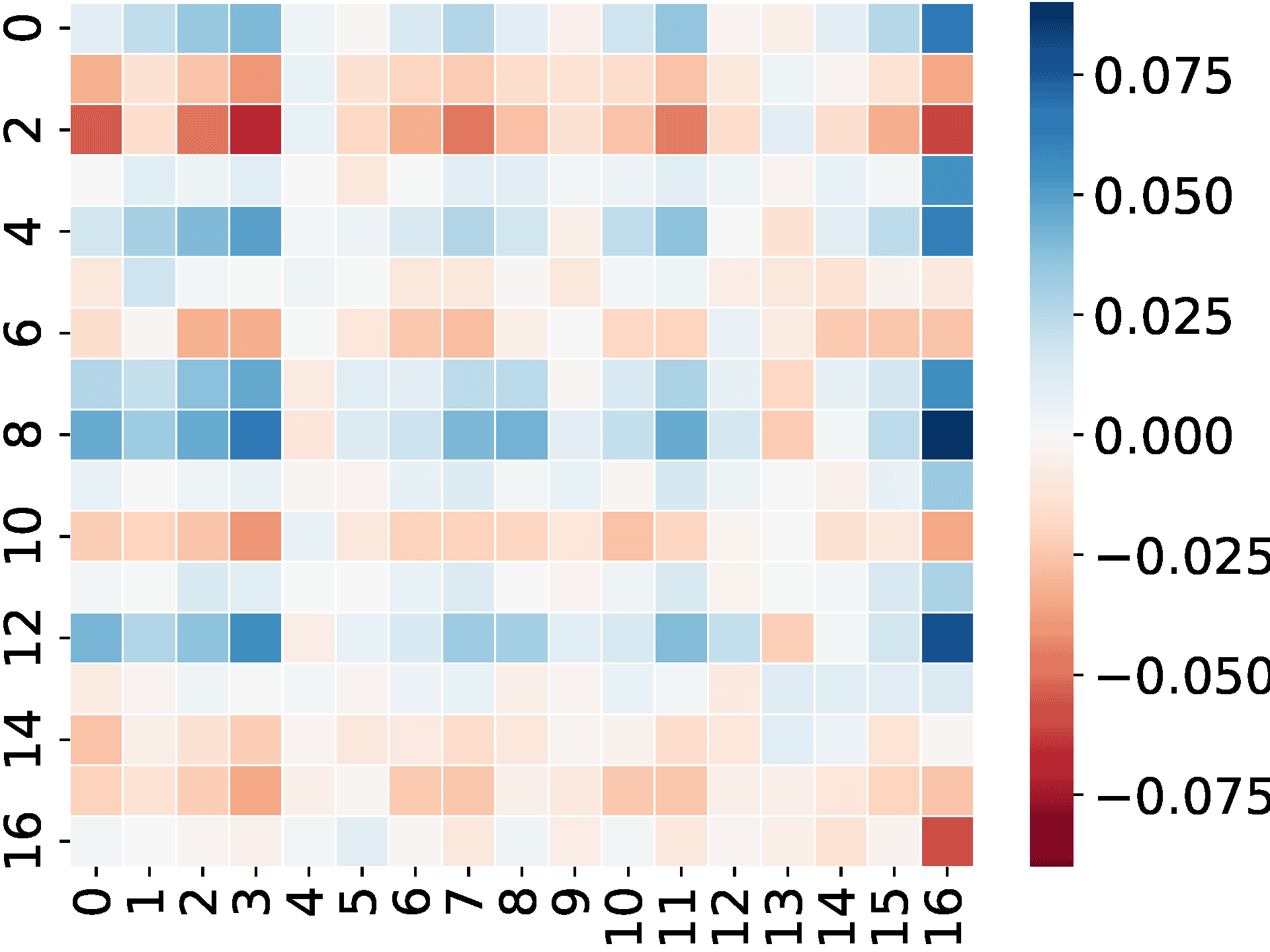}
\includegraphics[width=0.38\linewidth]{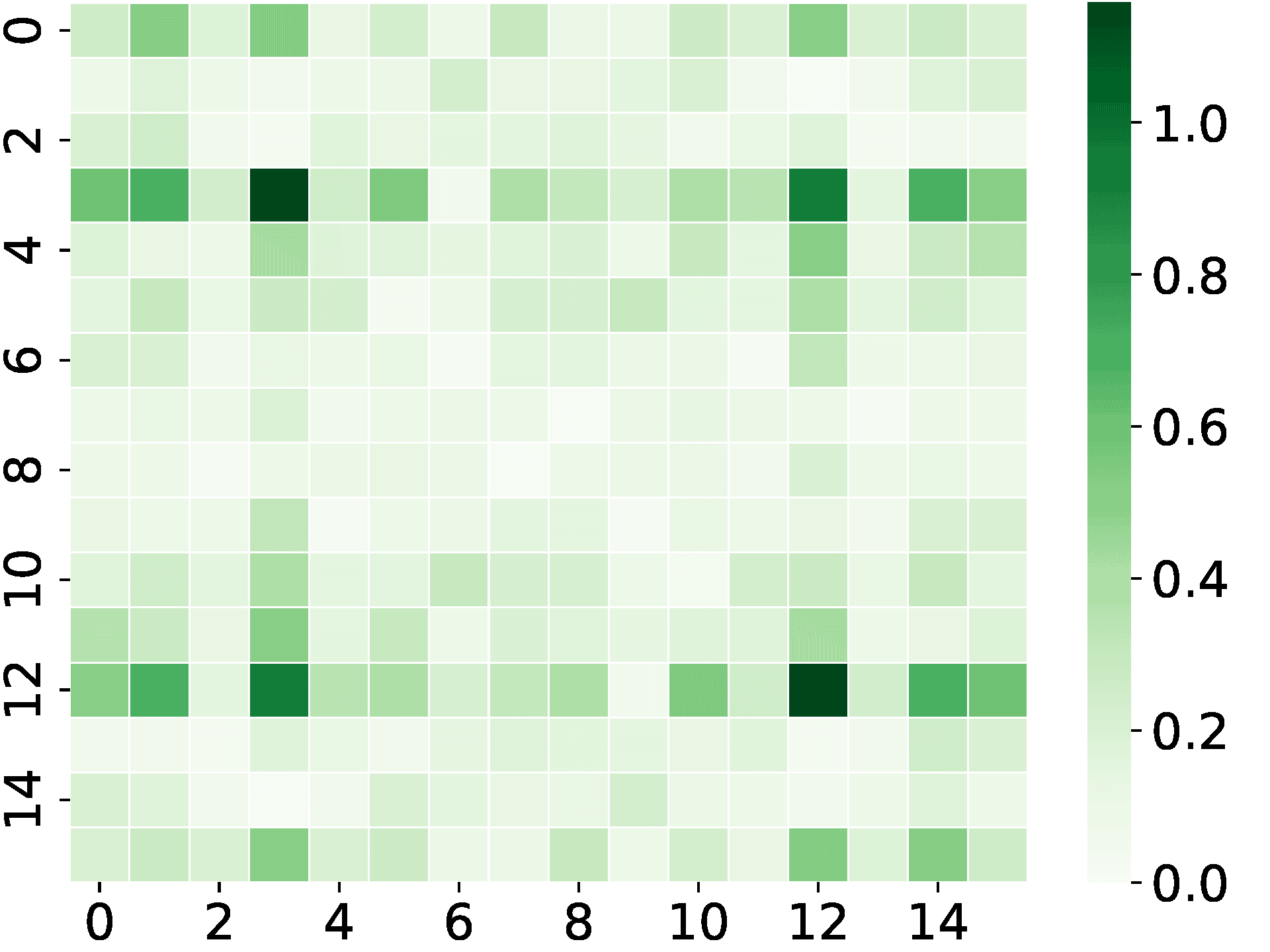}
\includegraphics[width=0.38\linewidth]{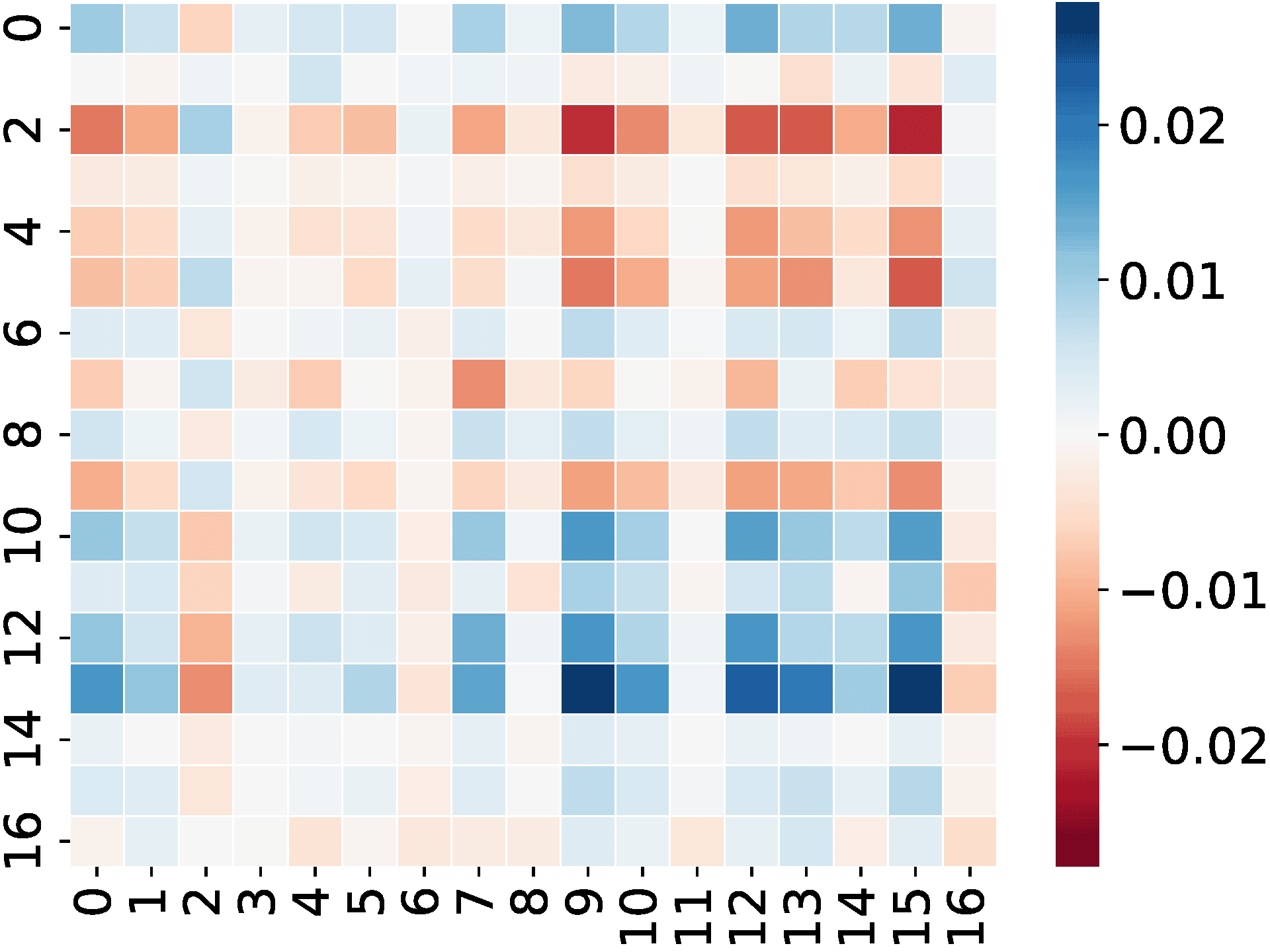}
\includegraphics[width=0.38\linewidth]{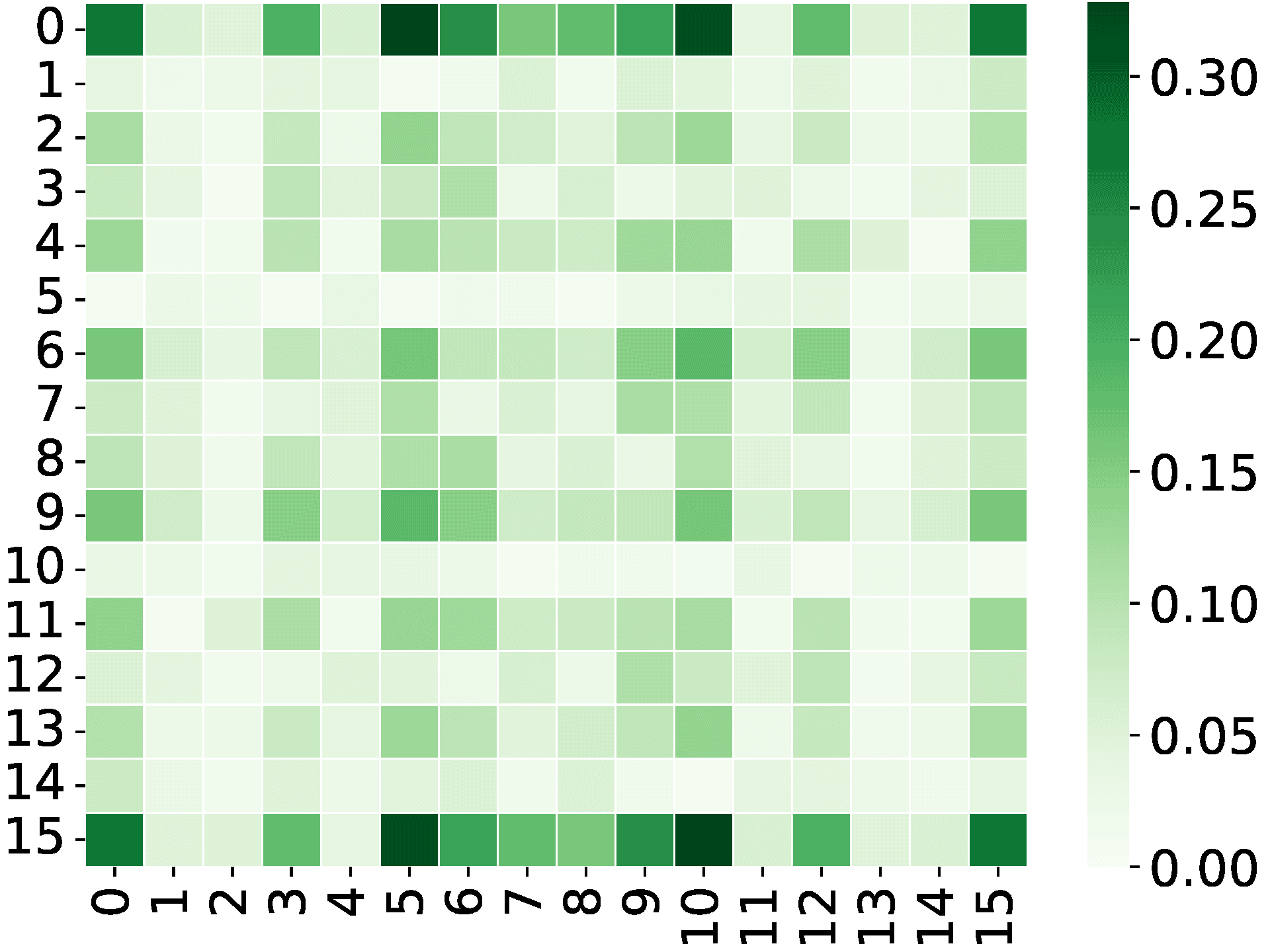}
\includegraphics[width=0.38\linewidth]{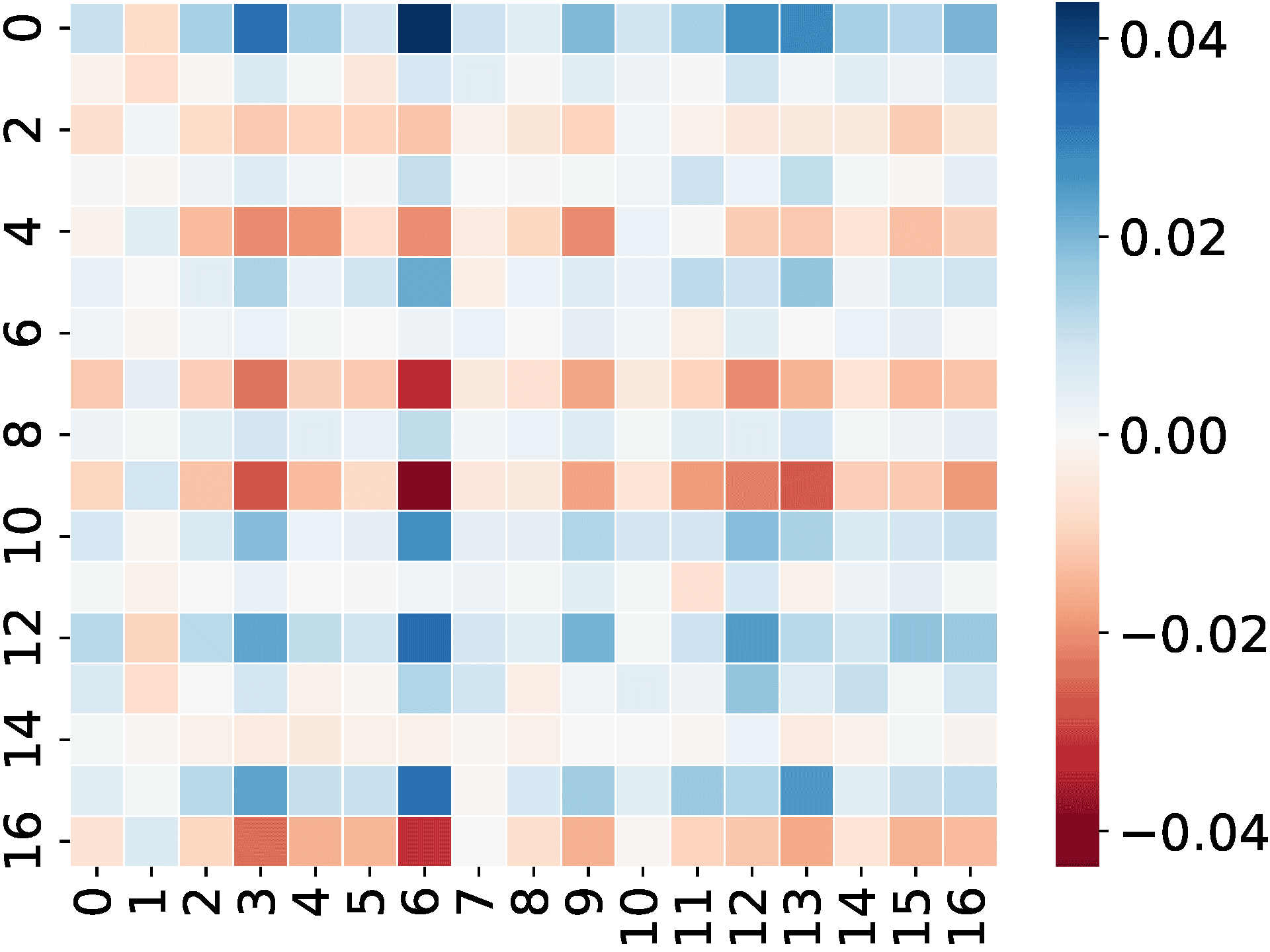}
\includegraphics[width=0.38\linewidth]{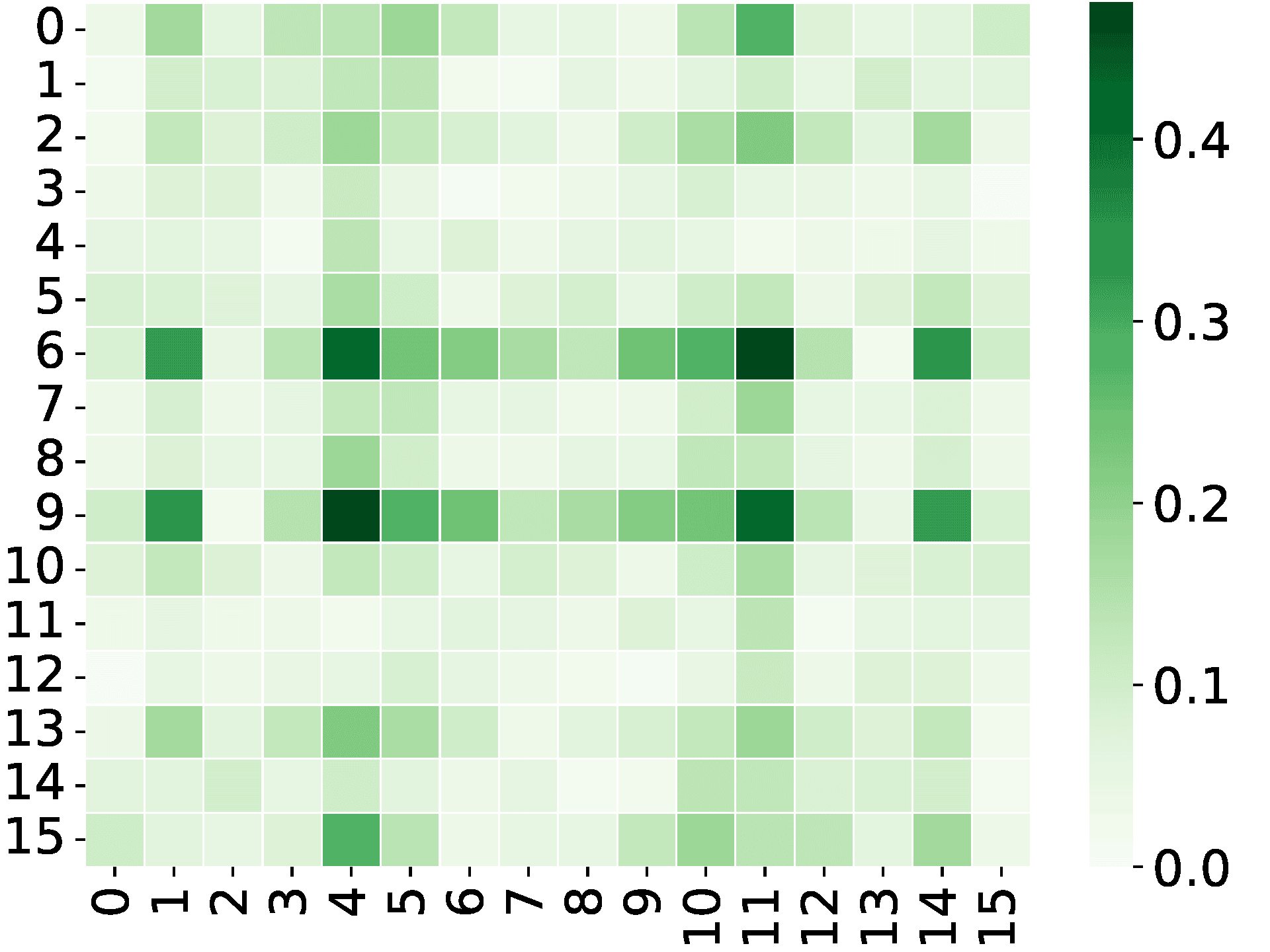}
\includegraphics[width=0.38\linewidth]{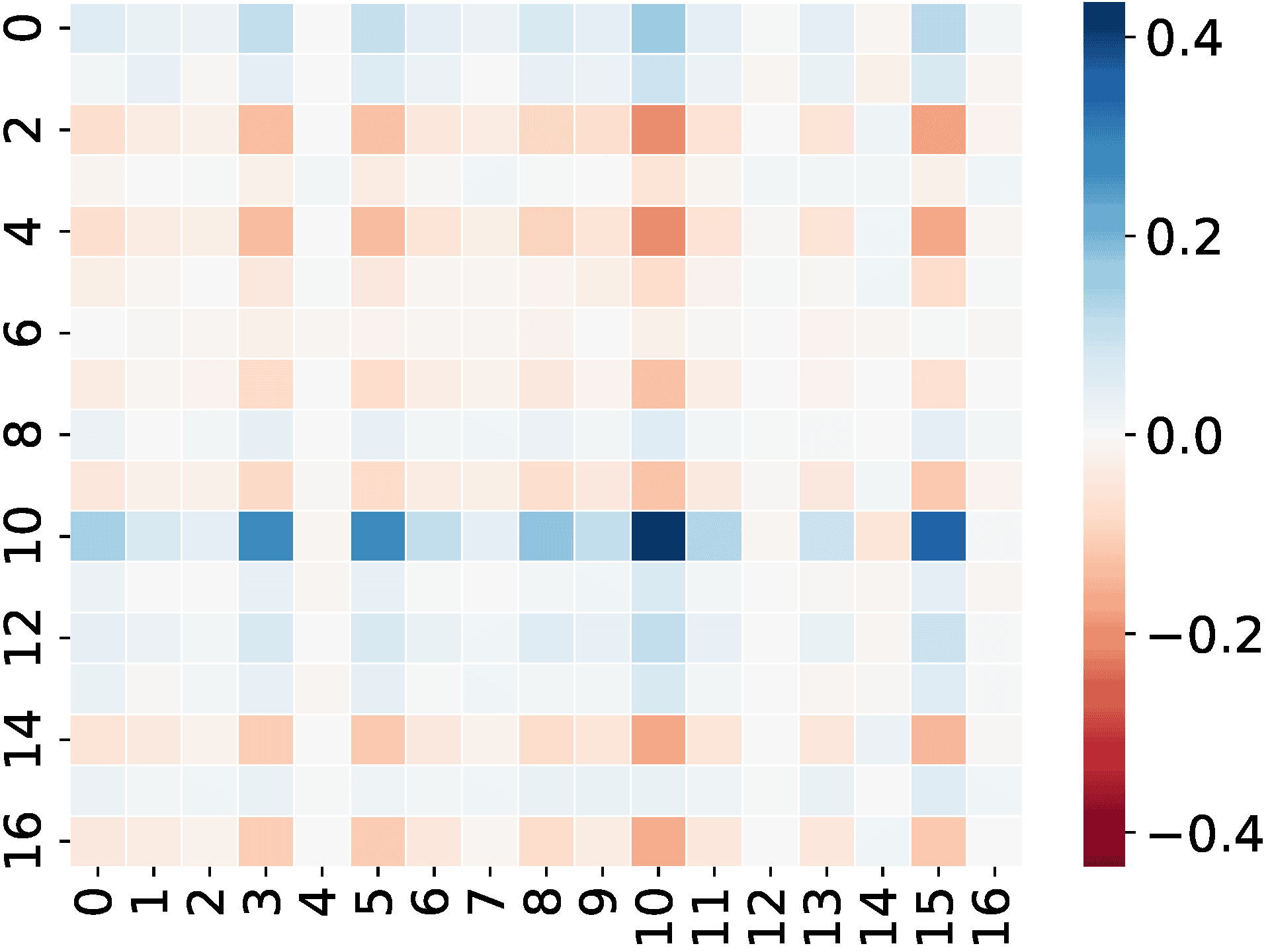}
\includegraphics[width=0.38\linewidth]{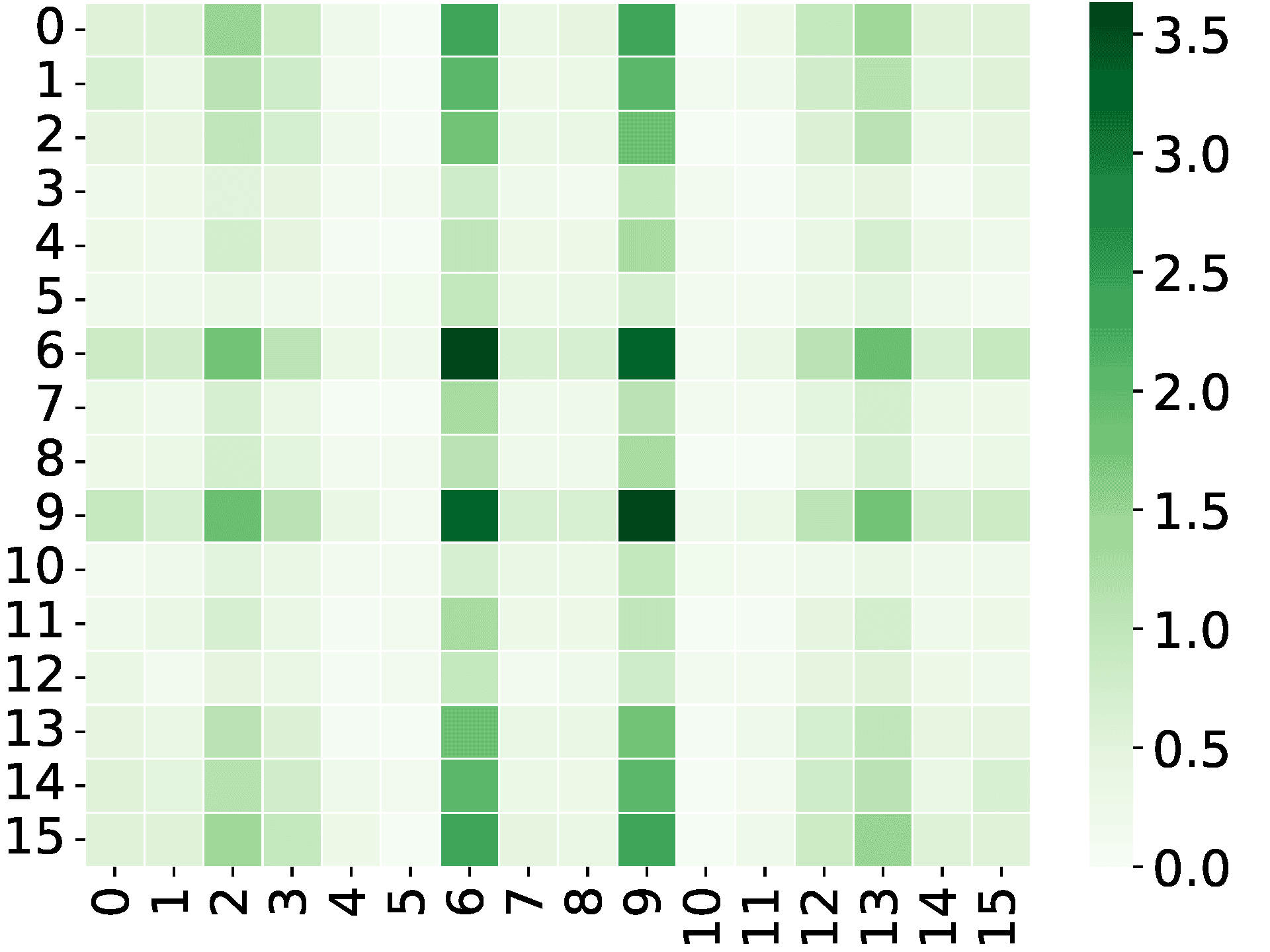}
    \caption{2-dimension cosine shape of the trained $W^{KQ}$ (attention weights) and their Fourier power spectrum. The one-layer transformer with attention heads $m=160$ is trained on $k=5$-sum mod-$p=17$ addition dataset. We even split the whole datasets ($p^k = 17^5$ data points) into training and test datasets. Every row represents a random attention head from the transformer. The left figure shows the final trained attention weights being an apparent 2-dim cosine shape. The right figure shows their 2-dim Fourier power spectrum. The results in these figures are consistent with Figure~\ref{fig:nn_w_k5}. 
    }
    \label{fig:s_k5}
\end{figure}




\end{document}